%%%%%%%%%%%%%%%%%%%%%%%%%%%%%%%%%%%%%%%%%%%%%%%%%%%%%%%%%%%%%%%%%%%%%%%%

%%% LaTeX Template for AAMAS-2024 (based on sample-sigconf.tex)
%%% Prepared by the AAMAS-2024 Program Chairs based on the version from AAMAS-2023. 

%%%%%%%%%%%%%%%%%%%%%%%%%%%%%%%%%%%%%%%%%%%%%%%%%%%%%%%%%%%%%%%%%%%%%%%%

%%% Start your document with the \documentclass command.

%%% == IMPORTANT ==
%%% Use the first variant below for the final paper (including auithor information).
%%% Use the second variant below to anonymize your submission (no authoir information shown).
%%% For further information on anonymity and double-blind reviewing, 
%%% please consult the call for paper information
%%% https://www.aamas2024-conference.auckland.ac.nz/calls/submission-instruction/

\documentclass[sigconf]{aamas} 
%\documentclass[sigconf,anonymous]{aamas} 

%%% Load required packages here (note that many are included already).

\usepackage{algorithmic}
\usepackage{algorithm}
\usepackage{subfigure}
\usepackage{balance} % for balancing columns on the final page
\usepackage{multicol}

\newtheorem{assumption}{Assumption}
%%%%%%%%%%%%%%%%%%%%%%%%%%%%%%%%%%%%%%%%%%%%%%%%%%%%%%%%%%%%%%%%%%%%%%%%

%%% AAMAS-2024 copyright block (do not change!)

\setcopyright{ifaamas}
\acmConference[AAMAS '24]{Proc.\@ of the 23rd International Conference
on Autonomous Agents and Multiagent Systems (AAMAS 2024)}{May 6 -- 10, 2024}
{Auckland, New Zealand}{N.~Alechina, V.~Dignum, M.~Dastani, J.S.~Sichman (eds.)}
\copyrightyear{2024}
\acmYear{2024}
\acmDOI{}
\acmPrice{}
\acmISBN{}

%%%%%%%%%%%%%%%%%%%%%%%%%%%%%%%%%%%%%%%%%%%%%%%%%%%%%%%%%%%%%%%%%%%%%%%%

%%% == IMPORTANT ==
%%% Use this command to specify your EasyChair submission number.
%%% In anonymous mode, it will be printed on the first page.

\acmSubmissionID{650}

%%% Use this command to specify the title of your paper.

\title[AAMAS-2024 Formatting Instructions]{Learning to Schedule Online Tasks with Bandit Feedback}

%%% Provide names, affiliations, and email addresses for all authors.

\author{Yongxin Xu}
\affiliation{
  \institution{Shanghaitech University}
  \city{Shanghai}
  \country{China}}
\email{xuyx2022@shanghaitech.edu.cn}

\author{Shangshang Wang}
\affiliation{
  \institution{ShanghaiTech University}
  \city{Shanghai}
  \country{China}}
\email{wangshsh2@shanghaitech.edu.cn}

\author{Hengquan Guo}
\affiliation{
  \institution{ShanghaiTech University}
  \city{Shanghai}
  \country{China}}
\email{guohq@shanghaitech.edu.cn}

\author{Xin	Liu}
\affiliation{
  \institution{ShanghaiTech University}
  \city{Shanghai}
  \country{China}}
\email{liuxin7@shanghaitech.edu.cn}

\author{Ziyu Shao}
\affiliation{
  \institution{ShanghaiTech University}
  \city{Shanghai}
  \country{China}}
\email{shaozy@shanghaitech.edu.cn}

%%% Use this environment to specify a short abstract for your paper.

\begin{abstract}
Online task scheduling serves an integral role for task-intensive applications in cloud computing and crowdsourcing.
Optimal scheduling can enhance system performance, typically measured by the reward-to-cost ratio, under some task arrival distribution.
On one hand, both reward and cost are dependent on task context (e.g., evaluation metric) and remain black-box in practice.
These render reward and cost hard to model thus unknown before decision making.
On the other hand, task arrival behaviors remain sensitive to factors like unpredictable system fluctuation whereby a prior estimation or the conventional assumption of arrival distribution (e.g., Poisson) may fail.
This implies another practical yet often neglected challenge, i.e., uncertain task arrival distribution.
Towards effective scheduling under a stationary environment with various uncertainties, we propose a double-optimistic learning based Robbins-Monro (DOL-RM) algorithm.
Specifically, DOL-RM integrates a learning module that incorporates optimistic estimation for reward-to-cost ratio and a decision module that utilizes the Robbins-Monro method to implicitly learn task arrival distribution while making scheduling decisions.
Theoretically, DOL-RM achieves 
% fast learning with a $O(T^{-1/4})$ convergence gap and no regret learning with 
a sub-linear regret of $O(T^{3/4})$, which is the first result for online task scheduling under uncertain task arrival distribution and unknown reward and cost.
Our numerical results in a synthetic experiment and a real-world application demonstrate the effectiveness of DOL-RM in achieving the best cumulative reward-to-cost ratio compared with other state-of-the-art baselines.
\end{abstract}

%%% The code below was generated by the tool at http://dl.acm.org/ccs.cfm.
%%% Please replace this example with code appropriate for your own paper.

%%% Use this command to specify a few keywords describing your work.
%%% Keywords should be separated by commas.

\keywords{Online scheduling, Double-Optimistic learning, Robbins-Monro method}

%%%%%%%%%%%%%%%%%%%%%%%%%%%%%%%%%%%%%%%%%%%%%%%%%%%%%%%%%%%%%%%%%%%%%%%%

%%% Include any author-defined commands here.
         
\newcommand{\BibTeX}{\rm B\kern-.05em{\sc i\kern-.025em b}\kern-.08em\TeX}

%%%%%%%%%%%%%%%%%%%%%%%%%%%%%%%%%%%%%%%%%%%%%%%%%%%%%%%%%%%%%%%%%%%%%%%%

\makeatletter
\gdef\@copyrightpermission{
	\begin{minipage}{0.3\columnwidth}
		\href{https://creativecommons.org/licenses/by/4.0/}{\includegraphics[width=0.90\textwidth]{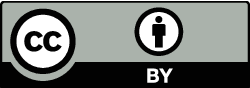}}
	\end{minipage}\hfill
	\begin{minipage}{0.7\columnwidth}
		\href{https://creativecommons.org/licenses/by/4.0/}{This work is licensed under a Creative Commons Attribution International 4.0 License.}
	\end{minipage}
	\vspace{5pt}
}
\makeatother

\begin{document}

%%% The following commands remove the headers in your paper. For final 
%%% papers, these will be inserted during the pagination process.

\pagestyle{fancy}
\fancyhead{}

%%% The next command prints the information defined in the preamble.

\maketitle 

%%%%%%%%%%%%%%%%%%%%%%%%%%%%%%%%%%%%%%%%%%%%%%%%%%%%%%%%%%%%%%%%%%%%%%%%

\section{Introduction} \label{sec:intro}

\begin{figure}[!t]
    \centering
    \includegraphics[width=0.30\textwidth]{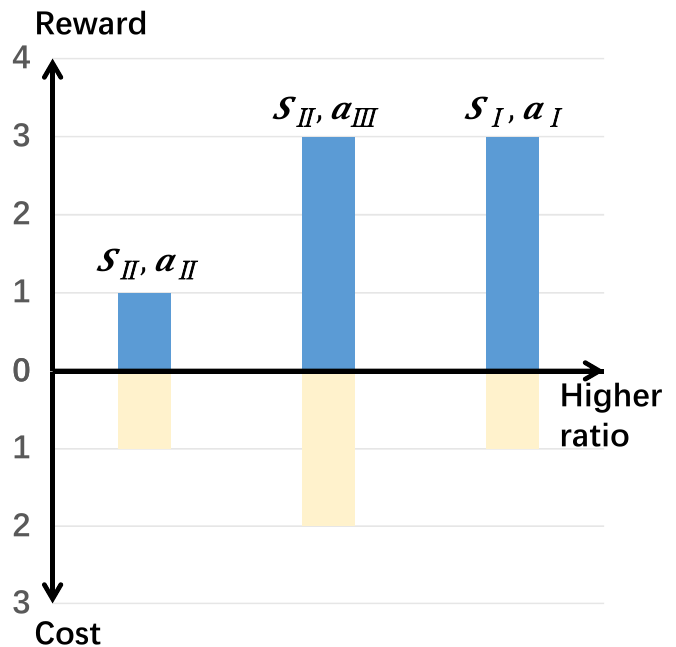}
    % \caption{Compare the ratio of expected total reward and expected total cost under different policies with distribution varies.}
    \vspace{- 0.1 cm}
    \caption{Rewards and Costs for $(S,a)$ in a Toy System.}
    \label{fig:Example}
\end{figure}
\begin{table}[!t]
    \centering
    \caption{Reward-to-Cost Ratio under Distribution $\{0.8,0.2\}$.}
    \begin{tabular}[!t]{ccc}
    \hline
    Algorithm & Policy & Ratio\\ 
    \hline
    The greedy algorithm & $(S_{\uppercase\expandafter{\romannumeral1}}, a_{\uppercase\expandafter{\romannumeral1}})$, $(S_{\uppercase\expandafter{\romannumeral2}}, a_{\uppercase\expandafter{\romannumeral3}})$ & 2.5\\
    The reverse algorithm & $(S_{\uppercase\expandafter{\romannumeral1}}, a_{\uppercase\expandafter{\romannumeral1}})$, $(S_{\uppercase\expandafter{\romannumeral2}}, a_{\uppercase\expandafter{\romannumeral2}})$
    & 2.6\\
    \hline
    \end{tabular}
    \label{tab:toy}
\end{table}

Online task scheduling refers to the process that schedules incoming tasks to available resources in real-time.
It plays a central role in boosting operational efficiency for cloud computing~\cite{DrASal_14, ARADhaVij_19, GuoZhaShe_12}, crowdsourcing~\cite{AfrMay_21, RomHarDan_11, DenCysZhu_15} and multiprocessing~\cite{Mac_96, SacVikGau_10, CenOrhIhs_10} systems, etc.
When a task is scheduled, it occupies resources for processing, e.g., computing hardware or labor costs.
Upon task completion, we receive rewards, e.g., a high-accuracy machine learning model or high-quality data.
The typical goal of online task scheduling is to optimize the {\it reward-to-cost ratio}, which represents the system efficiency. 
% where rewards often represent task-level Quality of Service (QoS) requirements while costs correspond to system-level resource expenditure ~\cite{XiaQinYuN_16, QinGanLiu_20, Nee_21}.
For instance, a cloud computing platform bills users to process their tasks using virtualized processing units, aiming to maximize the average revenue-to-hardware cost ratio.
A crowdsourcing platform compensates appropriate workers for processing data labeling tasks, aiming to optimize the quality-to-cost ratio.
% A multiprocessing system distributes tasks among available processors to maximize throughput while minimizing latency and resource contention.

There exist two major challenges to establish an effective online task scheduling policy.
The first one is the black-box nature of rewards and costs.
Take the machine learning model training as an example, rewards could refer to the model training accuracy, area under the curve (AUC) and \textit{f1} score while the costs could refer to the power consumption~\cite{MohMdN_15, FahLuJJav_15}.
The scheduler needs to choose an appropriate training time (decision) to maximize the accuracy (reward) per power consumption (cost).
However, the knowledge of accuracy and power consumption w.r.t. training time is unknown apriori and thus information is only revealed for the chosen decision (i.e., bandit feedback) when the task is completed.
% Such complexity renders reward and cost unknown upon the setup of a task scheduling model and they are only revealed after the actual decision making.

The second challenge is the uncertainty of task arrival, where there exist multiple types of tasks in the system and the scheduler has no prior information on their arrival distribution.
% Notably, task arrival behaviors remain sensitive to factors like unpredictable system behavior.
% Therefore, a prior estimation or the conventional assumption of arrival distribution (e.g., Poisson) may fail in practice.
% This implies another practical yet often neglected hazard with respect to the uncertainty in task arrival distribution.
% For example, in a system that has two types of tasks, the expectation of arriving tasks can be $[0.8:0.2]$, but if there are only 10 tasks, the system behavior can be 5 of both types of tasks.
One intuitive approach to addressing this challenge is always choosing the greedy decision with the maximal estimated ratio (suppose the knowledge of rewards and costs is known), regardless of the task type.
This idea seems to work because the greedy decision for every incoming tasks would maximize the average reward-to-cost across all tasks. 
However, we show the greedy decision could be an arbitrarily sub-optimal solution by slightly twisting the task arrival distribution. 
Consider a toy task scheduling system with arriving tasks of two types in~\figurename~\ref{fig:Example}, denoted as task $S_{\uppercase\expandafter{\romannumeral1}}$ and $S_{\uppercase\expandafter{\romannumeral2}}$.
For task $S_{\uppercase\expandafter{\romannumeral1}}$, there is only one available decision $a_{\uppercase\expandafter{\romannumeral1}}$.
For task $S_{\uppercase\expandafter{\romannumeral2}}$, two decisions exist, i.e., the first decision $a_{\uppercase\expandafter{\romannumeral2}}$ involves a low reward and a low cost, while the second one $a_{\uppercase\expandafter{\romannumeral3}}$ is characterized by a higher reward and a higher cost which has a higher reward-to-cost ratio than decision $a_{\uppercase\expandafter{\romannumeral2}}$.
We consider two algorithms, i.e., the greedy ($a_{\uppercase\expandafter{\romannumeral1}}$ for task $S_{\uppercase\expandafter{\romannumeral1}}$ and $a_{\uppercase\expandafter{\romannumeral3}}$ for task $S_{\uppercase\expandafter{\romannumeral2}}$) and the reverse ($a_{\uppercase\expandafter{\romannumeral1}}$ for task $S_{\uppercase\expandafter{\romannumeral1}}$ and $a_{\uppercase\expandafter{\romannumeral2}}$ for task $S_{\uppercase\expandafter{\romannumeral2}}$) algorithm.
One may argue that the greedy algorithm is more favorable since it achieves a higher ratio for task $S_{\uppercase\expandafter{\romannumeral2}}$.
However, when the task arrival distribution is $\{0.8, 0.2\}$ in~Table~\ref{tab:toy}, this intuitive algorithm achieves an expected ratio of $(0.8\times 3 + 0.2\times 3)/(0.8\times 1 + 0.2\times 2)=2.5$, which is worse than its reverse algorithm with that of $(0.8\times 3 + 0.2\times 1)/(0.8\times 1 + 0.2\times 1) = 2.6$.
One may then suggest to adopt the reverse algorithm instead.
However, we can always construct another distribution (e.g., $\{0.2,0.8\}$) to fail the reverse algorithm such that it turns out to be worse.
From this example, we emphasize the critical role of task arrival distribution in establishing effective online task scheduling algorithms. 
To address the two challenges above, we propose a double-optimistic learning approach to estimate the reward and cost with bandit feedback.
Intuitively, the learning approach establishes the optimistic and pessimistic estimations for rewards and costs, respectively, yielding an overall optimistic estimation for the reward-to-cost ratio.
This confirms the principle, the optimism in the face of uncertainty, which is exemplified by the confidence bound based algorithms~\cite{TorCsa_20}.
Furthermore, instead of adopting naive estimation for the task arrival distribution, we utilize the Robbins-Monro method to implicitly learn the task arrival distribution while making decisions.
Intuitively, this method transforms the problem of reward-to-cost ratio maximization into a fixed point problem, which can be efficiently solved by carefully using stochastic samples and yields a fast convergence.

In summary, we propose and analyze a novel optimization framework for online task scheduling, where the objective is to optimize the cumulative reward-to-cost ratio under a stationary environment with uncertain task arrival distribution.
We integrate double-optimistic learning and the Robbins-Monro method for effective and efficient online scheduling.
Our main contributions are summarized as follows:

\textbf{\textbullet \ Model for Online Task Scheduling.}
We propose a general framework for online task scheduling without any prior knowledge of reward, cost and task arrival distribution.
This framework encapsulates a variety of practical instances whose goal is to optimize the reward-to-cost ratio, thereby striving to achieve high-return and cost-effective outcomes in real-world scheduling systems.
% We study crowd-sourcing with various tasks that appear with unknown distributions and have unknown quality upon completion and payment charged when assigned to different workers. We model this as a renewal system, with a specific focus on maximizing the ratio of total reward to total cost.

\textbf{\textbullet \ Algorithm Design.}
We propose a novel algorithm called DOL-RM which incorporates double-optimistic learning for unknown rewards and costs and a modified Robbins-Monro method to implicitly learn the uncertain task arrival distribution.
This integrated design enables the balance between rewards and costs such that the cumulative reward-to-cost ratio is maximized.
% To simultaneously learn accurate black-box functions and achieve an optimal objective ratio, we integrate two novel and effective algorithmic frameworks. We employ Double-Optimistic learning for unknown rewards and costs and leverage the Robbins-Monro technique to approach this balance point through iterative estimation. This design assists the scheduler in achieving the optimal ratio.
% Instead of directly learning the distribution of tasks, we propose an algorithm that seeks to find the optimal balance between task rewards and costs, considering the distribution of tasks. We employ the Robbins-Monro technique to approach this balance point through iterative estimation. When dealing with unknown reward and cost parameters, we adopt an optimistic approach towards both variables to promote exploration and avoid getting stuck in sub-optimal solutions. This modification alters the classical UCB algorithm.

\textbf{\textbullet \ Theoretical Analysis.}
% We prove that DOL-RM achieves $O(T^{3/4})$ regret, approaching the best regret bound in \cite{Nee_21} with full prior knowledge. This result is obtained by our unified analysis, which melds optimistic learning variance estimation with Robbins-Monro learning convergence rate evaluation.
% We highlight our contribution to a unified theoretical analysis approach that effectively integrates variance estimation from optimistic learning and convergence rate evaluation from Robbins-Monro analysis.
We prove that DOL-RM achieves a sub-linear regret at order $O(T^{3/4})$ against the optimal scheduling policy in hindsight (Theorem \ref{thm:main}).
To prove the main result, we decompose the regret into two individual errors w.r.t. double-optimistic learning and Robbins-Monro method and then leverage the Lyapunov drift technique and carefully control the cumulative errors over the entire learning process.
% fast learning with a $O(T^{-1/4})$ convergence gap and  compared with the case given full prior knowledge in~\cite{Nee_21}.%Not true since \cite{Nee_21} works as another fast learning algorithm with full knowledge of rewards and costs, except the task distribution. The regret baseline should be the optimal solution to the offline problem.
% We achieve a sub-linear gap of $O(T^{-\frac{1}{6}})$ between the optimal balancing point and the result produced by our algorithm. This is achieved by breaking down the convergence gap between the potential optimal ratio of total reward and total cost and the ratio obtained by our algorithm into two components. One of these components is proven through the use of the UCB algorithm, while the other is established through a discussion of two scenarios. In one scenario, the bounds are determined by the UCB algorithm, and in the other scenario, they are bounded by the fast learning algorithm.

\textbf{\textbullet \ Applications.} 
We test DOL-RM and compare it with state-of-the-art baselines via both a synthetic simulation and a real-world experiment of machine learning task scheduling.
These results demonstrate that DOL-RM can achieve the best cumulative reward-to-cost ratio without any prior knowledge of reward, costs and task arrival distribution.
% Moreover, it shows the flexibility of our algorithm with a wide range of practical applications.
% We adapt our algorithm to a scenario where a server needs to process various machine learning tasks and select the optimal epoch number for each task to maximize the ratio of the total AUC (Area Under the Curve) and total time. We demonstrate that our algorithm outperforms traditional Multi-Armed Bandit (MAB) algorithms like Thompson Sampling (TS) and Upper Confidence Bound (UCB) when the distribution follows specific patterns.

\subsection{Related Work}
% We primarily focus on two types of problems related to our research: cost-aware bandit problems and task scheduling.

{\bf Online Task Scheduling.}
Online task scheduling has been studied extensively in the literature.
We focus on presenting the works most related to ours.
% widely used in many real-world scenarios, including but not limited to cloud computing and crowdsourcing.
% Researchers have proposed various methods to study this problem in the online context.
In~\cite{GaoMoTTay_21}, the authors utilize a UCB variant to address the exploration-exploitation dilemma in online task scheduling, however, the work does not take the cost into the consideration.
In~\cite{LiRMaQGon_21}, a Robbins-Monro based approach is proposed for task scheduling in edge computing.
However, its primary goal is to preserve data freshness, measured by Age of Information (AoI), differs from our setting of maximizing a generic reward-to-cost ratio.
In~\cite{SemAtiRay_19}, online bandit feedback is leveraged to schedule tasks in a renewal system, which is distinct from ours because it allows the scheduler to interrupt a task in service but the task in our model cannot be stopped once being scheduled. 
% First, its terminal sign for algorithm is a prescribed time constraint but ours stops upon the completion of all tasks.
% Second, it studies continuous costs while we assume discrete costs.
The closely related work is~\cite{Nee_21}, which studies a similar setting with this paper except assuming the decision set and the information of reward and cost are completely known before decision making. 
% The assumptions are rarely true in real-world system.  
% and the prior-to-decision knowledge of them is rarely considered practical in real-world cases.
In contrast, we focus on a more practical scenario without assuming any knowledge of the rewards and costs.
Moreover, our algorithm and proof techniques are also different with~\cite{Nee_21} as we need to handle the additional uncertainties of the rewards and costs.

{\noindent\bf Cost-Aware Bandits:}
The key feature in our online task scheduling problem is that costs are concomitant with rewards upon decision-making.
This feature is also captured in another decision-making model, i.e., the cost-aware bandit model, where an agent incurs a reward and a cost simultaneously by pulling an arm.
% The cost-aware bandits can be specialized into two established models, i.e., the budget bandits and the cost-aware cascading bandits.
The cost-aware bandits can be specialized into two major models, including budget-constrained bandits and the cost-aware cascading bandits.
The budget-constrained bandits have been widely explored in~\cite{SemAtiRSr_20, DebShwSuj_22, DinQinZha_13, LiHXia_17, LonArcEnr_10, LonArcAle_12, XiaDinZha_16, XiaQinDin_17, ZhoCla_18}, where pulling an arm incurs an additional cost and there exists a hard stopping point once the cumulative cost exceeds a given budget. 
% Bandits with Knapsacks, initially introduced in~\cite{BadKleSli_13} and further explored in~\cite{AgrDev_16,ImmSanSch_22,SanSli_18,SanSli_21,LiuJiaLi_22}, presents a similar problem.
Our setting has two major differences with the budget-constrained bandits.
The first one is we do not have or assume any explicit budget limit constraints; and the second one is we consider a ``continual'' system, where it would not stop in the middle until all tasks are completed.
Therefore, the previous algorithms and analysis in budget-constrained bandits cannot be applied in this paper. 
% is that pulling an arm results in a resource consumption vector and the algorithm stops when one or more resources are depleted.
% However, a corresponding algorithm may continue even beyond the final round defined in our problem, rendering these prior analyses insufficient for this paper.
Besides, cost-aware cascading bandits are also related and have been studied in~\cite{CheHuaShe_22, GanZhoYan_18, GanZhoYan_20, AniGauLiS_22, WanZhoYan_19}.
Its goal is to maximize the reward-to-cost gap, however, such a maximal gap does not necessarily yield the optimal ratio as in our paper because we have multiple types of tasks in the system and need to take the task arrival distribution into consideration.

Compared to the previous works, this paper takes a bold step and investigates online task scheduling without any knowledge of rewards, costs, and task arrival distribution. 
% which incorporates black-box properties of objective functions, building upon the study in~\cite{Nee_21}.
% Specifically, our model features not only unknown rewards and costs but also uncertain task arrival distribution.
Accordingly, we propose DOL-RM, an effective algorithm for optimizing the reward-to-cost ratio in online task scheduling. 
Furthermore, DOL-RM establishes a near-optimal regret performance for this challenging setting and has been validated to achieve the best empirical performance via a synthetic simulation and a real-world experiment.

\section{System Model}
% In this section, we introduce the problem of online task scheduling and our algorithm design. 
% We begin with the system design to demonstrate how this framework is applied in real-world scenarios.

% {\noindent \bf Basic Setting.}
We study a typical online task scheduling system where a central controller processes an incoming sequence of heterogeneous tasks in a back-to-back manner (i.e., the controller observes and processes the next task immediately upon the completion of its current task).
We assume $S$ types of incoming tasks in the system, denoted by the set $\mathcal S \triangleq \{1,2,\cdots, S\}$ and every task is randomly drawn from a stationary probability distribution $\mathbb P: \mathcal S \to \mathbb R^{+}$. 
% {\noindent \bf Bandit Feedback.}
For the $t$th task, the controller observes its type $S_t \in \mathcal S$ and chooses a decision $a_t \in \mathcal A(S_t),$ where $\mathcal A(S_t)$ is the corresponding available decision set of task type $S_t$.
% And $\forall S\in \mathcal{S}$, we have $\vert\mathcal{A}(S)\vert \leq N$.
When the $t$th task is completed, the controller observes the feedback of reward $R_{S_t,a_t}$ and cost $C_{S_t,a_t},$ which are randomly generated from with the expected values $r_{S_t,a_t} \triangleq \mathbb E[R_{S_t,a_t}]$ and $c_{S_t,a_t} \triangleq C_{S_t,a_t}.$ 
Note we only observe the feedback with respect to the selected decision $a_t,$ thus coined as ``bandit feedback'', a term borrowed from bandit learning~\cite{TorCsa_20}.
Then, the controller continues to work on the next $(t+1)$th task until $T$ tasks are completed.
In the paper, we assume the rewards $\{R_{S_t,a_t}\}_t$ and costs $\{C_{S_t,a_t}\}_t$ are independent if $(S_t, a_t)$ is the same. 

\begin{figure}[!t]
    \centerline{\includegraphics[width=0.43\textwidth]{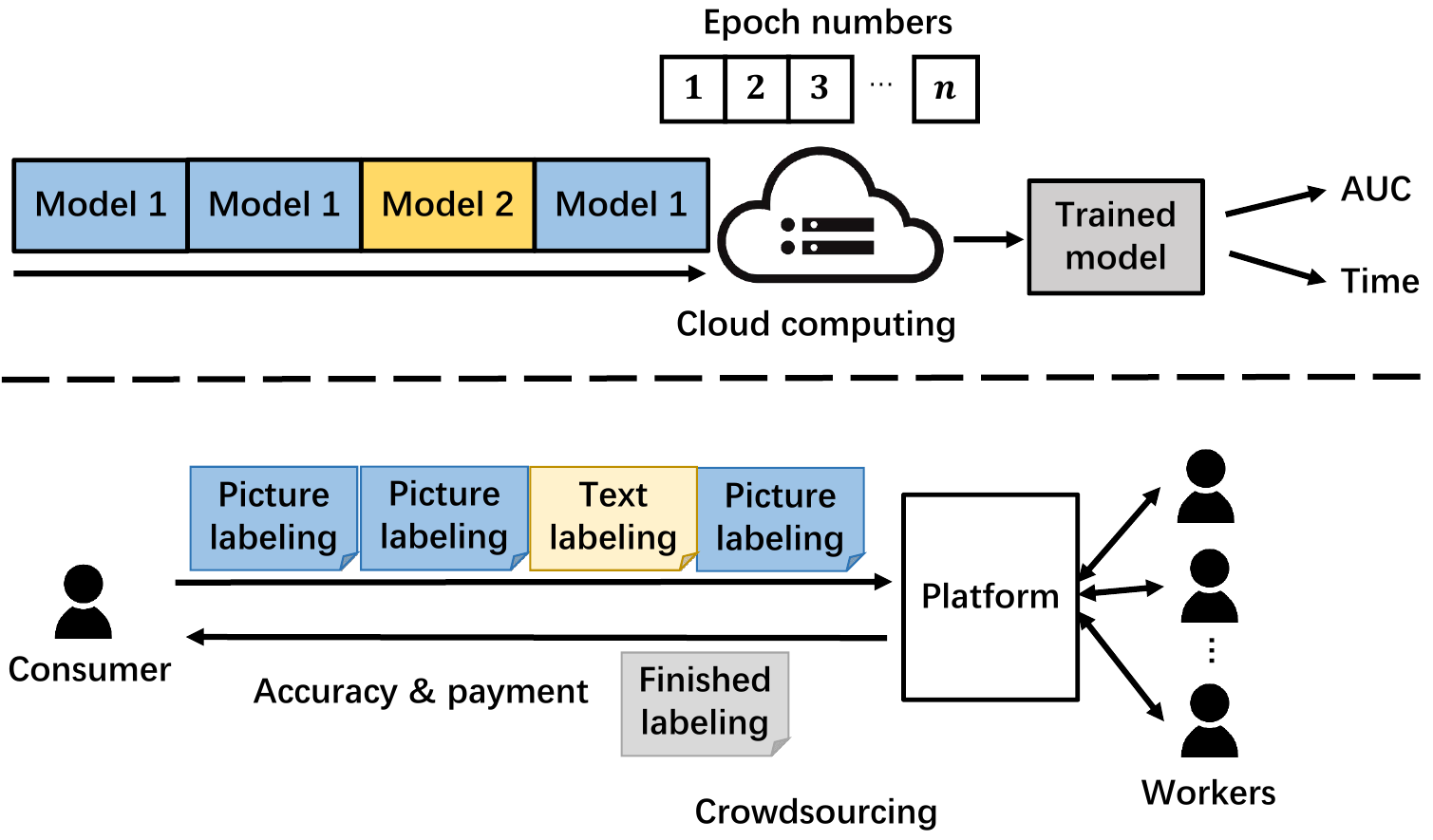}}
    \caption{Online Task Scheduling in Cloud Computing (above) and Crowdsourcing (bottom).}
    \label{fig:Model}
\end{figure}

{\bf Reward-to-Cost Ratio Maximization:}
 The controller aims to maximize the cumulative \textit{reward-to-cost ratio} over a sequence of $T$ tasks as follows:
\begin{equation}
    \label{eq:offline}
    \max_{ \{ a_{t} \}_{t} } \frac{\sum_{t=1}^{T}~\mathbb{E}[R_{S_t,a_t}]}{\sum_{t=1}^{T}~\mathbb{E}[C_{S_t,a_t}]}.
\end{equation}
Expectation in Problem~\eqref{eq:offline} is taken over the randomness w.r.t. reward, cost and task arrival distribution $\mathbb{P}(\cdot)$.
We emphasize that we do not assume any prior knowledge on such distribution and instead learn it implicitly in an online manner (please refer to Section~\ref{sec:algo}).

When the task arrival distribution and the expected reward and cost are known, we can compute the optimal policy by solving Problem~\eqref{eq:offline} with offline optimization techniques~\cite{Ben_66}.
However, such knowledge is practically unavailable or infeasible to the controller and we have to learn it by exploring the decision space in an online manner.
Moreover, as discussed in the introduction, the greedy algorithm that simply maximizes ${R_{S_t,a_t}}/{C_{S_t,a_t}}$ for each individual task can achieve arbitrary sub-optimal performance because it ignores the task arrival distribution.
Therefore, we propose an online learning based algorithm to address challenges on the uncertainties of rewards, costs and task arrivals.

% The task arrival follows an unknown distribution.The objective of the scheduler is to attain robust performance while ensuring cost-effectiveness; mathematically speaking, a proficient scheduler is capable of maximizing the reward-to-cost ratio.
% Consider a system in which a server needs to choose from different epoch numbers to run a machine-learning task. Each task is processed within a time frame, denoted as $t \in {1, 2, \dots}$, meaning that the processing period for a task is defined in terms of time frames, regardless of the actual time duration. The server can only handle one task at a time, and these tasks are processed sequentially, meaning that once a task is completed, the next task begins immediately. The tasks arrive with an unknown distribution, which is not known to the server.
% At each time interval, the scheduler determines the current deployment policy based on the type of arrived tasks. Upon completion of the incoming task, the central scheduler immediately receives a user-defined reward and incurs a user-defined cost. 
% {\noindent \bf Applications:}
% Note the distribution of arrivals, rewards, and costs are all unknown to the scheduler. 
% In the task scheduling system, the rewards can represent a series of service quality metrics, while the costs are related to resources consumed.
% Our above modeling presents a generic framework for online task scheduling which finds itself copious concrete adoption in real-world applications.
% We exemplify its flexibility with representatives in \figurename~\ref{fig:Model} as follows:
Before presenting our algorithm, we emphasize that our modeling is general to capture many real-world applications, 
which is illustrated with two following examples as shown in~\figurename~\ref{fig:Model}: 
\begin{itemize}
    \item Machine Learning (ML) model training on cloud servers:
    %A server can receive different machine learning tasks and process them in a back-to-back manner. For each coming task, the server can choose different epoch numbers to train the model. and get its AUC on test set, which can be viewed as reward and training time, which can be viewed as cost after the training finished.
    The cloud platforms (e.g., AWS, Microsoft Azure or Google Cloud) constantly process ML model training tasks submitted by users.
    The types of ML training tasks are unknown apriori and only revealed at the server when they are executed. 
    For each upcoming task, the platform decides a specific training time from a range of options, which corresponds to the cost; when the training process is finished, the test accuracy is returned as the reward.
    However, the relationship of accuracy v.s. training time is uncertain. 
    The goal of the platform is to schedule the training time for each individual task such that the average accuracy per time unit (i.e., accuracy-to-time ratio) is maximized.
    \item Tasks assignment in Crowdsourcing:
    % A user can publish different data label projects, each includes micro labeling tasks on a crowdsourcing platform. The user can assign micro tasks to different workers who perform differently and charge for different payments in different projects. The accuracy of labeled data can be viewed as reward and the payments can be viewed as cost.
    The crowdsourcing platforms (e.g., Amazon Turk or Task Rabbit) received a sequence of data labeling tasks submitted by consumers.
    The platform is unaware of the type of tasks until they arrive. 
    For each task, the platform assigns it to a suitable worker from a set of candidates with various task proficiency and expertise.
    The quality of the labeled data from workers is considered as the reward, while his/her payment corresponds to the cost.
    The goal of the platform is to assign the worker for each individual task such that the average quality per dollar (i.e., quality-to-payment ratio) is maximized. 
\end{itemize}
Next, we present our online learning and decision algorithm to solve the reward-to-cost ratio maximization in Problem~\eqref{eq:offline}.

\section{Algorithm Design}
\label{sec:algo}

% {\noindent \bf Algorithm Flow.}
% To solve Problem~\eqref{eq:offline} featured unknown reward, cost, and task arrival distribution, we propose a double-optimistic learning based Robbins-Monro algorithm, coined as DOL-RM.
In this section, we propose a double-optimistic learning based Robbins-Monro (DOL-RM) algorithm to address the challenges induced by unknown reward, cost and task arrival distributions. 
Specifically, DOL-RM includes two algorithmic modules, i.e., the \textit{learning module} leverages double-optimistic learning for unknown rewards and costs, and the \textit{decision module} utilizes the Robbins-Monro method to make decisions via implicitly learning the task arrival distribution.
As shown in \figurename~\ref{fig:algorithm flow}, upon each task arrives, the learning module first estimates the reward and cost via a double-optimistic learning approach.
Based on the estimation, the decision module makes a proper decision through the Robbins-Monro method, which in turn provides an iteratively approaching estimation for the reward-to-cost ratio even without explicitly learning the task arrival distribution.
% Following the acquisition of feedback on reward and cost, DOL-RM refreshes both \textbf{\textit{Update}} and \textbf{\textit{Learning}} modules in preparation for the next task.
% The complete DOL-RM algorithm is depicted in \textbf{Algorithm}~\ref{alg:DOL-RM}.
Next, we introduce these two major modules in detail and explain the intuition behind the algorithm.
The completed DOL-RM algorithm is depicted in \textbf{Algorithm}~\ref{alg:DOL-RM}.
\begin{figure}[!t]
    \centerline{\includegraphics[width=0.31\textwidth]{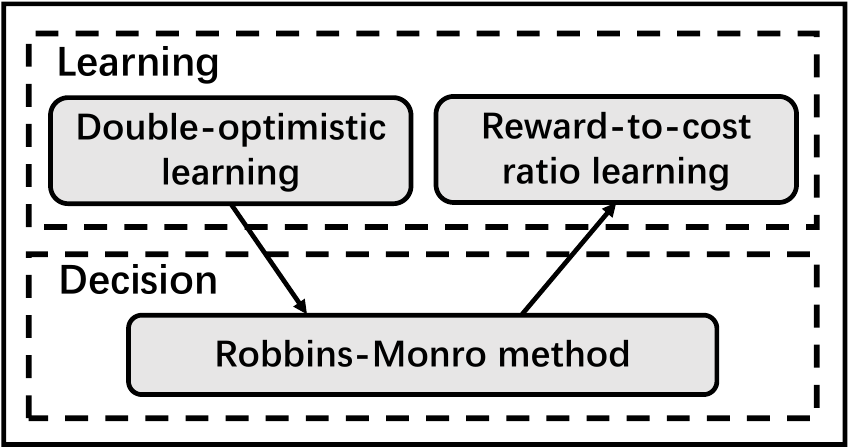}}
    \caption{Illustration of DOL-RM Algorithm.}
    \label{fig:algorithm flow}
\end{figure}

{\bf Double-Optimistic Learning:}
Recall that for the $t$th task with the type $S_t,$ the expected reward  $r_{S_{t}, a}$ and cost $c_{S_{t}, a}$ for each decision $a \in \mathcal A(S_t)$ are unknown.
We consider them as multi-armed bandit problems, where each decision is regarded as an arm associated with a pair of reward and cost. 
% This structure can be well captured by a multi-armed bandit model whereby we regard each decision as an arm characterized by a reward and a cost distribution with unknown mean~\cite{TorCsa_20}.
% Accordingly, we may adopt ideas from the bandit field for effective online estimation.
Accordingly, we leverage the canonical algorithm design principle in bandit learning~\cite{TorCsa_20}, optimism in the face of uncertainty, to estimate the reward and cost as follows:
\begin{align}
    \hat{r}_{S_t,a} \triangleq \min \left\{r_{\max}, \bar{R}_{S_t,a} + \sqrt{\frac{\log T}{N_{S_t,a}(t-1)}} \right\}, \label{eq:reward estimator} \\ 
    \check{c}_{S_t,a} \triangleq \max \left\{ c_{\min},\bar{C}_{S_t,a} - \sqrt{\frac{\log T}{N_{S_t, a}(t-1)}} \right\}, \label{eq:cost estimator}
\end{align}
where $\bar{R}_{S_t,a}$ and $\bar{C}_{S_t,a}$ denote the empirical means of reward and cost; $N_{S_t, a}(t-1)$ denotes times the arm (decision) $a$ has been chosen for task of type $S_t$ until round $(t-1)$; $r_{\max}$ and $c_{\min}$ are the maximum reward and minimum cost, respectively.
Note that $ \hat{r}_{S_t,a} $ and $ \check{c}_{S_t,a} $ denote the truncated Upper Confidence Bound (UCB) and Lower Confidence Bound (LCB) for reward and cost estimation, respectively.
Recall in Problem~\eqref{eq:offline}, $ \hat{r}_{S_t,a} $ and $ 1 / \check{c}_{S_t,a} $ serve as the optimistic estimators for $\mathbb{E} [R_{S_t, a}]$ and $1/ \mathbb{E} [C_{S_t, a}]$, yielding the name of ``Double-Optimistic Learning''.
Intuitively, this learning policy encourages exploration among arms (decision space) to learn the optimal reward-to-cost ratio effectively.

Suppose we make decision $a_t$ for the task $S_t$ and observe the reward $R_{S_t,a_t}$ and cost $C_{S_t,a_t}$ feedback when the task is completed, we conduct the following updates:
\begin{align}
    N_{S_t, a_t}(t) & = N_{S_t, a_t}(t-1) + 1, \label{eq:counter update}\\
    \bar{R}_{S_{t+1},a_t} & = \frac{\bar{R}_{S_t,a_t}N_{S_t, a_t}(t-1) + R_{S_t, a_t}}{N_{S_t, a_t}(t)}, \label{eq:ucb update}\\
    \bar{C}_{S_{t+1},a_t} & = \frac{\bar{C}_{S_t,a_t}N_{S_t, a_t}(t-1) + C_{S_t, a_t}}{N_{S_t, a_t}(t)}. \label{eq:lcb update}
\end{align}

% \vspace{-0.25cm}
\begin{algorithm}[!t]
    \caption{DOL-RM}
    \begin{algorithmic}
    \STATE
    \textbf{Input:} Number of tasks $T$, minimum cost $c_{\min}$, $\theta_{\max}$ and $\theta_{\min}$.
    \STATE
    \textbf{Initialization:}
    $\theta_1 \gets \theta_{\min}$, $\eta \gets \frac{1}{c_{\min}\sqrt{T}}$.
    \FOR {$t=1, 2, \dots, T$}
        \STATE
        \textbf{\textbullet Observation:} Task type $S_t$ and available decision set $\mathcal{A}(S_t)$.
        \STATE
        \textbf{\textbullet Double-Optimistic Learning:} 
        Calculate reward and cost estimates $\hat{r}_{S_t,a}$ and $\check{c}_{S_t,a}, \forall a\in\mathcal{A}(S_t)$ according to~\eqref{eq:reward estimator} and~\eqref{eq:cost estimator}, respectively.
        \STATE
        \textbf{\textbullet Robbins-Monro Based Decision:}
        Choose the greedy decision for task scheduling:
	  \begin{align}
              a_t \gets \underset{a \in \mathcal{A}(S_t)}{\arg\max} \ \ \hat{r}_{S_t,a} - \theta_t \ \check{c}_{S_t,a} \label{eq:decision}
	  \end{align}
        \STATE
        \textbf{\textbullet Bandit Feedback:}
        Obtain reward $R_{S_t, a_t}$ and cost $C_{S_t, a_t}$.
	  \STATE
        \textbf{\textbullet Reward-to-Cost Ratio Learning:}
        \begin{align}
            \theta_{t+1} \gets \left[ \theta_t + \eta (\hat{r}_{S_t,a_t} - \theta_t \ \check{c}_{S_t,a_t}) \right]_{\theta_{\min}}^{\theta_{\max}}. \label{eq:vqupdate}
        \end{align}
            % \eta  \gets \frac{1}{c_{\min}\sqrt{T}}, 
        %     \ \theta_{t+1} \gets \left[ \theta_t + \eta (\hat{r}_{S_t,a_t} - \theta_t \ \check{c}_{S_t,a_t}) \right]_{\theta_{\min}}^{\theta_{\max}}
        % \end{equation*}
        % \eta  remains constant? If so, there should be a comment on that in early in the Algorithm design section.
        % \textbf{\textbullet Reward-to-Cost Ratio Learing:} 
        Update the empirical means of rewards and costs with~\eqref{eq:counter update}--\eqref{eq:lcb update}.
    \ENDFOR
    \end{algorithmic}
    \label{alg:DOL-RM}
\end{algorithm}
% \vspace{-0.25cm}

{\bf Robbins-Monro Based Decision:}
% As mentioned in the introduction, the greedy decision would result in either over-allocating or under-allocating the resource when the distribution of tasks is unknown.
Let $\theta^*$ be the optimal objective (i.e., reward-to-cost ratio) in Problem~\eqref{eq:offline}.
It is notable to observe that Problem~\eqref{eq:offline} is equivalent to finding an optimal sequence of actions such that:
\begin{equation}
    \max_{\{a_{t}\}_{t}} ~\mathbb{E}\left[{\sum_{t=1}^{T} R_{S_t,a_t}} - \theta^*{\sum_{t=1}^{T}C_{S_t,a_t}}\right] = 0, \label{eq:intuition}
\end{equation}
% Please refer to Section~\ref{sec:theo} for proof of its existence.
This is essentially a fixed point problem and can be solved using the Robbins-Monro method with stochastic samples~\cite{Nee_13, HerSut_51}.
% , we can find the best decisions $\{ a_{t} \}_{t}$ and $\theta^*$ via online stochastic samples instead of directly estimating the distribution.
Specifically, assuming the optimal ratio $\theta^*$ is known and using $\hat r_{S_t,a}$ and $\check c_{S_t,a}$ as the proxy of $R_{S_t,a_t}$ and $C_{S_t,a_t}$, we can follow a greedy decision for task $S_t \in \mathcal{S}$ according to~\eqref{eq:intuition}:
\begin{equation*}
    \underset{a \in \mathcal A(S_t)}{\arg\max} ~{\hat r_{S_t,a}} -\theta^* \ \check c_{S_t,a}
\end{equation*}
However, this decision is non-causal and infeasible due to the lack of knowledge on the best ratio $\theta^*$.
To learn $\theta^*$, we modify the Robbins-Monro iteration method by plugging the $\hat r_{S_t,a}$ and $\check c_{S_t,a}:$
\begin{equation*}
    \theta_{t+1} = \left[ \theta_t + \eta (\hat{r}_{S_t,a_t} - \theta_t \check{c}_{S_t,a_t}) \right]_{\theta_{\min}}^{\theta_{\max}}
\end{equation*}
where $\eta$ is the learning rate and $[\cdot]_{\theta_{\min}}^{\theta_{\max}}$ denotes the projection of real number onto the interval $[\theta_{\min}, \theta_{\max}]$ with $\theta_{\min} \triangleq {r_{\min}}/{c_{\max}}$ and $\theta_{\max} \triangleq {r_{\max}}/{c_{\min}}$.

Note this is different with the classical RM method~\cite{HerSut_51, Nee_21} which adopts the samples $R_{S_t,a_t}$ and $C_{S_t,a_t}$ for updating $\theta_t$.
Finally, our Robbins-Monro based decision is defined as follows:
\begin{equation*}
    \underset{a \in \mathcal A(S_t)}{\arg\max} ~{\hat r_{S_t,a}} -\theta_t \ \check c_{S_t,a}. 
\end{equation*}
where $\theta_t$ is treated as an estimation of $\theta^*$. 

This method circumvents the need to directly estimate the task arrival distribution.
Instead, it learns the optimal cumulative ratio by iteratively adjusting $\theta$ with the term $\eta (\hat{r}_{S_t,a_t} - \theta_t \check{c}_{S_t,a_t})$.
When $\theta$ is excessively large, $\eta (\hat{r}_{S_t,a_t} - \theta_t \check{c}_{S_t,a_t})$ tends to be negative, driving a downward shift, whereas a small $\theta$ prompts its value to skew positive, inducing an upward adjustment in itself.
With a proper learning rate $\eta$, $\theta$ can steadily converge to the optimal balance ratio as iterations increase (please refer to Section~\ref{sec:theo}).

\section{Theoretical Results} \label{sec:theo}

% The algorithm's efficiency should be evaluated by assessing the gap between the optimal ratio when all unknown parameters are known and the ratio of the total expected reward and total expected cost generated by the algorithm.
% In this section, we present the theoretical analysis of DOL-RM in the online task scheduling problem.
% To determine whether the algorithm can converge to the optimal solution, we begin by defining the optimal ratio.
To present our main results, we first introduce the common assumptions on rewards and costs. 
% To present our main results, we first introduce the common assumptions on rewards and costs and an additional assumption on the decision set. 
\begin{assumption} \label{assumption:reward}
    The reward $R_{S_t,a}$ is a sub-Gaussian random variable with mean $ r_{S_t,a} = \mathbb E[R_{S_t,a}] \in [r_{\min}, r_{\max}] $ for any $S_t \in \mathcal S,$ $a \in \mathcal A(S_t)$ and $t \in [T].$  
\end{assumption}
\begin{assumption}\label{assumption:cost}
    The cost $C_{S_t,a}$ is a positive sub-Gaussian random variable with mean $ c_{S_t,a} = \mathbb E[C_{S_t,a}] \in [c_{\min}, c_{\max}]$ for any $S_t \in \mathcal S,$ $a \in \mathcal A(S_t)$ and $t \in [T].$  
\end{assumption}

{\noindent \bf Regret \& Convergence Gap:}
Let $\theta^*$ be the optimal reward-to-cost ratio in Problem~\eqref{eq:offline}, we define $Gap(T)$ to be the \textit{convergence gap}: 
\begin{equation}
     Gap(T) \triangleq \left\vert\theta^*-\frac{\sum_{t=1}^T\mathbb{E}[R_{S_t,a_t}]}{\sum_{t=1}^T\mathbb{E}[C_{S_t,a_t}]}\right\vert, \label{eq:gap}
\end{equation}
which measures the distance between the cumulative reward-to-cost ratio returned by a policy and the optimal ratio.
% where $ r_{S_t,a_t} = \mathbb{E}[R_{S_t,a_t}] $ and $ c_{S_t,a_t} = \mathbb{E}[C_{S_t,a_t}] $.
We define the \textit{regret} to be:
\begin{equation*}
    \mathcal{R}(T) \triangleq T \cdot Gap(T).
\end{equation*}
% where $Gap(T)$ denotes the \textit{convergence gap} between the optimal ratio and the cumulative reward-to-cost ratio.
% \subsection{Main Theoretical Contribution}
% Our goal is to show that DOL-RM achieves sub-linear regret, i.e., $\lim_{T \rightarrow \infty} \mathcal{R}(T) = 0$. 
Our goal is to show that DOL-RM achieves sub-linear regret $o(T)$, i.e., $\lim_{T \rightarrow \infty} \frac{ \mathcal{R}(T) }{T} = \lim_{T \rightarrow \infty} Gap(T) = 0$, which implies that DOL-RM converges to the optimal policy and achieves the best reward-to-cost ratio in the long-term.
We state our main result for DOL-RM in the following theorem.
% which offers bounds on both convergence gap and regret:
% Given Definition \ref{d1}, we can evaluate our algorithm by the following variable.

% \begin{definition}[Convergence gap]
%     For a given termination $T$, the convergence gap between the optimal ratio and the ratio of total expected reward and total expected cost by our algorithm is 
%     \begin{align}
%         Gap(T) = \left\vert\theta^*-\frac{\sum_{t=1}^T\mathbb{E}[r_{S_t,a_t}]}{\sum_{t=1}^T\mathbb{E}[c_{S_t,a_t}]}\right\vert.\notag
%     \end{align}
% \end{definition}
% We proof that the gap converge sub-linearly by $O(T^{-\frac{1}{4}}).$
\begin{theorem}\label{thm:main}

Suppose Assumption~\ref{assumption:reward} and~\ref{assumption:cost} hold, DOL-RM achieves the following regret:
\begin{equation*}
    % Gap(T) = O(T^{-\frac{1}{4}}), \ 
    \mathcal{R}(T) = O(T^{\frac{3}{4}}).
\end{equation*}
% Additionally, when the decision set satisfies a strong concavity property as shown in Assumption~\ref{assumption:strongly}, DOL-RM achieves the following regret:
% \begin{equation*}
%     % Gap(T) = O(T^{-\frac{1}{4}}), \ 
%     \mathcal{R}(T) = O(\sqrt{T}).
% \end{equation*}

\end{theorem}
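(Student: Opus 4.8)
The plan is to reduce the ratio-based regret to a \emph{linear} comparator gap, and then to bound that gap by coupling the Robbins--Monro drift of $\theta_t$ with the shrinking confidence widths of double-optimistic learning. Since $\sum_{t=1}^{T}\mathbb E[C_{S_t,a_t}]\ge T c_{\min}$ by Assumption~\ref{assumption:cost}, the definition~\eqref{eq:gap} gives
\begin{equation*}
\mathcal R(T)=T\cdot Gap(T)\le \frac{1}{c_{\min}}\left|\sum_{t=1}^{T}\mathbb E\!\left[r_{S_t,a_t}-\theta^{*}c_{S_t,a_t}\right]\right|,
\end{equation*}
so it suffices to bound the signed sum $\sum_t\mathbb E[r_{S_t,a_t}-\theta^{*}c_{S_t,a_t}]$ from both sides by $O(T^{3/4})$. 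To exploit the fixed-point view of~\eqref{eq:intuition} I define $f(\theta)\triangleq\sum_{S}\mathbb P(S)\max_{a}(r_{S,a}-\theta c_{S,a})$: this $f$ is convex and decreasing with derivative in $[-c_{\max},-c_{\min}]$, the equivalence~\eqref{eq:intuition} identifies $\theta^{*}$ as its unique root $f(\theta^{*})=0$, and integrating the derivative bound yields the strong-monotonicity estimate $f(\theta)(\theta-\theta^{*})\le -c_{\min}(\theta-\theta^{*})^{2}$ together with $|f(\theta)|\le c_{\max}|\theta-\theta^{*}|$, which drive the convergence argument.

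Next I would run a Lyapunov drift analysis on $L_t\triangleq(\theta_t-\theta^{*})^{2}$. Writing $g_t\triangleq\hat r_{S_t,a_t}-\theta_t\check c_{S_t,a_t}$ and using that the projection $[\cdot]_{\theta_{\min}}^{\theta_{\max}}$ is non-expansive, the update~\eqref{eq:vqupdate} gives $L_{t+1}-L_t\le 2\eta\,g_t(\theta_t-\theta^{*})+\eta^{2}g_t^{2}$, with $g_t^{2}\le G^{2}$ for a constant $G$ since $\hat r,\check c,\theta_t$ are bounded. I would condition on the good event that all confidence bounds hold simultaneously, whose complement is negligible by a union bound over $(S,a,t)$ and the sub-Gaussian concentration of Assumptions~\ref{assumption:reward}--\ref{assumption:cost}. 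On this event optimism gives $\hat r\ge r$ and $\check c\le c$, so $g_t\ge\max_a(r_{S_t,a}-\theta_t c_{S_t,a})$, while the confidence widths cap the overshoot by $w_t\triangleq 2(1+\theta_{\max})\sqrt{\log T/N_{S_t,a_t}(t-1)}$. Because $\theta_t$ is measurable w.r.t.\ the past while $S_t$ is drawn fresh from $\mathbb P$, conditioning on the history yields $\mathbb E[g_t(\theta_t-\theta^{*})]\le -c_{\min}\mathbb E[(\theta_t-\theta^{*})^{2}]+\mathbb E[w_t|\theta_t-\theta^{*}|]$. Summing the drift, telescoping, and plugging in $\eta=1/(c_{\min}\sqrt T)$ gives
\begin{equation*}
\sum_{t=1}^{T}\mathbb E[(\theta_t-\theta^{*})^{2}]\le \frac{L_1+\eta^{2}TG^{2}}{2\eta c_{\min}}+\frac{\theta_{\max}-\theta_{\min}}{c_{\min}}\sum_{t=1}^{T}\mathbb E[w_t]=O(\sqrt{T\log T}),
\end{equation*}
where the last equality uses the standard pigeonhole bound $\sum_t\mathbb E\sqrt{\log T/N_{S_t,a_t}(t-1)}=O(\sqrt{K\,T\log T})$ with $K\triangleq\sum_S|\mathcal A(S)|$ the number of type--decision pairs.

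Finally, a Cauchy--Schwarz step converts this second-moment bound into the first-moment bound $\sum_{t}\mathbb E|\theta_t-\theta^{*}|\le\sqrt{T\sum_t\mathbb E[(\theta_t-\theta^{*})^{2}]}=O(T^{3/4})$, which is exactly where the $T^{3/4}$ rate is born. To close, I decompose $r_{S_t,a_t}-\theta^{*}c_{S_t,a_t}=(r_{S_t,a_t}-\theta_t c_{S_t,a_t})+(\theta_t-\theta^{*})c_{S_t,a_t}$ and sandwich the first piece: pointwise $r_{S_t,a_t}-\theta_t c_{S_t,a_t}\le\max_a(r_{S_t,a}-\theta_t c_{S_t,a})$ with conditional mean $f(\theta_t)$ (upper), while the greedy rule~\eqref{eq:decision} and optimism give $r_{S_t,a_t}-\theta_t c_{S_t,a_t}\ge\max_a(r_{S_t,a}-\theta_t c_{S_t,a})-w_t$ (lower). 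Using $|f(\theta_t)|\le c_{\max}|\theta_t-\theta^{*}|$ and $|(\theta_t-\theta^{*})c_{S_t,a_t}|\le c_{\max}|\theta_t-\theta^{*}|$, both directions collapse to $O(\sum_t\mathbb E|\theta_t-\theta^{*}|)+O(\sum_t\mathbb E[w_t])=O(T^{3/4})+O(\sqrt{T\log T})=O(T^{3/4})$, whence $\mathcal R(T)=O(T^{3/4})$. The main obstacle is the coupled drift inequality in the second step: one must simultaneously harness the strong monotonicity of $f$ (the Robbins--Monro engine) and control the bandit estimation error $w_t$ \emph{within the same} Lyapunov bound, show that this learning error is lower order so that the $O(\sqrt T)$ second-moment bound is dictated by the RM term, and make the conditioning on $S_t$ and the projection rigorous; the rest is routine concentration and summation.
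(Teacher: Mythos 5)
Your proof is correct, and it reaches the $O(T^{3/4})$ rate by the same engine as the paper but through a genuinely different outer decomposition. The paper never linearizes the ratio: it inserts the auxiliary optimistic ratio $\sum_t\mathbb{E}[\hat r_{S_t,a_t}]/\sum_t\mathbb{E}[\check c_{S_t,a_t}]$ and splits $Gap(T)$ into a pure estimation error (Lemma~\ref{lem:estimatebound}, $O(T^{-1/2})$) plus a Robbins--Monro convergence gap (Lemma~\ref{lem:rmbound}), the latter requiring the two-case max bound of Lemma~\ref{lem:rm-decomp} with a Cauchy--Schwarz step against $\check c_{S_t,a_t}-c^*$. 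You instead bound $\mathcal R(T)\le c_{\min}^{-1}\bigl|\sum_t\mathbb{E}[r_{S_t,a_t}-\theta^*c_{S_t,a_t}]\bigr|$ at the outset and work entirely with the population drift function $f(\theta)=\sum_S\mathbb P(S)\max_a(r_{S,a}-\theta c_{S,a})$, whose root characterization $f(\theta^*)=0$ and strong monotonicity $f(\theta)(\theta-\theta^*)\le-c_{\min}(\theta-\theta^*)^2$ replace both the paper's Lemma~3 of~\cite{Nee_21} (the limiting argument over $\bar{\mathcal D}$ in Appendix~A) and the sign-splitting in Appendix~B. From there the two proofs coincide: your drift inequality on $(\theta_t-\theta^*)^2$ with the $-c_{\min}$ contraction plus a confidence-width slack term is exactly Lemma~\ref{lem:drift} (your $w_t$ is the paper's $\hat r_{S_t,a_t}-r_{S_t,a_t}-\theta^*(\check c_{S_t,a_t}-c_{S_t,a_t})$ up to constants), the telescoping with $\eta=1/(c_{\min}\sqrt T)$ and the pigeonhole bound $\sum_t\mathbb{E}[w_t]=O(\sqrt{KT\log T})$ reproduce the $\sum_t\mathbb{E}[V_t]=O(\sqrt{T\log T})$ bound, and the Cauchy--Schwarz conversion to $\sum_t\mathbb{E}|\theta_t-\theta^*|=O(T^{3/4})$ is Lemma~\ref{lem:sqrtsumbound}. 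What your route buys is a cleaner final assembly (the sandwich of $r_{S_t,a_t}-\theta_t c_{S_t,a_t}$ around $f(\theta_t)$ avoids handling the auxiliary ratio and the case split of Lemma~\ref{lem:rm-decomp}) and a more careful treatment of the good event, which the paper glosses over by asserting the UCB/LCB inequalities ``with probability 1''; what the paper's route buys is an explicit separation showing that the $O(T^{-1/2})$ estimation error is not the bottleneck and that the rate is dictated by the Robbins--Monro term. One small point to make rigorous in your write-up: the lower bound $g_t\ge\max_a(r_{S_t,a}-\theta_t c_{S_t,a})$ uses $\theta_t\ge 0$ together with optimism, and the conditional-expectation step $\mathbb{E}[\max_a(r_{S_t,a}-\theta_t c_{S_t,a})\mid\mathcal F_{t-1}]=f(\theta_t)$ needs the independence of $S_t$ from the history, which is exactly the point the paper's Appendix~A argument is devoted to; both hold in this model.
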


Theorem~\ref{thm:main} highlights DOL-RM's favorable theoretical performance in achieving \textit{no regret learning} with sub-linear regrets $O(T^{\frac{3}{4}}),$ which implies the convergence gaps of $O(T^{-\frac{1}{4}})$.
To the best of our knowledge, these are the first results for online task scheduling problems without any prior information on rewards,  costs and task arrival distributions.
These results also indicate that DOL-RM can quickly identify an effective and efficient policy that converges to the optimal ratio $\theta^*$ with the integral design of double-optimistic learning and the Robbins-Monro method. 
% Though the result does not seem strong enough, considering the iteration item of $\theta$, $\eta (\hat{r}_{S_t,a_t} - \theta_t \check{c}_{S_t,a_t})$, which highly couples the estimation of $theta$ and estimation of reward and cost, it is hard to get a $O(T^{-\frac{1}{2}})$ result currently and that is our future target. 

% Before providing detailed proof on Theorem \ref{}, 
We want to further mention two related works cited as~\cite{SutZhaYan_21} and~\cite{Nee_21}.
\cite{SutZhaYan_21} studied the reward-to-cost ratio in a Markov decision process (MDP).
Though MDP includes bandit as a special case, \cite{SutZhaYan_21} only established an asymmetrical convergence.
\cite{Nee_21} assumed perfect information on the rewards and costs and established improved regrets of $O(\sqrt{T})$.
With such perfect information, \cite{Nee_21} only needs to quantify the uncertainty of task arrival without any bias\footnote{Rewards and costs are usually biased with noises, e.g., sub-Gaussian noise in our case, while~\cite{Nee_21} assumes constant rewards and costs, thus no bias involved.} from rewards and costs.
This enables \cite{Nee_21} to conduct an aggressive learning scheme to control the bias only from the task arrival such that it can provide an anytime gap performance and improved performance.
However, when the rewards and costs are unknown, we must carefully balance all these uncertainties and control the bias over the entire time horizon (Lemma~\ref{lem:estimatebound} and~\ref{lem:rmbound}), which requires a conservative learning scheme and advanced Lyapunov drift techniques to achieve sub-linear gap given coupled uncertainties.

% Additionally, when the decision set has a strong convexity property in Assumption \ref{assumption:strongly}, we establish a improved regret performance in the following.
% \begin{theorem}\label{thm:main strongly}
% Suppose Assumptions~\ref{assumption:reward} --\ref{assumption:strongly} hold, DOL-RM achieves the following regret:
% \begin{equation*}
%     % Gap(T) = O(T^{-\frac{1}{4}}), \ 
%     \mathcal{R}(T) = O(\sqrt{T}).
% \end{equation*}
% \end{theorem}
%Reinforcement Learning for Cost-Aware Markov Decision Processes Wesley Suttle, Kaiqing Zhang, Zhuoran Yang, Ji Liu, David Kraemer Proceedings of the 38th International Conference on Machine Learning, PMLR 139:9989-9999, 2021.

Next, we present the detailed proof of Theorem~\ref{thm:main} and focus on the analysis of convergence gap $Gap(T)$, where we first decompose $Gap(T)$ into the two items related to the double-optimistic learning and Robbins-Monro iteration method, respectively, and then established the items individually.
 
\subsection{Proof of Theorem~\ref{thm:main}}
% First, we start with resolving the convergence gap by Triangle Inequality. So we have
Recall the definition of convergence gap in \eqref{eq:gap}.
We first decompose the convergence gap using triangle inequality by involving $\frac{\sum_{t=1}^T\mathbb{E}[\hat{r}_{S_t,a_t}]}{\sum_{t=1}^T\mathbb{E}[\check{c}_{S_t,a_t}]}$ as follows
\begin{align}
    Gap(T) \leq& \underbrace{\left\vert\frac{\sum_{t=1}^T\mathbb{E}[\hat{r}_{S_t,a_t}]}{\sum_{t=1}^T\mathbb{E}[\check{c}_{S_t,a_t}]}-\frac{\sum_{t=1}^T\mathbb{E}[r_{S_t,a_t}]}{\sum_{t=1}^T\mathbb{E}[c_{S_t,a_t}]}\right\vert}_\text{Double-optimistic learning error}.\label{eq:estimate-real bound} \\ 
    &+ \underbrace{\left\vert\theta^*-\frac{\sum_{t=1}^T\mathbb{E}[\hat{r}_{S_t,a_t}]}{\sum_{t=1}^T\mathbb{E}[\check{c}_{S_t,a_t}]}\right\vert.}_\text{Robbins-Monro iteration convergence gap}\label{eq:optimal-estimate gap}
\end{align}
For the first term~\eqref{eq:estimate-real bound}, we establish the bound in Lemma~\ref{lem:estimatebound}, which is related to the bias of optimistic learning on rewards and costs.
For the second term~\eqref{eq:optimal-estimate gap}, we establish the bound in Lemma~\ref{lem:rmbound} by quantifying the cumulative bias from the Robbins-Monro iteration.
% also has a provable bound since the update of $\theta_t$ facilitates the iterative search for the optimal ratio.
% We establish the following two lemmas to prove the convergence gap.
\begin{lemma}\label{lem:estimatebound}
Suppose Assumption~\ref{assumption:reward} and~\ref{assumption:cost} hold, DOL-RM achieves
\begin{equation*}
    \left\vert\frac{\sum_{t=1}^T\mathbb{E}[\hat{r}_{S_t,a_t}]}{\sum_{t=1}^T\mathbb{E}[\check{c}_{S_t,a_t}]}-\frac{\sum_{t=1}^T\mathbb{E}[r_{S_t,a_t}]}{\sum_{t=1}^T\mathbb{E}[c_{S_t,a_t}]}\right\vert= O(T^{-\frac{1}{2}}). 
    \end{equation*}
\end{lemma}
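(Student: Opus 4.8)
The plan is to reduce the ratio difference to two cumulative confidence-width sums, one for rewards and one for costs, each controlled by a standard upper-confidence-bound counting argument. First I would introduce the shorthand $\hat R=\sum_t\mathbb E[\hat r_{S_t,a_t}]$, $R=\sum_t\mathbb E[r_{S_t,a_t}]$, $\check C=\sum_t\mathbb E[\check c_{S_t,a_t}]$ and $C=\sum_t\mathbb E[c_{S_t,a_t}]$, and rewrite the target via the algebraic identity
\begin{equation*}
\frac{\hat R}{\check C}-\frac{R}{C}=\frac{\hat R-R}{\check C}+\frac{R\,(C-\check C)}{\check C\,C}.
\end{equation*}
Assumption~\ref{assumption:cost} gives $\check c_{S_t,a_t}\ge c_{\min}>0$ and $c_{S_t,a_t}\ge c_{\min}$, so both $\check C$ and $C$ are $\Theta(T)$, while Assumption~\ref{assumption:reward} gives $R=O(T)$. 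Hence it suffices to show $\hat R-R=O(\sqrt{T\log T})$ and $C-\check C=O(\sqrt{T\log T})$: substituting these into the identity bounds the first term by $O(\sqrt{T\log T})/(c_{\min}T)$ and the second by $r_{\max}T\cdot O(\sqrt{T\log T})/(c_{\min}^2T^2)$, each of which is $O(\sqrt{\log T}/\sqrt T)$, i.e. the claimed $O(T^{-1/2})$ up to the logarithmic factor.

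Next I would define the \emph{good event} $\mathcal E$ on which every empirical mean lies within its confidence radius, i.e. $|\bar R_{S,a}-r_{S,a}|\le\sqrt{\log T/n}$ and $|\bar C_{S,a}-c_{S,a}|\le\sqrt{\log T/n}$ hold simultaneously for all pairs $(S,a)$ and all sample counts $n\le T$. Sub-Gaussian concentration together with a union bound over the $\sum_S|\mathcal A(S)|$ pairs and the $T$ possible counts gives $\mathbb P(\mathcal E^c)=O(T^{-2})$. On $\mathcal E$ the truncations in~\eqref{eq:reward estimator} and~\eqref{eq:cost estimator} yield both \emph{optimism} and a matching upper bound: since $r_{S_t,a_t}\le r_{\max}$ we get $0\le\hat r_{S_t,a_t}-r_{S_t,a_t}\le 2\sqrt{\log T/N_{S_t,a_t}(t-1)}$, and since $c_{S_t,a_t}\ge c_{\min}$ we get $0\le c_{S_t,a_t}-\check c_{S_t,a_t}\le 2\sqrt{\log T/N_{S_t,a_t}(t-1)}$. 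On $\mathcal E^c$ each per-step gap is at most the constant $r_{\max}-r_{\min}$ (resp. $c_{\max}-c_{\min}$), so the total $\mathcal E^c$ contribution to $\hat R-R$ is at most $(r_{\max}-r_{\min})\,T\,\mathbb P(\mathcal E^c)=O(T^{-1})$, which is negligible.

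It then remains to bound the cumulative confidence width $\mathbb E\big[\sum_{t=1}^T\sqrt{\log T/N_{S_t,a_t}(t-1)}\,\mathbf 1_{\mathcal E}\big]$. I would group the sum by the pulled pair: for a fixed $(S,a)$ the successive pulls see $N_{S,a}(t-1)=1,2,\dots,N_{S,a}(T)-1$, so the per-pair contribution is at most $\sqrt{\log T}\sum_{j=1}^{N_{S,a}(T)-1}j^{-1/2}\le 2\sqrt{\log T}\,\sqrt{N_{S,a}(T)}$. The first pull of each pair, where $N(t-1)=0$ and the radius is singular, is handled separately: the truncation caps the estimate at $r_{\max}$ (resp. $c_{\min}$), contributing only an $O\big(\sum_S|\mathcal A(S)|\big)$ constant. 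Summing over pairs and applying Cauchy--Schwarz, $\sum_{(S,a)}\sqrt{N_{S,a}(T)}\le\sqrt{\big(\sum_S|\mathcal A(S)|\big)\sum_{(S,a)}N_{S,a}(T)}=\sqrt{\big(\sum_S|\mathcal A(S)|\big)T}$, giving cumulative width $O(\sqrt{T\log T})$. Thus $\hat R-R=O(\sqrt{T\log T})$, and the identical argument with the roles of optimism/pessimism reversed gives $C-\check C=O(\sqrt{T\log T})$, completing the reduction set up in the first step.

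The hard part will be the passage through expectations when the sample counts $N_{S_t,a_t}(t-1)$ are themselves random and history-dependent: a fixed-sample Hoeffding bound does not suffice, so the good event must be built from a bound holding \emph{uniformly} over all sample sizes (a peeling or anytime-confidence argument), and the $N=0$ singularity of the radius must be absorbed by the truncation rather than by concentration. Once $\mathcal E$ is in place with $\mathbb P(\mathcal E^c)=O(T^{-2})$, the remaining counting and Cauchy--Schwarz estimates are routine.
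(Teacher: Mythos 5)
Your proposal is correct and its first step coincides with the paper's: both reduce the ratio difference to the two normalized cumulative estimation errors $\frac{1}{Tc_{\min}}\left\vert\sum_t\mathbb{E}[\hat r_{S_t,a_t}-r_{S_t,a_t}]\right\vert$ and $\frac{r_{\max}}{Tc_{\min}^2}\left\vert\sum_t\mathbb{E}[c_{S_t,a_t}-\check c_{S_t,a_t}]\right\vert$ (your algebraic identity is just the paper's triangle-inequality split into $\epsilon(T)+\kappa(T)$ written exactly). Where you diverge is in how the cumulative error $\sum_t\mathbb{E}[\hat r_{S_t,a_t}-r_{S_t,a_t}]$ is controlled. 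The paper splits each summand at a threshold $\zeta$, bounds the excess by the tail integral $\int_\zeta^{r_{\max}}\sum_{s=1}^T\mathbb{P}\bigl(\bar R_{S,a,s}+\sqrt{\log T/s}-r_{S,a}>y\bigr)\,dy$ via Hoeffding, and optimizes $\zeta=\sqrt{\vert\mathcal A\vert P\log T/T}$; this directly controls the positive part of the estimation error in expectation without ever introducing a global good event. You instead condition on a uniform confidence event $\mathcal E$, bound each on-event gap by twice the confidence radius, and finish with the standard $\sum_j j^{-1/2}\le 2\sqrt{N}$ counting plus Cauchy--Schwarz over arms. Both routes are sound and give the same $O(\sqrt{T\log T})$; yours is the more modular textbook argument and is actually more careful about the $N_{S,a}(t-1)=0$ initialization, which the paper sidesteps by assuming (in a footnote) that every pair is pulled once before decisions begin. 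One small caveat on your side: the claim $\mathbb P(\mathcal E^c)=O(T^{-2})$ presumes the radius $\sqrt{\log T/n}$ is matched to the sub-Gaussian parameter (a constant-level issue the paper shares), but since your bad-event term only needs $\mathbb P(\mathcal E^c)=O(T^{-1/2})$ to stay below $O(\sqrt T)$, this does not affect the conclusion.
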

\begin{lemma} \label{lem:rmbound}
Suppose Assumption~\ref{assumption:reward} and~\ref{assumption:cost} hold, DOL-RM achieves
    \begin{equation*}
        \left\vert\theta^*-\frac{\sum_{t=1}^T\mathbb{E}[\hat{r}_{S_t,a_t}]}{\sum_{t=1}^T\mathbb{E}[\check{c}_{S_t,a_t}]}\right\vert = O(T^{-\frac{1}{4}}).
    \end{equation*}
\end{lemma}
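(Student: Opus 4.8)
The plan is to control the second term \eqref{eq:optimal-estimate gap} by analyzing the Robbins--Monro iterate $\theta_t$ through a Lyapunov drift argument. Writing $\check C \triangleq \sum_{t=1}^T \mathbb E[\check c_{S_t,a_t}]$, note $\check C \ge c_{\min}T$, so it suffices to show the numerator $|\sum_{t=1}^T \mathbb E[\hat r_{S_t,a_t} - \theta^*\check c_{S_t,a_t}]| = O(T^{3/4})$. Introducing the one-step ``drift direction'' $d_t \triangleq \hat r_{S_t,a_t} - \theta_t\check c_{S_t,a_t}$ and splitting $\hat r_{S_t,a_t} - \theta^*\check c_{S_t,a_t} = d_t + (\theta_t-\theta^*)\check c_{S_t,a_t}$, the whole lemma reduces to two estimates: $|\sum_t\mathbb E[d_t]| = O(T^{3/4})$ and $|\sum_t\mathbb E[(\theta_t-\theta^*)\check c_{S_t,a_t}]| = O(T^{3/4})$. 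Since $\check c_{S_t,a_t}$ is bounded (essentially by $c_{\max}$) and $|\mathbb E[d_t\mid\mathcal F_{t-1}]|$ will turn out to be $O(|\theta_t-\theta^*|)$ up to the learning bias, both estimates follow once I bound $\sum_{t=1}^T\mathbb E|\theta_t-\theta^*|$; by Cauchy--Schwarz this in turn follows from $\sum_{t=1}^T\mathbb E[(\theta_t-\theta^*)^2] = O(\sqrt T)$ (up to $\log$ factors), which becomes the real target.

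The engine for that target is a contraction for the Lyapunov function $L_t \triangleq \frac12(\theta_t-\theta^*)^2$. Using non-expansiveness of the projection $[\cdot]_{\theta_{\min}}^{\theta_{\max}}$ (legitimate since $\theta^*\in[\theta_{\min},\theta_{\max}]$) together with the boundedness of the truncated estimators, I obtain the standard stochastic-approximation drift $L_{t+1}-L_t \le \eta(\theta_t-\theta^*)d_t + \frac12\eta^2 G^2$, where $G$ bounds $|d_t|$ on the high-probability event. The crux is to show that in conditional expectation this drift is genuinely negative, namely $(\theta_t-\theta^*)\,\mathbb E[d_t\mid\mathcal F_{t-1}] \le -c_{\min}(\theta_t-\theta^*)^2 + \mathrm{err}_t$. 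To this end I define $g(\theta)\triangleq\mathbb E_{S\sim\mathbb P}[\max_{a}(r_{S,a}-\theta c_{S,a})]$ and record its three structural properties: it is convex and decreasing with (sub)gradient in $[-c_{\max},-c_{\min}]$, and $g(\theta^*)=0$ because $\theta^*=\max_\pi \mathbb E[r_{S,\pi(S)}]/\mathbb E[c_{S,\pi(S)}]$. These give the ``restricted strong monotonicity'' bound $(\theta-\theta^*)g(\theta)\le -c_{\min}(\theta-\theta^*)^2$ on all of $[\theta_{\min},\theta_{\max}]$. I then sandwich $\mathbb E[d_t\mid\mathcal F_{t-1}]$ between $g(\theta_t)\pm\mathrm{err}_t$: the optimistic/pessimistic confidence bounds ($\hat r\ge r$, $\check c\le c$ on a high-probability event) applied at the pointwise maximizer of $r_{S_t,\cdot}-\theta_t c_{S_t,\cdot}$ give the lower bound $\mathbb E[d_t\mid\mathcal F_{t-1}]\ge g(\theta_t)$, while inflating every arm's estimate by twice its confidence width $\beta_{t,a}=\sqrt{\log T/N_{S_t,a}(t-1)}$ gives the upper bound $g(\theta_t)+2(1+\theta_{\max})\mathbb E[\max_a\beta_{t,a}\mid\mathcal F_{t-1}]$. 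Multiplying by $(\theta_t-\theta^*)$ and using the monotonicity bound in each sign case yields the desired negative drift, with $\mathrm{err}_t = O(\mathbb E[\max_a\beta_{t,a}\mid\mathcal F_{t-1}])$; a confidence-failure term of total order $O(1)$ is absorbed since everything is bounded off the good event.

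With the contraction $\mathbb E[L_{t+1}]\le(1-2\eta c_{\min})\mathbb E[L_t] + \eta\, O(\mathbb E[\mathrm{err}_t]) + \frac12\eta^2G^2$ in hand, I telescope over $t=1,\dots,T$ and rearrange to get $\sum_{t=1}^T\mathbb E[L_t] = O\!\big(\frac{1}{\eta} + \sum_t\mathbb E[\mathrm{err}_t] + \eta T\big)$. Here the cumulative learning bias obeys $\sum_{t=1}^T\mathbb E[\max_a\beta_{t,a}] = O(\sqrt{T\log T})$ by the usual pull-count argument ($\sum_{n\le N}\sqrt{\log T/n}=O(\sqrt{N\log T})$ per arm, then Cauchy--Schwarz over arms using $\sum_{S,a}N_{S,a}(T)=T$) --- the same quantity already controlled for Lemma~\ref{lem:estimatebound}. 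Substituting $\eta=1/(c_{\min}\sqrt T)$ makes all three terms $O(\sqrt{T\log T})$, so $\sum_t\mathbb E[(\theta_t-\theta^*)^2]=O(\sqrt{T\log T})$; Cauchy--Schwarz then gives $\sum_t\mathbb E|\theta_t-\theta^*|=O(T^{3/4}(\log T)^{1/4})$, which feeds the two estimates of the first paragraph and, after dividing by $\check C=\Omega(T)$, proves the claimed $O(T^{-1/4})$ gap. I expect the main obstacle to be the contraction step: correctly extracting $g(\theta^*)=0$ and the slope bounds from the ratio-optimality of $\theta^*$, and---most delicately---showing that the double-optimistic estimators sandwich $\mathbb E[d_t\mid\mathcal F_{t-1}]$ tightly around $g(\theta_t)$ so that the greedy Lagrangian decision inherits the correct descent direction despite never observing the true means. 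The remaining ingredients (projection non-expansiveness, the telescoped drift, the $O(\sqrt{T\log T})$ confidence-width sum, and the Cauchy--Schwarz passage from squared to absolute deviations) are routine, though care is needed that the $\log T$ factors are either absorbed into the stated $O(T^{-1/4})$ or made explicit.
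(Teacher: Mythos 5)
Your proposal is correct and its engine is the same as the paper's: a Lyapunov drift analysis of $V_t=\tfrac12(\theta_t-\theta^*)^2$ in which the greedy Lagrangian decision, the optimism of $\hat r$ and $\check c$, and the ratio-optimality of $\theta^*$ combine to give a negative drift $-c_{\min}\eta\,\mathbb E[V_t]$ plus an error controlled by cumulative confidence widths of order $O(\sqrt{T\log T})$; telescoping with $\eta=1/(c_{\min}\sqrt T)$ yields $\sum_t\mathbb E[V_t]=O(\sqrt T)$ and Cauchy--Schwarz gives the $T^{3/4}$ cumulative deviation (this is exactly the paper's Lemmas~\ref{lem:drift} and~\ref{lem:sqrtsumbound}). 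Where you genuinely differ is the outer decomposition. The paper proves Lemma~\ref{lem:rm-decomp}, a one-sided case analysis: when $\theta^*$ is below the estimated ratio the gap is absorbed into the double-optimistic error of Lemma~\ref{lem:estimatebound}, and only in the other case does it compare $a_t$ against a near-optimal pair $(r^*,c^*)\in\bar{\mathcal D}$ to reach $\frac{\sqrt{2C_1}}{Tc_{\min}}\sum_t\sqrt{\mathbb E[(\theta_t-\theta^*)^2]}$. You instead write the gap as $|\sum_t\mathbb E[\hat r_{S_t,a_t}-\theta^*\check c_{S_t,a_t}]|/\sum_t\mathbb E[\check c_{S_t,a_t}]$, split the numerator as $d_t+(\theta_t-\theta^*)\check c_{S_t,a_t}$, and bound both pieces two-sidedly by $\sum_t\mathbb E|\theta_t-\theta^*|$ via the function $g(\theta)=\mathbb E_S[\max_a(r_{S,a}-\theta c_{S,a})]$ with $g(\theta^*)=0$ and subgradients in $[-c_{\max},-c_{\min}]$. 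This buys a cleaner, symmetric argument that does not route one branch through Lemma~\ref{lem:estimatebound}, and the $g$-function packaging makes the "restricted strong monotonicity" explicit where the paper argues pointwise through $(r^*,c^*)$ and the closure of $\mathcal D$; the paper's version, in exchange, needs only the one-sided inequality $\mathbb E[\hat r-\theta^*\check c\,|\,\theta_t]\ge(\theta_t-\theta^*)\mathbb E[\check c-c^*\,|\,\theta_t]$ and so avoids proving the upper half of your sandwich.

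One small imprecision to fix: in the upper half of the sandwich the error term should be the confidence width of the \emph{chosen} arm, $\beta_{t,a_t}=\sqrt{\log T/N_{S_t,a_t}(t-1)}$, not $\max_a\beta_{t,a}$. The quantity $\sum_t\mathbb E[\max_a\beta_{t,a}]$ is governed by the least-pulled arm and can be $\Omega(T)$, so the pull-count argument you invoke would not bound it; fortunately the inequality $\hat r_{S_t,a_t}-\theta_t\check c_{S_t,a_t}\le\max_a(r_{S_t,a}-\theta_t c_{S_t,a})+2(1+\theta_{\max})\beta_{t,a_t}$ only ever needs the chosen arm's width, and $\sum_t\mathbb E[\beta_{t,a_t}]=O(\sqrt{MT\log T})$ is precisely what the paper computes in its proof of Lemma~\ref{lem:sqrtsumbound}. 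With that correction the argument closes.
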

Based on these two lemmas, we prove the convergence gap in Theorem~\ref{thm:main} as follows:
\begin{equation*}
    Gap(T) \leq O(T^{-\frac{1}{2}}) + O(T^{-\frac{1}{4}}) = O(T^{-\frac{1}{4}}).
\end{equation*}
Here we only offer order-wise results in $O(\cdot)$ for the sake of presentation and the detailed expressions are delegated to the Appendix. 

Next, we prove Lemma~\ref{lem:estimatebound} and~\ref{lem:rmbound}, respectively.
% We validate these two lemmas for comprehensive proof.
% The first term on the right-hand side represents the gap between the best and estimated ratios, resulting from the iteration of $\theta_{t+1} = [\theta_t + \eta (\hat{r}_{S_t, a_t} - \theta_t \check{c}_{S_t, a_t})]_{\theta_{\min}}^{\theta_{\max}}$. The second term on the right-hand side represents the gap between the estimated and actual ratios, introduced by the UCB technique. However, the first term is challenging to address because it intertwines the estimation of $\theta$ with the estimation of reward and cost through the iteration equation.

% Since the value used to update $\theta$ is $\eta (\hat{r}_{S_t,a_t} - \theta_t \check{c}_{S_t,a_t})$, so it is hard to directly bound the gap between $ \theta^* $ and $\frac{\sum_{t=1}^T\mathbb{E}[r_{S_t,a_t}]}{\sum_{t=1}^T\mathbb{E}[c_{S_t,a_t}]}$. Therefore, we introduce $\frac{\sum_{t=1}^T\mathbb{E}[\hat{r}_{S_t,a_t}]}{\sum_{t=1}^T\mathbb{E}[\check{c}_{S_t,a_t}]}$ as the bridge to prove the bound of convergence gap.

% Then we will prove the bound of the two parts separately.
\subsection{Double-Optimistic Learning Analysis} \label{sec:doanalysis}
Lemma~\ref{lem:estimatebound} represents the estimation error of the cumulative reward-to-cost ratio.
Due to the page limit, we illustrate the key steps in the analysis of double-optimistic learning and the completed version can be found in Appendix C.
%~\ref{app:double optimistic proof}.

Recall that to optimize the reward-to-cost ratio given unknown reward and cost functions, we employ optimistic estimators for both rewards and costs.
This indicates that the following inequalities hold with a high probability according to UCB/LCU learning~\cite{TorCsa_20}
\begin{equation*}
    0 \leq \hat{r}_{S,a} - r_{S,a} \leq 0,~0 \leq  \frac{1}{\hat{c}_{S,a}} - \frac{1}{c_{S,a}} \leq 0. 
\end{equation*}
These guarantee optimistic estimation for cumulative terms $\sum_{t=1}^T r_{S_t,a_t}$ and $\frac{1}{\sum_{t=1}^T c_{S_t,a_t}}$, resulting in an optimistic estimation for cumulative reward-to-cost ratio $\frac{\sum_{t=1}^T\mathbb{E}[r_{S_t,a_t}]}{\sum_{t=1}^T\mathbb{E}[c_{S_t,a_t}]}$.
% With a little abuse of notation, we specify that
% \begin{align*}
%     r_t \triangleq r_{S_t,a_t},~\hat{r}_t \triangleq \hat{r}_{S_t,a_t},~c_t \triangleq c_{S_t,a_t},~\check{c}_t \triangleq \check{c}_{S_t,a_t}.
% \end{align*}

To proceed, we define \textit{partial double-optimistic errors}
\begin{align*}
    \epsilon(T) \triangleq & \left\vert\frac{\sum_{t=1}^T\mathbb{E}[\hat{r}_{S_t,a_t}]}{\sum_{t=1}^T\mathbb{E}[\check{c}_{S_t,a_t}]}-\frac{\sum_{t=1}^T\mathbb{E}[r_{S_t,a_t}]}{\sum_{t=1}^T\mathbb{E}[\check{c}_{S_t,a_t}]}\right\vert, \\
    \kappa(T) \triangleq &\left\vert\frac{\sum_{t=1}^T\mathbb{E}[r_{S_t,a_t}]}{\sum_{t=1}^T\mathbb{E}[\check{c}_{S_t,a_t}]}-\frac{\sum_{t=1}^T\mathbb{E}[r_{S_t,a_t}]}{\sum_{t=1}^T\mathbb{E}[c_{S_t,a_t}]}\right\vert.
\end{align*}

Such errors directly form an upper bound of the double-optimistic learning error in~\eqref{eq:estimate-real bound} via the triangle inequality:
\begin{equation*}
    \left\vert\frac{\sum_{t=1}^T\mathbb{E}[\hat{r}_{S_t,a_t}]}{\sum_{t=1}^T\mathbb{E}[\check{c}_{S_t,a_t}]}-\frac{\sum_{t=1}^T\mathbb{E}[r_{S_t,a_t}]}{\sum_{t=1}^T\mathbb{E}[c_{S_t,a_t}]}\right\vert \leq \epsilon(T) + \kappa(T),
\end{equation*}
We then bound these partial double-optimistic errors as follows:
\begin{align*}
    & \epsilon(T) + \kappa(T) \\
=&\frac{1}{\sum_{t=1}^T\mathbb{E}[\check{c}_{S_t,a_t}]}\left\vert\sum_{t=1}^T\mathbb{E}[\hat{r}_{S_t,a_t}]-\sum_{t=1}^T\mathbb{E}[r_{S_t,a_t}]\right\vert\\&+ \frac{\sum_{t=1}^T\mathbb{E}[r_{S_t,a_t}]}{\sum_{t=1}^T\mathbb{E}[\check{c}_{S_t,a_t}]\sum_{t=1}^T\mathbb{E}[c_{S_t,a_t}]}\left\vert\sum_{t=1}^T\mathbb{E}[c_{S_t,a_t}]-\sum_{t=1}^T\mathbb{E}[\check{c}_{S_t,a_t}]\right\vert\\
\leq& \frac{1}{Tc_{\min}}\left\vert\sum_{t=1}^T\mathbb{E}[\hat{r}_{S_t,a_t} -r_{S_t,a_t}]\right\vert+ \frac{r_{\max}}{Tc_{\min}^2}\left\vert\sum_{t=1}^T\mathbb{E}[c_{S_t,a_t} - \check{c}_{S_t,a_t}]\right\vert,
\end{align*}
% Considering the error of the optimistic estimator for the cumulative reward-to-cost ratio, we have
% \begin{align*}
% &\left\vert\frac{\sum_{t=1}^T\mathbb{E}[\hat{r}_{S_t,a_t}]}{\sum_{t=1}^T\mathbb{E}[\check{c}_{S_t,a_t}]}-\frac{\sum_{t=1}^T\mathbb{E}[r_{S_t,a_t}]}{\sum_{t=1}^T\mathbb{E}[c_{S_t,a_t}]}\right\vert \\
% % \leq& \left\vert\frac{\sum_{t=1}^T\mathbb{E}[\hat{r}_{S_t,a_t}]}{\sum_{t=1}^T\mathbb{E}[\check{c}_{S_t,a_t}]}-\frac{\sum_{t=1}^T\mathbb{E}[r_{S_t,a_t}]}{\sum_{t=1}^T\mathbb{E}[\check{c}_{S_t,a_t}]}\right\vert+\left\vert\frac{\sum_{t=1}^T\mathbb{E}[r_{S_t,a_t}]}{\sum_{t=1}^T\mathbb{E}[\check{c}_{S_t,a_t}]}-\frac{\sum_{t=1}^T\mathbb{E}[r_{S_t,a_t}]}{\sum_{t=1}^T\mathbb{E}[c_{S_t,a_t}]}\right\vert \\
% \leq& \epsilon(T) + \kappa(T) \\
% =&\frac{1}{\sum_{t=1}^T\mathbb{E}[\check{c}_{S_t,a_t}]}\left\vert\sum_{t=1}^T\mathbb{E}[\hat{r}_{S_t,a_t}]-\sum_{t=1}^T\mathbb{E}[r_{S_t,a_t}]\right\vert\\&+ \frac{\sum_{t=1}^T\mathbb{E}[r_{S_t,a_t}]}{\sum_{t=1}^T\mathbb{E}[\check{c}_{S_t,a_t}]\sum_{t=1}^T\mathbb{E}[c_{S_t,a_t}]}\left\vert\sum_{t=1}^T\mathbb{E}[c_{S_t,a_t}]-\sum_{t=1}^T\mathbb{E}[\check{c}_{S_t,a_t}]\right\vert\\
% \leq& \frac{1}{Tc_{\min}}\left\vert\sum_{t=1}^T\mathbb{E}[\hat{r}_{S_t,a_t} -r_{S_t,a_t}]\right\vert+ \frac{r_{\max}}{Tc_{\min}^2}\left\vert\sum_{t=1}^T\mathbb{E}[c_{S_t,a_t} - \check{c}_{S_t,a_t}]\right\vert,
% \end{align*}
where the last inequality holds due to the boundedness of reward and cost.
As shown in the above inequality, we break down the estimated error of cumulative ratio into the estimated errors of rewards and costs.
Consequently, we only need to bound the following terms 
\begin{align*}
  \left\vert\sum_{t=1}^T\mathbb{E}[\hat{r}_{S_t,a_t} -r_{S_t,a_t}]\right\vert,~\left\vert\sum_{t=1}^T\mathbb{E}[c_{S_t,a_t} - \check{c}_{S_t,a_t}]\right\vert,
\end{align*}
which have been widely studied in UCB/LCB learning~\cite{TorCsa_20,LiuLiBShi_21, LiuLiBShiLei_21} and both are in the order of $O(\sqrt{T})$.
% The detailed proof can be found in Appendix \ref{app:double optimistic proof}
Eventually, we have 
\begin{equation*}
    \epsilon(T) + \kappa(T) \leq O(T^{-\frac{1}{2}}),
\end{equation*}
which proves Lemma \ref{lem:estimatebound}.
% The detailed proof can be found in the Appendix \ref{app:double optimistic proof}.
%     &\left\vert\frac{\sum_{t=1}^T\mathbb{E}[\hat{r}_{S_t,a_t}]}{\sum_{t=1}^T\mathbb{E}[\check{c}_{S_t,a_t}]}-\frac{\sum_{t=1}^T\mathbb{E}[r_{S_t,a_t}]}{\sum_{t=1}^T\mathbb{E}[c_{S_t,a_t}]}\right\vert\\
%     &\leq
%     \frac{1}{Tc_{\min}}\vert\sum_{t=1}^T\mathbb{E}[\hat{r}_{S_t,a_t}]-\sum_{t=1}^T\mathbb{E}[r_{S_t,a_t}]\vert\\&+ \frac{r_{\max}}{Tc_{\min}^2}\vert\sum_{t=1}^T\mathbb{E}[c_{S_t,a_t}]-\sum_{t=1}^T\mathbb{E}[\check{c}_{S_t,a_t}]\vert.
% \end{align*}
% To bound $\vert\sum_{t=1}^T\mathbb{E}[\hat{r}_{S_t,a_t}]-\sum_{t=1}^T\mathbb{E}[r_{S_t,a_t}]\vert$, we divide it into two events $\hat{r}_{S_t,a_t}]-\sum_{t=1}^T\mathbb{E}[r_{S_t,a_t} \leq \zeta$ and $\hat{r}_{S_t,a_t}]-\sum_{t=1}^T\mathbb{E}[r_{S_t,a_t} > \zeta$. The second event can be bounded by assuming an arm is pulled every time frame and a split time frame $s=\lceil\frac{\log T}{(1-\frac{1}{r_{\max}})^2y^2}\rceil$. 

% $\vert\sum_{t=1}^T\mathbb{E}[c_{S_t,a_t}]-\sum_{t=1}^T\mathbb{E}[\check{c}_{S_t,a_t}]\vert$ can be bounded in a similar way.

\subsection{Robbins-Monro Iteration Analysis} \label{sec:rmanalysis}
% The first term is bounded by the following theorem.
% \begin{theorem}
%     For a given termination $T$, consider the given algorithm 1, we have
%     \begin{align}
%         \left\vert\theta^*-\frac{\sum_{t=1}^T\mathbb{E}[\hat{r}_{S_t,a_t}]}{\sum_{t=1}^T\mathbb{E}[\check{c}_{S_t,a_t}]}\right\vert \leq O(T^{-\frac{1}{4}}).
%     \end{align}
% \end{theorem}
% While the convergence gap has been divided into two parts, it's important to note that in the first part, the estimated $\theta$ and the estimated reward and cost are still interrelated. To make the problem tractable, we initially examine the first term in two different situations.
Lemma~\ref{lem:rmbound} is the key to establishing the convergence gap and regret for DOL-RM and is also the most challenging part.
We first establish its upper bound with the terms related to $\mathbb{E}[(\theta_t-\theta^*)^2]$ in the following Lemma~\ref{lem:rm-decomp},
% $\sum_{t=1}^{T}\sqrt{\mathbb{E}[(\theta_t-\theta^*)^2]}$ 
we then carefully control the cumulative errors in the Robbins-Monro iteration.
% The following lemma provides a decomposition of the Robbins-Monro Iteration convergence gap:
\begin{lemma}\label{lem:rm-decomp}

Suppose Assumption~\ref{assumption:reward} and~\ref{assumption:cost} hold, under DOL-RM, we bound the Robbins-Monro iteration convergence gap in~\eqref{eq:optimal-estimate gap} as follows:
\begin{align}
    &\left\vert\theta^*-\frac{\sum_{t=1}^T\mathbb{E}[\hat{r}_{S_t,a_t}]}{\sum_{t=1}^T\mathbb{E}[\check{c}_{S_t,a_t}]}\right\vert \nonumber \\
    &\leq \max \left\{ \left\vert\frac{\sum_{t=1}^T\mathbb{E}[\hat{r}_{S_t,a_t}]}{\sum_{t=1}^T\mathbb{E}[\check{c}_{S_t,a_t}]}-\frac{\sum_{t=1}^T\mathbb{E}[r_{S_t,a_t}]}{\sum_{t=1}^T\mathbb{E}[c_{S_t,a_t}]}\right\vert, \right. \label{eq:rm-decomp1} \\
    &\phantom{\leq \max \{} \left. \frac{\sqrt{2C_1}}{Tc_{\min}}\sum\limits_{t=1}^{T}\sqrt{\mathbb{E}\left[(\theta_t-\theta^*)^2\right]} \right\}. \label{eq:rm-decomp2}
\end{align}
\end{lemma}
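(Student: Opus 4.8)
The plan is to pass from the ratio gap to a single signed quantity and then split on its sign, which is what produces the maximum on the right-hand side. Write $U \triangleq \frac{\sum_{t=1}^T\mathbb{E}[\hat r_{S_t,a_t}]}{\sum_{t=1}^T\mathbb{E}[\check c_{S_t,a_t}]}$ for the optimistic ratio and $V \triangleq \frac{\sum_{t=1}^T\mathbb{E}[r_{S_t,a_t}]}{\sum_{t=1}^T\mathbb{E}[c_{S_t,a_t}]}$ for the realized ratio, so the claim reads $|\theta^*-U|\le\max\{|U-V|,\,\mathrm{RM}\}$ where $\mathrm{RM}$ denotes the term in \eqref{eq:rm-decomp2}. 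Because $\check c_{S_t,a_t}\ge c_{\min}$, I can always divide by $\sum_t\mathbb{E}[\check c_{S_t,a_t}]\ge Tc_{\min}$. Two order relations drive the argument: optimism ($\hat r\ge r$, $\check c\le c$) gives $U\ge V$; and optimality of $\theta^*$ for Problem~\eqref{eq:offline} gives $V\le\theta^*$, since for every (history-dependent) policy $\mathbb{E}[r_{S_t,a_t}-\theta^* c_{S_t,a_t}\mid\mathcal H_{t-1}]\le\mathbb{E}_{S_t}[\max_a(r_{S_t,a}-\theta^* c_{S_t,a})]=0$. The i.i.d.\ arrival assumption together with the fixed-point characterization \eqref{eq:intuition} moreover yields the per-step identity $\mathbb{E}_{S\sim\mathbb P}[\max_{a}(r_{S,a}-\theta^* c_{S,a})]=0$, which I will use repeatedly.

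In the first case $U\ge\theta^*$ I would bound the gap by the learning error. Here $|\theta^*-U|=U-\theta^*\le U-V=|U-V|$, where the inequality uses $V\le\theta^*$ and the last equality uses $U\ge V$; this is exactly the term in \eqref{eq:rm-decomp1}, so the maximum dominates. This direction is essentially free and only invokes the two order relations.

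The second case $U<\theta^*$ is the hard part, and it is where the Robbins--Monro term is produced. Writing $\theta^*-U=\frac{-\sum_t\mathbb{E}[\hat r_{S_t,a_t}-\theta^*\check c_{S_t,a_t}]}{\sum_t\mathbb{E}[\check c_{S_t,a_t}]}$, I need a lower bound on $\mathbb{E}[\hat r_{S_t,a_t}-\theta^*\check c_{S_t,a_t}]$. I would decompose $\hat r_{S_t,a_t}-\theta^*\check c_{S_t,a_t}=(\hat r_{S_t,a_t}-\theta_t\check c_{S_t,a_t})+(\theta_t-\theta^*)\check c_{S_t,a_t}$ and lower-bound the first summand using the greedy rule \eqref{eq:decision}: comparing $a_t$ against the pointwise $\theta^*$-optimal action $a^\ast=\arg\max_a(r_{S_t,a}-\theta^* c_{S_t,a})$ and applying optimism gives $\hat r_{S_t,a_t}-\theta_t\check c_{S_t,a_t}\ge (r_{S_t,a^\ast}-\theta^* c_{S_t,a^\ast})+(\theta^*-\theta_t)c_{S_t,a^\ast}$. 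Taking $\mathbb{E}[\cdot\mid\mathcal H_{t-1}]$, the first term vanishes by the per-step fixed-point identity (using that $S_t$ is independent of $\theta_t$ given the history), leaving a cross term $(\theta^*-\theta_t)(c_{S_t,a^\ast}-\check c_{S_t,a_t})$. Cauchy--Schwarz then bounds this below by $-\sqrt{\mathbb{E}[(\theta_t-\theta^*)^2]}\,\sqrt{\mathbb{E}[(c_{S_t,a^\ast}-\check c_{S_t,a_t})^2]}$, and boundedness of the truncated cost estimates controls the second factor by the constant $\sqrt{2C_1}$. Summing over $t$ and dividing by $Tc_{\min}$ reproduces \eqref{eq:rm-decomp2}. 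The main obstacle is precisely this step: correctly marrying the decision rule, optimism, and the fixed-point relation so that the leading term cancels and only the $\sqrt{\mathbb{E}[(\theta_t-\theta^*)^2]}$ factor survives, while keeping the conditioning clean so the fixed-point identity applies.

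Combining the two exhaustive cases gives $|\theta^*-U|\le\max\{|U-V|,\mathrm{RM}\}$, which is the assertion of the lemma. A minor caveat I would address is that the optimism inequalities $\hat r\ge r,\ \check c\le c$ hold only on a high-probability event; I would either condition on that event and absorb its complement into lower-order terms (its probability is $O(T^{-1})$ under the UCB/LCB confidence radii in \eqref{eq:reward estimator}--\eqref{eq:cost estimator}), or fold the correction into the constant $C_1$, neither of which affects the stated order.
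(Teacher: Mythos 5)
Your proposal is correct and follows essentially the same route as the paper: split on the sign of $\theta^*-\frac{\sum_t\mathbb{E}[\hat r_{S_t,a_t}]}{\sum_t\mathbb{E}[\check c_{S_t,a_t}]}$, dispose of the first case via optimality of $\theta^*$ and optimism, and in the second case combine the greedy rule, optimism, and the fixed-point characterization of $\theta^*$ to isolate the cross term $(\theta_t-\theta^*)(\check c_{S_t,a_t}-c^*)$, which Cauchy--Schwarz and boundedness turn into \eqref{eq:rm-decomp2}. The only cosmetic difference is that the paper compares against a limit point $(r^*,c^*)$ in the closure of the achievable reward--cost set rather than the pointwise argmax action, and your explicit handling of the high-probability optimism event is if anything more careful than the paper's ``with probability~1'' claim.
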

We provide the proof sketch by considering two cases:
1) when $\theta^* \leq \frac{\sum_{t=1}^T\mathbb{E}[\hat{r}_{S_t,a_t}]}{\sum_{t=1}^T\mathbb{E}[\check{c}_{S_t,a_t}]}$, the term in~\eqref{eq:rm-decomp1} is upper bound of \eqref{eq:optimal-estimate gap} according to the definition of $\theta^*$.
% To complete the proof of Lemma~\ref{lem:rm-decomp}, it suffices to demonstrate that the convergence gap is bounded by~\eqref{eq:rm-decomp2} for $\frac{\sum_{t=1}^T\mathbb{E}[\hat{r}_{S_t,a_t}]}{\sum_{t=1}^T\mathbb{E}[\check{c}_{S_t,a_t}]} \leq \theta^*$, which can be found in the Appendix~\ref{app:decomposition proof}.
% which can be found in Appendix \ref{app:}.
2) when $\frac{\sum_{t=1}^T\mathbb{E}[\hat{r}_{S_t,a_t}]}{\sum_{t=1}^T\mathbb{E}[\check{c}_{S_t,a_t}]} \leq \theta^*$, we have the greedy decision in our algorithm such that
\begin{align*}
    \mathbb{E}[\hat{r}_{S_t,a_t}-\theta_t\check{c}_{S_t,a_t}\vert\theta_t] 
    \geq& \mathbb{E}[R_{S_t,a^*}\vert\theta_t]-\theta_t\mathbb{E}[C_{S_t,a^*}\vert\theta_t]  \\
    \geq& r^* - \theta_t c^*,
\end{align*} 
where $a^*$ is any decision within $\mathcal A(S_t)$, the first inequality holds due to the greedy decision and $\hat{r}_{S_t,a_t} \geq {r}_{S_t,a_t}, \check{c}_{S_t,a_t} \leq c_{S_t,a_t}$ holds with a high probability;
the second inequality holds because $(r^*, c^*)$ lies within the closure of available decision set.
By adding $(\theta_t - \theta^*)\check{c}_{S_t,a_t}$ on both sides of the above inequality, we further have
\begin{equation*}
    \mathbb{E} [\hat{r}_{S_t,a_t}-\theta^*\check{c}_{S_t,a_t}]\geq\mathbb{E}\left[(\theta_t-\theta^*)(\check{c}_{S_t,a_t}-c^*)\right].
\end{equation*}
Finally, one could use Cauchy-Schwarz inequality to establish~\eqref{eq:rm-decomp2} in Lemma~\ref{lem:rm-decomp}.
More details can be found in the Appendix A.
%~\ref{app:decomposition proof}.

Since we already have the $O(T^{-\frac{1}{2}})$ bound of \eqref{eq:rm-decomp1} in Section~\ref{sec:doanalysis}, the key is to establish the cumulative error
\begin{equation*}
    \sum_{t=1}^{T}\sqrt{\mathbb{E}[(\theta_t-\theta^*)^2]}.
\end{equation*}
We leverage the Lyapunov drift analysis~\cite{Nee_10, SriYin_14} to study this key term.
Note that~\cite{Nee_21} also used this technique to study the cumulative error term, where they assumed the perfect information of rewards and costs such that it can establish anytime error of $\mathbb E[(\theta_t-\theta^*)^2]$ via one-step Lyapunov drift.
However, we do not assume any of such knowledge and quantify the cumulative error term by aggregating the bias over all tasks.
% Nee_10: Michael J. Neely. 2010. Stochastic Network Optimization with Application to Communication and Queueing Systems. Synthesis Lectures on Communication Networks 3, 1 (2010), 1–211.
% SriYin_14: R. Srikant and Lei Ying. 2014. Communication Networks: An Optimization, Control and Stochastic Networks Perspective. Cambridge University Press.

% According to Lemma \ref{lem:rm-decomp}, Given that term~\eqref{eq:rm-decomp1} represents the estimated error of the cumulative reward-to-cost ratio, we have previously bounded it in Section~\ref{sec:doanalysis}. 
% We first have the following inequality:
% \begin{align*}
% &\frac{1}{Tc_{\min}}\sum\limits_{t=1}^{T}\mathbb{E}\left[\vert\theta_t-\theta^*\vert\vert\check{c}_{S_t,a_t}-c^*\vert\right]\\
% \leq &\frac{1}{Tc_{\min}}\sum\limits_{t=1}^{T}\sqrt{\mathbb{E}\left[(\theta_t-\theta^*)^2\right]\mathbb{E}\left[(\check{c}_{S_t,a_t}-c^*)^2\right]}\\
% \leq &\frac{\sqrt{2C_1}}{Tc_{\min}}\sum\limits_{t=1}^{T}\sqrt{\mathbb{E}\left[(\theta_t-\theta^*)^2\right]},
% \end{align*}
% where $C_1 \geq \mathbb{E}[c_{S, a}^2],~\forall a \in \mathcal{A}(S)$. The first inequality arises from the Cauchy-Schwarz inequality, and the second holds because of the definition of $C_1$ and the high probability that $\check{c}\leq c$.
% Thus the proof of Lemma \ref{lem:rmbound} hinges on bounding the cumulative gap between the iteratively updated $\theta_t$ and the optimal ratio $\theta^*$. 
Define the Lyapunov function
\begin{equation*}
    % Q_t \triangleq \theta_t - \theta^*, 
    V_t \triangleq \frac{1}{2}(\theta_t - \theta^*)^2,
\end{equation*}
we analyze the corresponding Lyapunov drift
\begin{equation*}
    \Delta(t) \triangleq V_{t+1} - V_t
\end{equation*}
and have the following lemma. 
% As it is the key to prove our main result, we provide a complete proof of this lemma. 
% Further define the history $\mathcal H_t = \{\hat r_{S_t,a_t}, \hat c_{S_t,a_t}, \theta_t\}.$
\begin{lemma}\label{lem:drift}
    Under DOL-RM, we have the expected Lyapunov drift to be bounded as follows:
    \begin{align*}
        \mathbb{E}[\Delta(t) ] \leq&-2c_{\min}\eta \mathbb{E} [V_t]+\eta ^2 b \\ 
        &+\theta_{gap}\eta  \mathbb{E}\left[\hat{r}_{S_t,a_t} - r_{S_t,a_t}-\theta^*(\check{c}_{S_t,a_t} - c_{S_t,a_t})\right],
    \end{align*}
    where $\theta_{gap} \triangleq (\theta_{max} - \theta_{min})$ and $b \geq \frac{1}{2}\mathbb{E}\left[(\hat{r}_{S_t,a_t}-\theta_t\check{c}_{S_t,a_t})^2\right], \forall t$.
\end{lemma}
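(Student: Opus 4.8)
The plan is to turn the projected Robbins--Monro update into a one--step drift inequality and then split the resulting inner product into a strictly negative ``restoring'' term and an additive optimism bias, treating the two signs of $\theta_t-\theta^*$ separately.

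First I would abbreviate $g_t\triangleq\hat r_{S_t,a_t}-\theta_t\check c_{S_t,a_t}$, so that \eqref{eq:vqupdate} reads $\theta_{t+1}=[\theta_t+\eta g_t]_{\theta_{\min}}^{\theta_{\max}}$. Since $\theta^*\in[\theta_{\min},\theta_{\max}]$ and Euclidean projection onto an interval is non--expansive, $(\theta_{t+1}-\theta^*)^2\le(\theta_t+\eta g_t-\theta^*)^2$. Expanding the square yields
\begin{equation*}
    \Delta(t)\le \eta\, g_t(\theta_t-\theta^*)+\frac{1}{2}\eta^2 g_t^2 .
\end{equation*}
Taking expectations, the quadratic term is absorbed by the definition of $b$, namely $\frac{1}{2}\eta^2\mathbb{E}[g_t^2]\le\eta^2 b$, so the whole argument reduces to bounding the cross term $\mathbb{E}[g_t(\theta_t-\theta^*)]$.

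To handle the cross term I would condition on the history $\mathcal F_t$ (which makes the empirical means, hence $\hat r_{S_t,\cdot},\check c_{S_t,\cdot}$ and $\theta_t$, measurable) and integrate over the fresh task type $S_t$; writing $\Psi(\theta_t)\triangleq\mathbb{E}[g_t\mid\mathcal F_t]$ we get $\mathbb{E}[g_t(\theta_t-\theta^*)\mid\mathcal F_t]=(\theta_t-\theta^*)\Psi(\theta_t)$. When $\theta_t<\theta^*$ I use the greedy inequality already established for Lemma~\ref{lem:rm-decomp} (the greedy action beats the optimal one, together with the high--probability optimism $\hat r\ge r,\ \check c\le c$), which gives $\Psi(\theta_t)\ge r^*-\theta_t c^*=(\theta^*-\theta_t)c^*$ using $r^*=\theta^*c^*$; multiplying by the negative factor $(\theta_t-\theta^*)$ and using $c^*\ge c_{\min}$ yields $(\theta_t-\theta^*)\Psi(\theta_t)\le-c_{\min}(\theta_t-\theta^*)^2=-2c_{\min}V_t$, with no bias term. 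When $\theta_t\ge\theta^*$ I instead split at $\theta^*$: $\Psi(\theta_t)=\mathbb{E}[\hat r_{S_t,a_t}-\theta^*\check c_{S_t,a_t}\mid\mathcal F_t]-(\theta_t-\theta^*)\mathbb{E}[\check c_{S_t,a_t}\mid\mathcal F_t]$. In the first expectation I add and subtract the true means to expose the optimism gap $\hat r_{S_t,a_t}-r_{S_t,a_t}-\theta^*(\check c_{S_t,a_t}-c_{S_t,a_t})$ and bound the leftover $r_{S_t,a_t}-\theta^* c_{S_t,a_t}$ pointwise by $\max_a(r_{S_t,a}-\theta^*c_{S_t,a})$, whose expectation is exactly $0$ by the Dinkelbach characterization of the optimal ratio in \eqref{eq:intuition}; in the second expectation I use $\check c\ge c_{\min}$. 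Multiplying by $0\le\theta_t-\theta^*\le\theta_{gap}$ then produces precisely $-2c_{\min}V_t+\theta_{gap}\,\mathbb{E}[\hat r_{S_t,a_t}-r_{S_t,a_t}-\theta^*(\check c_{S_t,a_t}-c_{S_t,a_t})\mid\mathcal F_t]$. Since this bias term is nonnegative, the same upper bound covers both cases, and taking total expectation and multiplying by $\eta$ gives the claim.

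The step I expect to be the main obstacle is the $\theta_t\ge\theta^*$ case, because it is here that the decision rule interacts with the learning: the greedy action $a_t$ is chosen from the optimistic surrogates rather than the true means, so the ``$\mathbb{E}[r_{S_t,a_t}-\theta^*c_{S_t,a_t}]\le 0$'' bound must be obtained indirectly through $\max_a(r_{S_t,a}-\theta^*c_{S_t,a})$ and the fact that $\theta^*$ is the zero of the Dinkelbach value function, while the discrepancy between the optimistic and true quantities must be charged exactly to the realized confidence widths $\hat r_{S_t,a_t}-r_{S_t,a_t}$ and $c_{S_t,a_t}-\check c_{S_t,a_t}$ in order to recover the stated bias term with $\theta^*$ (and not $\theta_t$) as the coefficient. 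A secondary technical point is that the optimism inequalities $\hat r\ge r$ and $\check c\le c$ hold only on a high--probability event; its complement contributes a lower--order term that I would carry separately and show is negligible once this per--step drift is summed in the proof of Lemma~\ref{lem:rmbound}.
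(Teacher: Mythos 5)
Your proposal matches the paper's proof essentially step for step: non-expansiveness of the interval projection (using $\theta^*\in[\theta_{\min},\theta_{\max}]$), expansion of the square with the quadratic term absorbed into $b$, and the same two-case split on the sign of $\theta_t-\theta^*$, where the case $\theta_t<\theta^*$ uses the greedy/optimism inequality $\mathbb{E}[\hat r_{S_t,a_t}-\theta_t\check c_{S_t,a_t}\mid\theta_t]\geq c^*(\theta^*-\theta_t)$ and the case $\theta_t\geq\theta^*$ splits at $\theta^*$, charges the optimism gap to the confidence widths with coefficient $\theta_{gap}$, and invokes $\sup_{(r,c)\in\bar{\mathcal D}}\{r-\theta^*c\}=0$. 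The only minor divergence is that you flag the optimism inequalities as holding only on a high-probability event to be carried separately, whereas the paper simply asserts $\mathbb{E}[\hat r_{S_t,a}-r_{S_t,a}]\geq 0$ and $\mathbb{E}[c_{S_t,a}-\check c_{S_t,a}]\geq 0$ directly; this does not change the structure of the argument.
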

We in the following provide the proof sketch for this key lemma. 

According to the reward-to-cost ratio learning in~\eqref{eq:vqupdate}, we have
\begin{align*}
    (\theta_{t+1}-\theta^*)^2 
    &=\left(\left[\theta_t + \eta (\hat{r}_{S_t,a_t} - \theta_t \check{c}_{S_t,a_t})\right]_{\theta_{\min}}^{\theta_{\max}} - \theta^*\right)^2\\
    &= \left[ \left(\theta_t + \eta (\hat{r}_{S_t,a_t} - \theta_t \check{c}_{S_t,a_t}) - \theta^*\right)_{\theta_{\min}}^{\theta_{\max}}\right]^2\\
    &\leq (\theta_t + \eta (\hat{r}_{S_t,a_t} - \theta_t \check{c}_{S_t,a_t}) - \theta^*)^2.
\end{align*}
By the definition of the Lyapunov drift $\Delta_t$, we have
\begin{align*}
    \mathbb{E}[\Delta(t)] =& \frac{1}{2}\mathbb{E}[(\theta_{t+1}-\theta^*)^2] - \frac{1}{2}\mathbb{E}[(\theta_{t}-\theta^*)^2]\\
    \leq& \eta^2 b +\eta \mathbb{E}\left[(\theta_t - \theta^*)(\hat{r}_{S_t,a_t} - \theta_t \check{c}_{S_t,a_t})\right].
\end{align*}
Similar to the proof of Lemma~\ref{lem:rm-decomp}, we bound the last term in the above inequality under two cases:

1) For $\theta^* \geq \theta_t$, we can directly get
\begin{align*}
    \mathbb{E}\left[(\theta_t - \theta^*)(\hat{r}_{S_t,a_t} - \theta_t \check{c}_{S_t,a_t})\right] \leq -2c_{\min}\eta \mathbb{E} [V_t],
\end{align*}
since $\mathbb{E}[\hat{r}_{S_t,a_t}-\theta_t\check{c}_{S_t,a_t}\vert\theta_t] \geq r^* - \theta_t c^* = c^*(\theta^* - \theta_t)$.
Then the inequality in Lemma \ref{lem:drift} holds by the fact $ \mathbb{E} [ \hat{r}_{S_t, a_t} - r_{S_t, a_t}] \geq 0, \mathbb{E} [c_{S_t, a_t} -  \check{c}_{S_t, a_t} ] \geq 0 $ which is guaranteed by the property of UCB/LCB.

2) For $\theta^* \leq \theta_t$, given the boundedness of cost, we have
\begin{align*}
    &\mathbb{E}\left[(\theta_t - \theta^*)(\hat{r}_{S_t,a_t} - \theta_t \check{c}_{S_t,a_t})\right] \\
    \leq& \mathbb{E}\left[(\theta_t - \theta^*)(\hat{r}_{S_t,a_t} - \theta^* \check{c}_{S_t,a_t})\right] - c_{min}\mathbb{E}\left[(\theta_t - \theta^*)^2\right].
\end{align*}

We then have the following bound by decomposing the first term and incorporating the fact that $\theta_t - \theta^* \leq \theta_{gap}$, i.e., 
\begin{align*}
    &\mathbb{E}\left[(\theta_t - \theta^*)(\hat{r}_{S_t,a_t} - \theta^* \check{c}_{S_t,a_t})\right] \\
    \leq &\mathbb{E}\left[(\theta_t - \theta^*)(r_{S_t,a_t} - \theta^* c_{S_t,a_t})\right] \\
    &+\theta_{gap} \mathbb{E} \left[(\hat{r}_{S_t,a_t} - r_{S_t,a_t}-\theta^*(\check{c}_{S_t,a_t} 
    - c_{S_t,a_t}))\right].
\end{align*}

Next, drawing upon the definition of the optimal ratio $\theta^*$, which indicates that $\mathbb{E}\left[(r_{S_t,a_t} - \theta^* c_{S_t,a_t})\right] \leq 0,$ we complete the proof of Lemma \ref{lem:drift}.
More details are provided in Appendix B.
%~\ref{app: drift}.

{\bf Proving Lemma~\ref{lem:rmbound}:} 
We rearrange the inequality in Lemma~\ref{lem:drift} and take summation from $1$ to $T$ over it, yielding the following:
% Based on Lemma~\ref{lem:drift}, we can rearrange the inequality in it and take summation from $1$ to $T$ such that:
\begin{align*}
    \sum_{t=1}^T \mathbb{E} [V_t] \leq& -\sum_{t=1}^T \frac{\mathbb{E}[\Delta(t)]}{2c_{\min}\eta } + \frac{T\eta  b}{2c_{min}} \\
    &+ \frac{\theta_{gap}}{2c_{min}} \sum_{t=1}^T  \mathbb{E}\left[\hat{r}_{S_t,a_t} - r_{S_t,a_t}-\theta^*(\check{c}_{S_t,a_t} - c_{S_t,a_t})\right]\\
    \leq& \frac{V_1}{2c_{\min}\eta } + \frac{T\eta  b}{2c_{min}} + O(\sqrt{T}) = O(\sqrt{T}),
\end{align*}
where the second inequality holds because of $\sum_{t=1}^T \Delta(t) = V_{T}-V_1$ and the fact that the estimation errors are at the order of $O(\sqrt{T})$ according to UCB/LCB learning (similarly stated in Section~\ref{sec:doanalysis});
the last equality holds by plugging the learning rate $\eta=O(1/\sqrt{T})$.

According to the Cauchy-Schwarz inequality, we have,
% \begin{equation*}
%     \sum_{t=1}^T \mathbb{E} \left[ \sqrt{Q_t^2} \right] \leq \sqrt{T \sum_{t=1}^T \mathbb{E}[Q_t^2]},
% \end{equation*}
\begin{equation*}
    \sum_{t=1}^T \sqrt{\mathbb{E}[V_t]} \leq \sqrt{T \sum_{t=1}^T \mathbb{E}[V_t]},
\end{equation*}
which implies that the key cumulative error is at the order of $O(T^{\frac{3}{4}})$ in the following lemma.
\begin{lemma}\label{lem:sqrtsumbound}
    Under DOL-RM, we have
    \begin{equation*}
        \sum\limits_{t=1}^{T}\sqrt{\mathbb{E}\left[(\theta_t-\theta^*)^2\right]} = O(T^{\frac{3}{4}}).
    \end{equation*}
\end{lemma}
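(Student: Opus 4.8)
The plan is to convert the aggregate bound on $\sum_{t=1}^T \mathbb{E}[V_t]$ into the claimed bound on $\sum_{t=1}^T \sqrt{\mathbb{E}[V_t]}$ by a single application of the Cauchy--Schwarz inequality, and then to translate back through the definition $V_t = \tfrac{1}{2}(\theta_t - \theta^*)^2$. The heavy lifting has already been done in the drift analysis, so the lemma itself reduces to a short averaging argument over the horizon.

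First I would start from the expected Lyapunov drift bound of Lemma~\ref{lem:drift}, rearrange it to isolate $\mathbb{E}[V_t]$, and sum over $t = 1, \dots, T$. The drift terms telescope via $\sum_{t=1}^T \Delta(t) = V_{T} - V_1 \le V_T$, while the double-optimistic estimation-bias term aggregates to $O(\sqrt{T})$ exactly as in Section~\ref{sec:doanalysis} by the standard $O(\sqrt{T})$ UCB/LCB control of $\sum_t \mathbb{E}[\hat r_{S_t,a_t}-r_{S_t,a_t}]$ and $\sum_t \mathbb{E}[c_{S_t,a_t}-\check c_{S_t,a_t}]$. Substituting the learning rate $\eta = 1/(c_{\min}\sqrt{T})$ balances the two leading contributions $\tfrac{V_1}{2c_{\min}\eta}$ and $\tfrac{T\eta b}{2c_{\min}}$, each of order $\sqrt{T}$, so that $\sum_{t=1}^T \mathbb{E}[V_t] = O(\sqrt{T})$.

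The decisive step is then Cauchy--Schwarz applied to the sequence $\{\sqrt{\mathbb{E}[V_t]}\}_t$ against the all-ones vector:
\begin{equation*}
    \sum_{t=1}^T \sqrt{\mathbb{E}[V_t]} \le \sqrt{T}\,\sqrt{\sum_{t=1}^T \mathbb{E}[V_t]} = \sqrt{T}\cdot O(T^{1/4}) = O(T^{3/4}).
\end{equation*}
Finally, since $\sqrt{\mathbb{E}[V_t]} = \tfrac{1}{\sqrt{2}}\sqrt{\mathbb{E}[(\theta_t-\theta^*)^2]}$, multiplying by the constant $\sqrt{2}$ transfers the bound to $\sum_{t=1}^T \sqrt{\mathbb{E}[(\theta_t-\theta^*)^2]} = O(T^{3/4})$, which is the assertion.

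The genuine difficulty lies upstream rather than in this lemma: establishing $\sum_{t=1}^T \mathbb{E}[V_t]=O(\sqrt{T})$ depends on the telescoping cancellation of the drift and on the $O(\sqrt{T})$ control of the cumulative bias, both of which I would treat as given via Lemma~\ref{lem:drift} and Section~\ref{sec:doanalysis}. The subtlety worth flagging is that one cannot bound each summand $\sqrt{\mathbb{E}[V_t]}$ individually and add them up, since the per-step mean-square deviation need not decay; it is precisely the averaging power of Cauchy--Schwarz over the full horizon that upgrades the $O(\sqrt{T})$ aggregate of expectations into the $O(T^{3/4})$ sum of square roots.
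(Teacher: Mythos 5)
Your proposal is correct and follows essentially the same route as the paper: sum the drift inequality of Lemma~\ref{lem:drift}, telescope, invoke the $O(\sqrt{T})$ cumulative UCB/LCB bias bound, substitute $\eta = 1/(c_{\min}\sqrt{T})$ to obtain $\sum_{t=1}^T \mathbb{E}[V_t]=O(\sqrt{T})$, and finish with Cauchy--Schwarz against the all-ones vector. The only cosmetic difference is that the paper's appendix re-derives the bias term explicitly as $2(1+\theta_{\max})\sqrt{T\log T}$ from the bonus $\sqrt{\log T/N_{S_t,a_t}(t)}$ rather than citing Section~\ref{sec:doanalysis}, and your telescoping identity should read $\sum_{t=1}^T\Delta(t)=V_{T+1}-V_1$, though your conclusion that the surviving term is $V_1/(2c_{\min}\eta)$ is the right one.
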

% \begin{lemma}\label{lem:sqrtsumbound}
%     Under DOL-RM, we have
%     \begin{align}           \sum\limits_{t=1}^{T}\sqrt{\mathbb{E}\left[(\theta_t-\theta^*)^2\right]} \leq \sqrt{C_2}T^{\frac{3}{4}},
%     \end{align}
%     where $C_2 \triangleq \frac{\theta_{gap}^2}{2}+\frac{b}{c_{\min}^2}+\frac{2(1+\theta_{\max})\theta_{gap}\sqrt{\log T}}{c_{\min}}$.
% \end{lemma}
Finally, by combining Lemma~\ref{lem:estimatebound} and~\ref{lem:sqrtsumbound} into Lemma~\ref{lem:rm-decomp}, we establish the Robbins-Monro iteration convergence gap as follows:
\begin{equation*}
    \left\vert\theta^*-\frac{\sum_{t=1}^T\mathbb{E}[\hat{r}_{S_t,a_t}]}{\sum_{t=1}^T\mathbb{E}[\check{c}_{S_t,a_t}]}\right\vert 
    \leq \max \left\{ O(T^{-1/2}), O(T^{-1/4})\right\},
\end{equation*}
which proves Lemma~\ref{lem:rmbound}.

\begin{figure*}[!t]
    \centering
    \begin{subfigure}[Two-Task-Type $p=0.8$.]{
        \includegraphics[width=0.31\textwidth]{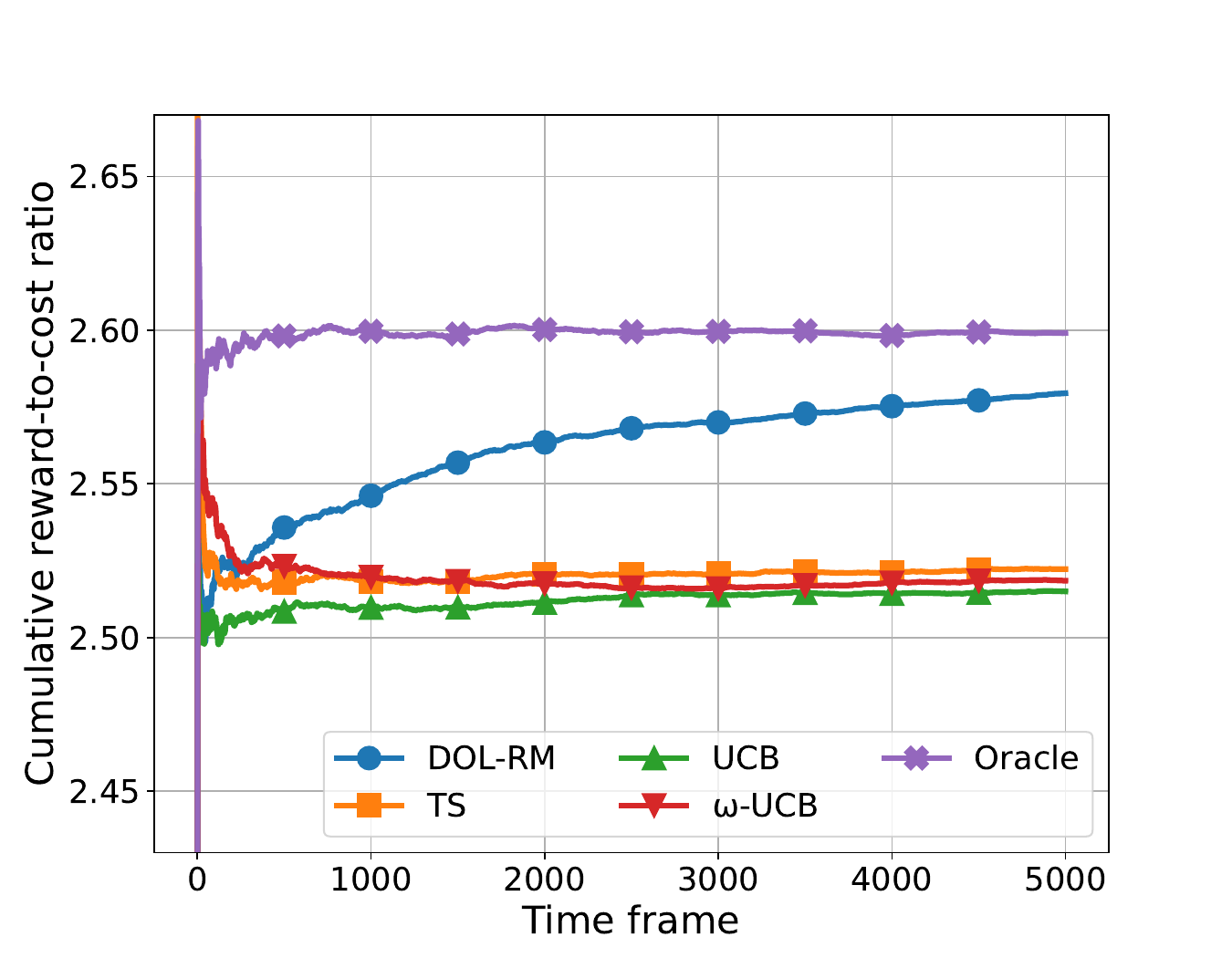}\label{fig:synthetic exp1}}
    \end{subfigure}
    \begin{subfigure}[Two-Task-Type $p=0.6$.]{
        \includegraphics[width=0.31\textwidth]{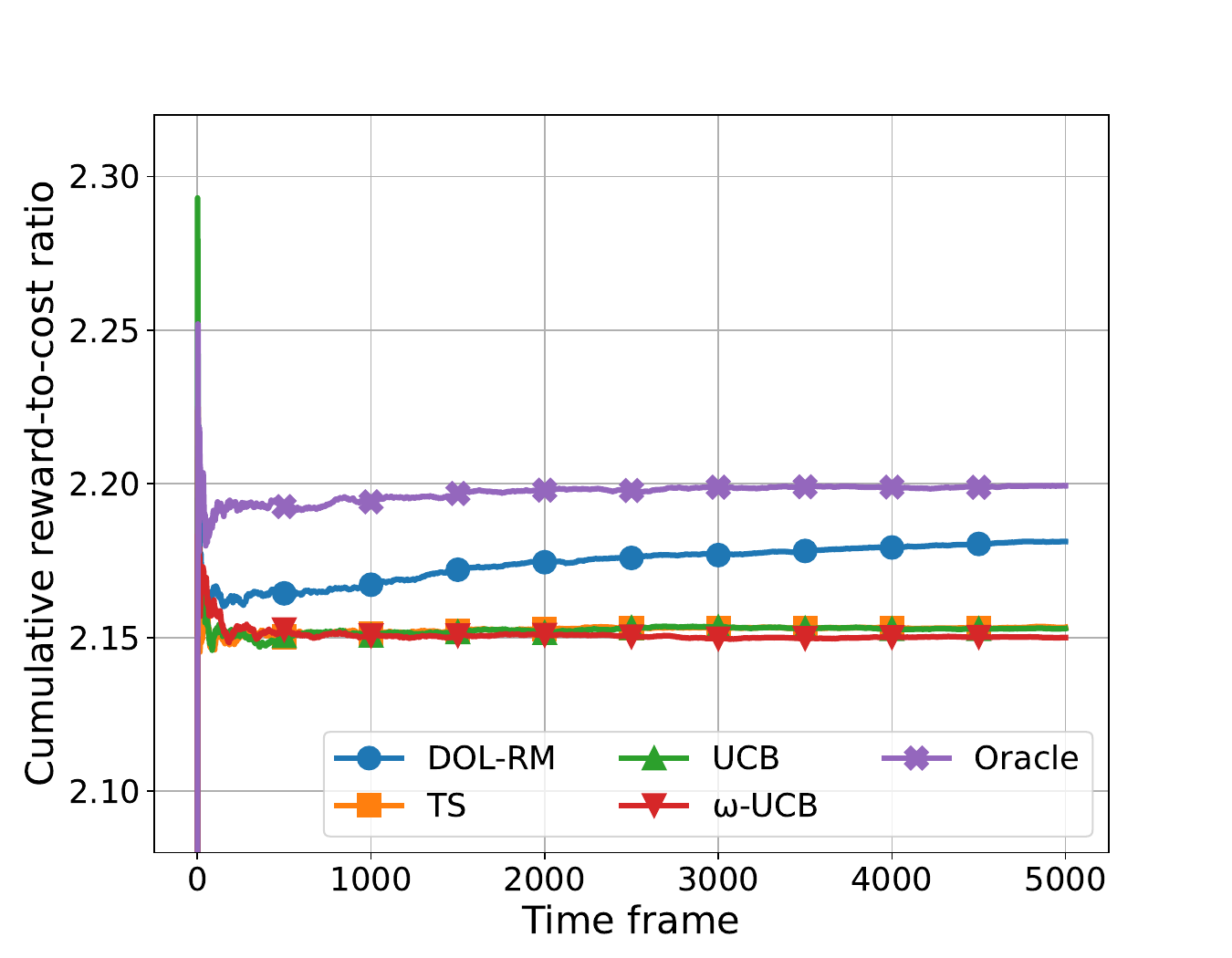}\label{fig:synthetic exp2}}
    \end{subfigure}
    \begin{subfigure}[Seven-Task-Type.]{
        \includegraphics[width=0.31\textwidth]{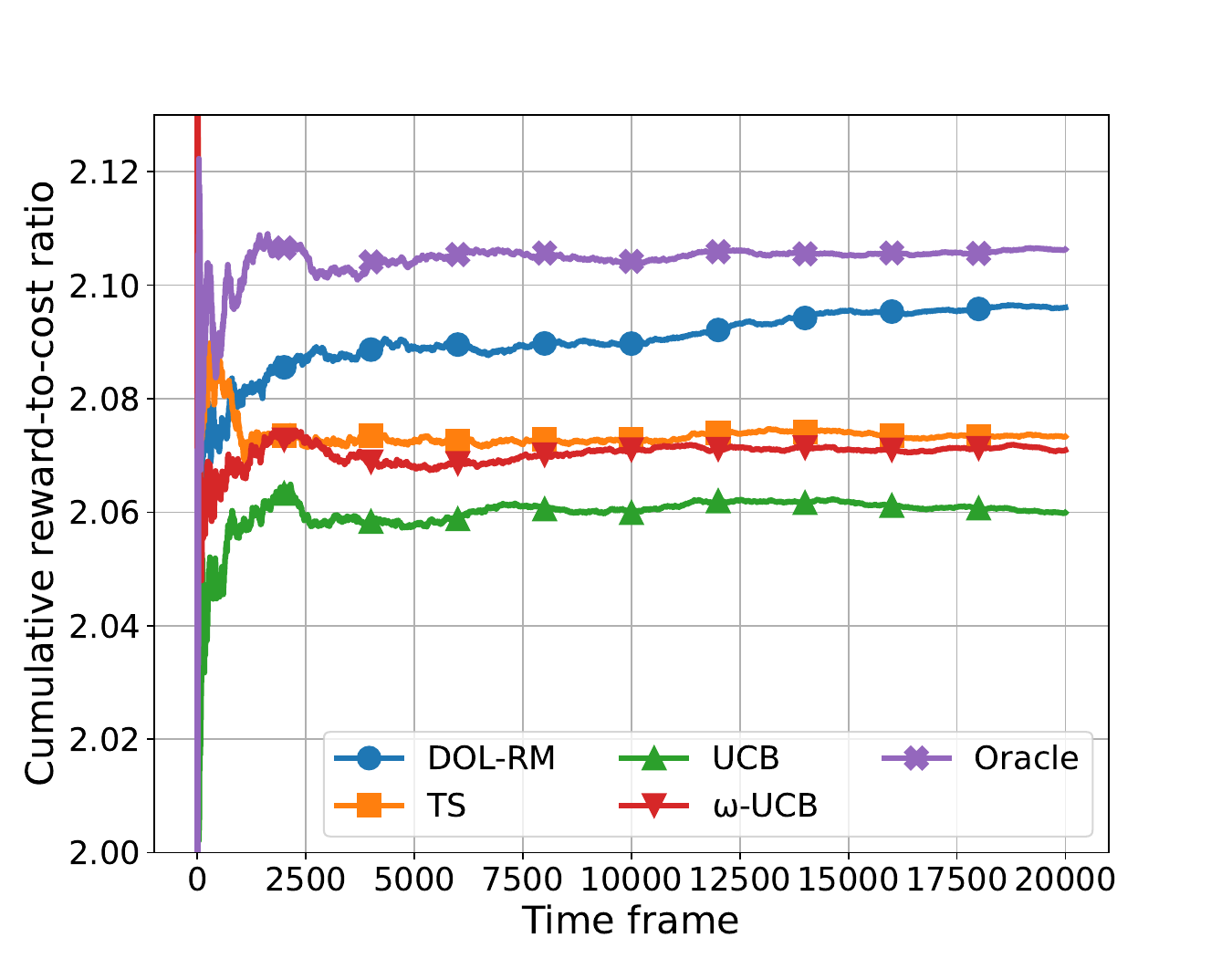}\label{fig:synthetic data}}
    \end{subfigure}
    % \vspace{-5mm}
    \caption{Synthetic Experiment: Performance Comparison in a Two-Task-Type Case and Seven-Task-Type Case.}
    \label{fig:synthetic exp}
\end{figure*}

%\begin{figure*}[!t]
%    \centering
%    \begin{subfigure}[ML Model Training on a Shared Server.]{
%        \includegraphics[width=0.31\textwidth]{}
%        \label{fig:experiment setting}
%        }
%    \end{subfigure}
%    \begin{subfigure}[$p=0.8$.]{
%        \includegraphics[width=0.32\textwidth]{}
%        \label{fig:real data1}}
%    \end{subfigure}
%    \begin{subfigure}[$p=0.6$.]{
%        \includegraphics[width=0.32\textwidth]{}
%        \label{fig:real data2}}
%    \end{subfigure}
%    % \vspace{-5mm}
%    \caption{Real-World Experiment: Machine Learning Task Scheduling.}
%\end{figure*}

\begin{figure}[!t]
    \centerline{\includegraphics[width=0.35\textwidth]{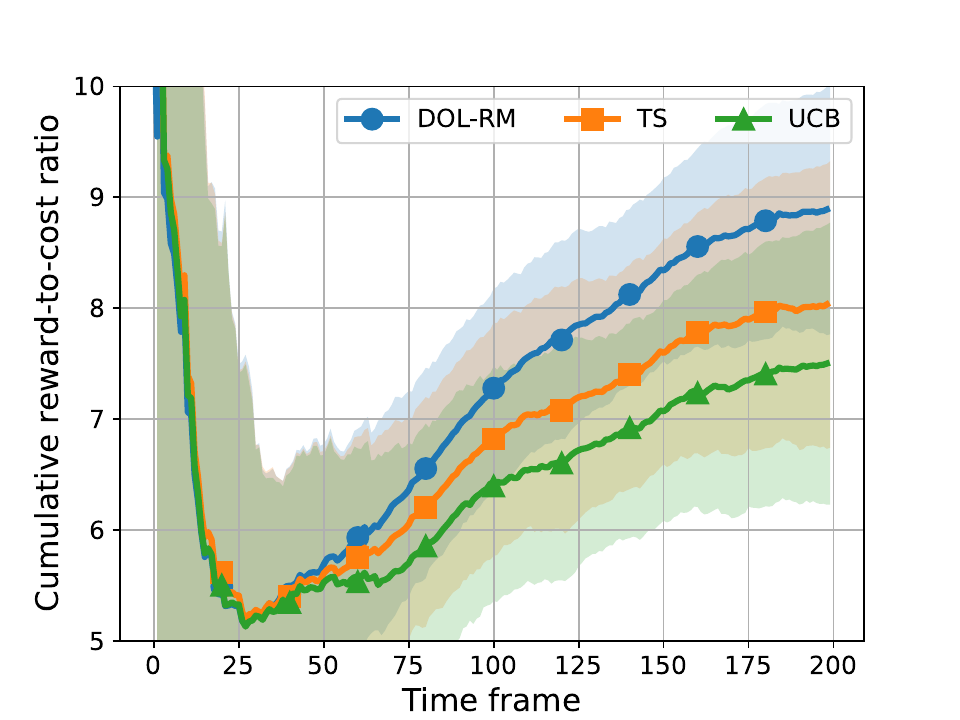}}
    \caption{Real-World Experiment.}
    \label{fig:real data}
\end{figure}

\section{Numerical Results}
% In this section, we evaluate the DOL-RM algorithm through numerical experiments and real-world scenarios. 
% Note that we employ adaptive step sizes $\eta_t = \frac{1}{c_{min}(t+1)}$ instead of the fixed $\eta$ in Algorithm~\ref{alg:DOL-RM}, as this setup has proven to enhance the algorithm's speed in practice.
% We plot the cumulative reward-to-cost ratio $\frac{\sum_1^t R_{S_t, a_t}}{\sum_1^t C_{S_t, a_t}}$ and compare our algorithm with the existing baselines. 
% {\noindent \bf Basic Setting.}
We evaluate DOL-RM via both synthetic simulation and real-world experiment in terms of the cumulative reward-to-cost ratio $\frac{\sum_1^t R_{S_t, a_t}}{\sum_1^t C_{S_t, a_t}}$.
For both cases, we let learning rate $\eta_t = 1/{c_{min}(t+1)}$ in our DOL-RM algorithm. 

{\noindent \bf Baselines:}
We consider the following representative baselines to justify DOL-RM's performance:
\begin{itemize}
    \item General Thompson Sampling (TS)~\cite{DanBenAbb_18}.
    \item Classic Upper Confidence Bound (UCB) algorithm~\cite{TorCsa_20}.
    \item $\omega$-UCB~\cite{MarVadEdo_23}, a variance-aware cost-efficient algorithm.
    \item Oracle algorithm~\cite{Nee_21}, which needs prior knowledge of reward and cost and an optimal solution\footnote{We adopt this algorithm as an ideal benchmark as it possesses full knowledge before decision-making, allowing us to verify whether DOL-RM converges to the optimal ratio. However, it cannot be applied in the real world due to the lack of prior knowledge.}.
\end{itemize}
% and shows that DOL-RM performs better when $p > \frac{1}{2}$.
% Also, we figure out the gap between our algorithm and algorithms that need more information, 

\subsection{Synthetic Simulation}
\textbf{\textbullet Two-Task-Type}
% {\noindent \bf Simulation Setting.}
We consider a synthetic setting similar to the toy example presented in \figurename~\ref{fig:Example}.
We consider two incoming typs of tasks with arrival rates of $\{p, 1-p\}$ and reward-cost vectors of $\{[(3,1)],[(3,2),(1,1)]\}$.
% In this task scheduling problem, we consider two types of incoming tasks, denoted by $\mathcal S = \{ S_x, S_y \}$.
% For task $S_x$, we set the singleton $\mathcal{A}(S_x) = \{a_{\romannumeral1}\}$ as its available decision set, with an expected reward $r_{S_x, a_{\romannumeral1}} = 3$ and an expected cost $c_{S_x,a_{\romannumeral1}} = 1$.
% For task $S_y$, the available decision set is $\mathcal{A}(S_y) = \{a_{\romannumeral2}, a_{\romannumeral3}\}$.
% The expectations associated with decision $a_{\romannumeral2}$ are set as $r_{S_y, a_{\romannumeral2}} = 3$ and $c_{S_y, a_{\romannumeral2}} = 2$, respectively.
% For decision $a_{\romannumeral3}$, its expected reward and cost are $r_{S_y, a_{\romannumeral3}} = 1$ and $c_{S_y, a_{\romannumeral3}} = 1$, respectively.
The observations $R_{S, a}$ and $C_{S, a}$ are corrupted with additive Gaussian noise sampled from $\mathcal{N}(0,1)$.

We let $p$ and $1-p$ be the arrival probability of task $S_x$ and $S_y$, respectively.
To verify the robustness of DOL-RM against varied task arrival distributions, we test it under two task arrival patterns, quantified by $p=0.8$, and $p=0.6$ in~\figurename~\ref{fig:synthetic exp1} and ~\ref{fig:synthetic exp2}.

\textbf{\textbullet Seven-Task-Type}
We consider a more complex task scheduling problem with 7 types of incoming tasks with arrival rates of $\{0.3,0.1,0.2,0.1,0.05,0.1,0.15\}$ and reward-cost vectors of $\{[(3,1)],$ $[(3,2),(1,1)],$ $[(2,1)],$ $[(2.5,1.5)],$ $[(2,1),(1,1)],$ $[(3,2),$ $(1.5,1.5)],$ $[(2.5,1)]\}$. The observations $R_{S, a}$ and $C_{S, a}$ are corrupted with additive Gaussian noise sampled from $\mathcal{N}(0, 1)$.

\figurename~\ref{fig:synthetic exp} demonstrates that DOL-RM outperforms the state-of-the-art learning-based algorithms in terms of the cumulative reward-to-cost ratio.
DOL-RM's superior performance is consistent across two different cases, showcasing its high adaptability.
Compared to the oracle algorithm~\cite{Nee_21}, which has full prior knowledge and the optimal ratio in hindsight, DOL-RM exhibits a fast convergence rate, indicating its efficiency in identifying the optimal policy under certainty.
% In the simulated data experiment, we use a toy example to show that DOL-RM outperforms algorithms that only consider tasks separately. Assume that there are two types of tasks. One type of task only has one policy, which returns mean reward 3 and mean cost 1. The other one has two policies. One of them returns mean reward 3 and mean cost 2 and the other one returns mean reward 1 and mean cost 1. The observed real reward and cost are 1-sub-gaussian variables. Let the probability of the first type of task appear to be $p$, and the probability of the second type of task appear to be $1 - p$.
% \figurename~\ref{fig:simulate experiment} shows the performance of DOL-RM and other algorithms when $p$ varies. It is clear that when $p > \frac{1}{2}$, our algorithm performs better than Thompson Sampling, UCB, and $\omega$-UCB. And the gap between our algorithm and fast learning and the optimal solution isn't quite big and is getting closer to then as $T$ increases.

%\begin{figure}[!t]
%    \centerline{\includegraphics[width=0.45\textwidth]{ExpFig.pdf}}
%    \caption{Application on ML Task Scheduling.}
%    \label{fig:experiment setting}
%\end{figure}

\subsection{Real-World Experiment}

% {\noindent \bf Machine Learning (ML) Task Scheduling.}
We apply DOL-RM to schedule machine learning (ML) training tasks in a shared server. %as shown in~\figurename~\ref{fig:experiment setting}.
Specifically, we consider five types of ML classification tasks in the system, where a satellite image classification task $S_{s}$ with four categories
%$\{$cloudy, desert, green area, water$\}$
, a weather classification task $S_{w}$ and a rice classification task $S_{r}$ with five categories 
%$\{$sunny, rainy, foggy, cloudy, sunrise$\}$
%, a rice classification task $S_{r}$ with five categories 
%$\{$Arborio, Basmati, Ipsala, Jasmine, Karacadag$\}$
, a natural scene classification task $S_{n}$ with six categories 
%$\{$mountain, street, glacier, buildings, sea, forest$\}$ 
and a cat \& dog classification task $S_{c}$.
Let $\mathcal{S} = \{ S_{s}, S_{w}, S_{r}, S_{n}, S_{c} \}$.
For each classification task, the server decides a training epoch from a range of options.
The available options on epoch number are within $\mathcal{A}(S_s) = \mathcal{A}(S_r) =\mathcal{A}(S_c) = \{1, 2, 3,$ $ 4, 5\}$ and $\mathcal{A}(S_w) = \mathcal{A}(S_n) = \{1, 5, 10, 15, 20\}$\footnote{For satellite image classification $S_{s}$, each epoch contains $141$ steps. For weather classification $S_{w}$, each epoch contains $38$ steps. For rice classification $S_{r}$, each epoch contains $100$ steps. For natural scene classification $S_{n}$, each epoch contains $150$ steps. For cat \& dog classification $S_{c}$, each epoch has $30$ steps.}, respectively.
After training the model with an epoch number $a_t \in \mathcal{A}(S_{t})$ for a classification task $S_t \in \mathcal{S}$, we can obtain the AUC on the test set as reward $R_{S_t,a_t}$ and observe the training time  as cost $C_{S_t,a_t}$\footnote{We conduct min-max normalization on reward and cost to facilitate practical training.}.

% {\noindent \bf Model Training Setup.}
%For the satellite image classification $S_{s}$, 
%we utilize a $2$-layer convolution neural network with $3 \times 3$ kernels and two $1$-layer convolution neural network with $3 \times 3$ kernels, followed by a dropout layer and two dense layers~\cite{YanLeoYos_98}.
The dataset of satellite image classification has more than $1000$ pictures in each category of satellite remote-sensing images ~\cite{RED_21}.
All the pictures are resized into $255 \times 255$ and $20\%$ of them are separated as the test set.
%For the weather classification $S_{w}$, 
%we implement a vgg19 neural network augmented with a dense layer~\cite{SimZis_15}. 
The dataset of weather classification contains boasting over $300$ pictures of weather images in each category ~\cite{GUP_19}.
All the pictures are resized into $150 \times 150$ and we construct the test set with $20\%$ of the pictures.
%For the rice classification $S_{r}$, 
%we implement a $1$-layer convolution neural network with $3 \times 3$ kernels, followed by a dropout layer and two dense layers.
We use the first $1000$ pictures of rice images in each class in ~\cite{KOK_22}.
The pictures are all resized into $224 \times 224$ and $20\%$ of them are reserved as the test set.
%For nature scene classification $S_{n}$, 
%we utilize two $1$-layer convolution neural network with $3 \times 3$ kernels, followed by two dense layers.
We take the first $500$ pictures in each category in the training set of ~\cite{BAN_19} as training data and utilize the test set of ~\cite{BAN_19} as test data.
All the pictures are resized into $150 \times 150$.
%For cat \& dog classification $S_{c}$, 
%we utilize three groups of $2$-layer convolution neural network with $3 \times 3$ kernels, followed by two dense layers.
The database of cat \& dog classification contains $11875$ pictures for each class of cat or dog~\cite{SAC_19}, where the pictures are also scaled into size $150\times150$ and $10\%$ of these pictures are reserved as the test set. 
Our experiment was conducted on an \textit{RTX 3080 Ti} GPU, running a 64-bit Ubuntu 18.04 system.
The detailed structures and parameters of neural network models can be found in Appendix E.
Let the arrival probability of $\{S_{s}, S_{w}, S_{r}, S_{n}, S_{c}\}$ be $\{0.1, 0.4, 0.2,$ $ 0.2, 0.1\}$.
% Let the arrival probability of $S_w$ be $p$, we consider two cases $p=0.8$ and $p=0.6$ to demonstrate the robustness of DOL-RM.
As $\omega$-UCB~\cite{MarVadEdo_23} and oracle algorithm~\cite{Nee_21} requires non-casual information, which is infeasible in the experiment, we compare DOL-RM only with the (classical) UCB and TS.
\figurename~\ref{fig:real data} plot the cumulative reward-to-cost ratio for these algorithms where the light-shaded areas indicate the corresponding standard deviation.
These results show DOL-RM outperforms the baselines significantly and demonstrate that DOL-RM can converge to a better policy in the real-world system even without any prior information. 
% learn the statistics even without and converge to the optimal policy at an optimal rate, even under potentially adverse task distributions.
% We choose $p=0.8$ and $p=0.6$ to compare the performance of DOL-RM with Thompson Sampling and classic UCB algorithm. We take 10 sample paths of both algorithms for every $p$ and get the averaged path, which is shown in Fig \ref{fig:real data}. We can find that our algorithm performs better than Thompson Sampling and UCB algorithm.

%%%%%%%%%%%%%%%%%%%%%%%%%%%%%%%%%%%%%%%%%%%%%%%%%%%%%%%%%%%%%%%%%%%%%%%%

%\begin{figure}[!t]
%    \centerline{\includegraphics[width=0.4\textwidth]{exp4.pdf}}
%    \caption{Real-World Experiment: ML Task Scheduling.}
%    \label{fig:real data}
%\end{figure}

\section{Conclusion}
% We studied the renewal system with unknown task distribution, reward, and cost. We proposed an alternative UCB algorithm that can take the tasks' distribution into account. We separated the convergence gap into two parts and proved they have a sub-linear bound of $O(T^{-\frac{1}{6}})$. We also showed that our algorithm is as efficient as Thompson Sampling when there is not a suitable distribution and is more efficient when the distribution is suitable.

% In the future, we are going to reduce the bound to  $O(\frac{1}{\sqrt{T}})$, and try to extend the problem into contextual bandit or non-linear bandit.
In this paper, we initiated the study on a novel formulation of the online task scheduling problem, where the task arrival distribution, rewards, and costs are all unknown.
Guided by double-optimistic learning and Robbins-Monro method, we proposed an effective and efficient algorithm DOL-RM which integrates optimistic estimations and stochastic approximation of balancing point.
We theoretically demonstrated its superior performance with a simultaneous achievement of sub-linear regret bound and fast learning.
Via justification in synthetic and real-world scenarios, we not only showed the outperformance of DOL-RM over state-of-the-art baselines but also envisioned enormous potential applications of our modeling and algorithm design.
% We also present an open problem of whether the regret can be reduced to $O(\sqrt{T})$ when both rewards and costs are not available in hindsight.

%%%%%%%%%%%%%%%%%%%%%%%%%%%%%%%%%%%%%%%%%%%%%%%%%%%%%%%%%%%%%%%%%%%%%%%%

%%% The acknowledgments section is defined using the "acks" environment
%%% (rather than an unnumbered section). The use of this environment 
%%% ensures the proper identification of the section in the article 
%%% metadata as well as the consistent spelling of the heading.

\begin{acks}
The work was partly supported by the Shanghai Sailing Program 22YF1428500, the National Nature Science Foundation of China under grant 62302305. Corresponding author: Xin Liu.
\end{acks}

%%%%%%%%%%%%%%%%%%%%%%%%%%%%%%%%%%%%%%%%%%%%%%%%%%%%%%%%%%%%%%%%%%%%%%%%

%%% The next two lines define, first, the bibliography style to be 
%%% applied, and, second, the bibliography file to be used.

\bibliographystyle{ACM-Reference-Format} 
\balance
\bibliography{sample}

%%% -*-BibTeX-*-
%%% Do NOT edit. File created by BibTeX with style
%%% ACM-Reference-Format-Journals [18-Jan-2012].

\begin{thebibliography}{45}

%%% ====================================================================
%%% NOTE TO THE USER: you can override these defaults by providing
%%% customized versions of any of these macros before the \bibliography
%%% command.  Each of them MUST provide its own final punctuation,
%%% except for \shownote{}, \showDOI{}, and \showURL{}.  The latter two
%%% do not use final punctuation, in order to avoid confusing it with
%%% the Web address.
%%%
%%% To suppress output of a particular field, define its macro to expand
%%% to an empty string, or better, \unskip, like this:
%%%
%%% \newcommand{\showDOI}[1]{\unskip}   % LaTeX syntax
%%%
%%% \def \showDOI #1{\unskip}           % plain TeX syntax
%%%
%%% ====================================================================

\ifx \showCODEN    \undefined \def \showCODEN     #1{\unskip}     \fi
\ifx \showDOI      \undefined \def \showDOI       #1{#1}\fi
\ifx \showISBNx    \undefined \def \showISBNx     #1{\unskip}     \fi
\ifx \showISBNxiii \undefined \def \showISBNxiii  #1{\unskip}     \fi
\ifx \showISSN     \undefined \def \showISSN      #1{\unskip}     \fi
\ifx \showLCCN     \undefined \def \showLCCN      #1{\unskip}     \fi
\ifx \shownote     \undefined \def \shownote      #1{#1}          \fi
\ifx \showarticletitle \undefined \def \showarticletitle #1{#1}   \fi
\ifx \showURL      \undefined \def \showURL       {\relax}        \fi
% The following commands are used for tagged output and should be
% invisible to TeX
\providecommand\bibfield[2]{#2}
\providecommand\bibinfo[2]{#2}
\providecommand\natexlab[1]{#1}
\providecommand\showeprint[2][]{arXiv:#2}

\bibitem[\protect\citeauthoryear{Agarwal and Jain}{Agarwal and Jain}{2014}]%
        {DrASal_14}
\bibfield{author}{\bibinfo{person}{Dr~Amit Agarwal} {and}
  \bibinfo{person}{Saloni Jain}.} \bibinfo{year}{2014}\natexlab{}.
\newblock \showarticletitle{Efficient Optimal Algorithm of Task Scheduling in
  Cloud Computing Environment}.
\newblock \bibinfo{journal}{\emph{International Journal of Computer Trends and
  Technology}} \bibinfo{volume}{9}, \bibinfo{number}{7} (\bibinfo{year}{2014}),
  \bibinfo{pages}{344--349}.
\newblock


\bibitem[\protect\citeauthoryear{Alabbadi and Abulkhair}{Alabbadi and
  Abulkhair}{2021}]%
        {AfrMay_21}
\bibfield{author}{\bibinfo{person}{Afra~A Alabbadi} {and}
  \bibinfo{person}{Maysoon~F Abulkhair}.} \bibinfo{year}{2021}\natexlab{}.
\newblock \showarticletitle{Multi-Objective Task Scheduling Optimization in
  Spatial Crowdsourcing}.
\newblock \bibinfo{journal}{\emph{Algorithms}} \bibinfo{volume}{14},
  \bibinfo{number}{3} (\bibinfo{year}{2021}), \bibinfo{pages}{77--97}.
\newblock


\bibitem[\protect\citeauthoryear{Arunarani, Manjula, and Sugumaran}{Arunarani
  et~al\mbox{.}}{2019}]%
        {ARADhaVij_19}
\bibfield{author}{\bibinfo{person}{AR Arunarani},
  \bibinfo{person}{Dhanabalachandran Manjula}, {and} \bibinfo{person}{Vijayan
  Sugumaran}.} \bibinfo{year}{2019}\natexlab{}.
\newblock \showarticletitle{Task Scheduling Techniques in Cloud Computing: A
  Literature Survey}.
\newblock \bibinfo{journal}{\emph{Future Generation Computer Systems}}
  \bibinfo{volume}{91} (\bibinfo{year}{2019}), \bibinfo{pages}{407--415}.
\newblock


\bibitem[\protect\citeauthoryear{BANSAL}{BANSAL}{2019}]%
        {BAN_19}
\bibfield{author}{\bibinfo{person}{PUNEET BANSAL}.}
  \bibinfo{year}{2019}\natexlab{}.
\newblock \bibinfo{booktitle}{\emph{Intel Image Classification}}.
\newblock
\urldef\tempurl%
\url{https://www.kaggle.com/datasets/puneet6060/intel-image-classification/data}
\showURL{%
Retrieved December 12, 2023 from \tempurl}


\bibitem[\protect\citeauthoryear{Cayci, Eryilmaz, and Srikant}{Cayci
  et~al\mbox{.}}{2019}]%
        {SemAtiRay_19}
\bibfield{author}{\bibinfo{person}{Semih Cayci}, \bibinfo{person}{Atilla
  Eryilmaz}, {and} \bibinfo{person}{Rayadurgam Srikant}.}
  \bibinfo{year}{2019}\natexlab{}.
\newblock \showarticletitle{Learning to Control Renewal Processes with Bandit
  Feedback}.
\newblock \bibinfo{journal}{\emph{Proceedings of the ACM on Measurement and
  Analysis of Computing Systems}} \bibinfo{volume}{3}, \bibinfo{number}{2}
  (\bibinfo{year}{2019}), \bibinfo{pages}{1--32}.
\newblock


\bibitem[\protect\citeauthoryear{Cayci, Eryilmaz, and Srikant}{Cayci
  et~al\mbox{.}}{2020}]%
        {SemAtiRSr_20}
\bibfield{author}{\bibinfo{person}{Semih Cayci}, \bibinfo{person}{Atilla
  Eryilmaz}, {and} \bibinfo{person}{R Srikant}.}
  \bibinfo{year}{2020}\natexlab{}.
\newblock \showarticletitle{Budget-Constrained Bandits over General Cost and
  Reward Distributions}. In \bibinfo{booktitle}{\emph{Proceedings of AISTATS}}.
\newblock


\bibitem[\protect\citeauthoryear{Cheng, Huang, Shen, and Yang}{Cheng
  et~al\mbox{.}}{2022}]%
        {CheHuaShe_22}
\bibfield{author}{\bibinfo{person}{Duo Cheng}, \bibinfo{person}{Ruiquan Huang},
  \bibinfo{person}{Cong Shen}, {and} \bibinfo{person}{Jing Yang}.}
  \bibinfo{year}{2022}\natexlab{}.
\newblock \showarticletitle{Cascading Bandits with Two-Level Feedback}. In
  \bibinfo{booktitle}{\emph{Proceedings of IEEE ISIT}}.
\newblock


\bibitem[\protect\citeauthoryear{Das, Jain, and Gujar}{Das
  et~al\mbox{.}}{2022}]%
        {DebShwSuj_22}
\bibfield{author}{\bibinfo{person}{Debojit Das}, \bibinfo{person}{Shweta Jain},
  {and} \bibinfo{person}{Sujit Gujar}.} \bibinfo{year}{2022}\natexlab{}.
\newblock \showarticletitle{Budgeted Combinatorial Multi-Armed Bandits}.
\newblock \bibinfo{journal}{\emph{arXiv preprint arXiv:2202.03704}}
  (\bibinfo{year}{2022}).
\newblock


\bibitem[\protect\citeauthoryear{Deng, Shahabi, and Zhu}{Deng
  et~al\mbox{.}}{2015}]%
        {DenCysZhu_15}
\bibfield{author}{\bibinfo{person}{Dingxiong Deng}, \bibinfo{person}{Cyrus
  Shahabi}, {and} \bibinfo{person}{Linhong Zhu}.}
  \bibinfo{year}{2015}\natexlab{}.
\newblock \showarticletitle{Task Matching and Scheduling for Multiple Workers
  in Spatial Crowdsourcing}. In \bibinfo{booktitle}{\emph{Proceedings of ACM
  SIGSPATIAL}}.
\newblock


\bibitem[\protect\citeauthoryear{Ding, Qin, Zhang, and Liu}{Ding
  et~al\mbox{.}}{2013}]%
        {DinQinZha_13}
\bibfield{author}{\bibinfo{person}{Wenkui Ding}, \bibinfo{person}{Tao Qin},
  \bibinfo{person}{Xu-Dong Zhang}, {and} \bibinfo{person}{Tie-Yan Liu}.}
  \bibinfo{year}{2013}\natexlab{}.
\newblock \showarticletitle{Multi-Armed Bandit with Budget Constraint and
  Variable Costs}. In \bibinfo{booktitle}{\emph{Proceedings of AAAI}}.
\newblock


\bibitem[\protect\citeauthoryear{Drozdowski}{Drozdowski}{1996}]%
        {Mac_96}
\bibfield{author}{\bibinfo{person}{Maciej Drozdowski}.}
  \bibinfo{year}{1996}\natexlab{}.
\newblock \showarticletitle{Scheduling Multiprocessor Tasks: An Overview}.
\newblock \bibinfo{journal}{\emph{European Journal of Operational Research}}
  \bibinfo{volume}{94}, \bibinfo{number}{2} (\bibinfo{year}{1996}),
  \bibinfo{pages}{215--230}.
\newblock


\bibitem[\protect\citeauthoryear{Fox}{Fox}{1966}]%
        {Ben_66}
\bibfield{author}{\bibinfo{person}{Bennett Fox}.}
  \bibinfo{year}{1966}\natexlab{}.
\newblock \showarticletitle{Markov Renewal Programming by Linear Fractional
  Programming}.
\newblock \bibinfo{journal}{\emph{SIAM J. Appl. Math.}} \bibinfo{volume}{14},
  \bibinfo{number}{6} (\bibinfo{year}{1966}), \bibinfo{pages}{1418--1432}.
\newblock


\bibitem[\protect\citeauthoryear{Gan, Zhou, Yang, and Shen}{Gan
  et~al\mbox{.}}{2018}]%
        {GanZhoYan_18}
\bibfield{author}{\bibinfo{person}{Chao Gan}, \bibinfo{person}{Ruida Zhou},
  \bibinfo{person}{Jing Yang}, {and} \bibinfo{person}{Cong Shen}.}
  \bibinfo{year}{2018}\natexlab{}.
\newblock \showarticletitle{Cost-Aware Learning and Optimization for
  Opportunistic Spectrum Access}.
\newblock \bibinfo{journal}{\emph{IEEE Transactions on Cognitive Communications
  and Networking}} \bibinfo{volume}{5}, \bibinfo{number}{1}
  (\bibinfo{year}{2018}), \bibinfo{pages}{15--27}.
\newblock


\bibitem[\protect\citeauthoryear{Gan, Zhou, Yang, and Shen}{Gan
  et~al\mbox{.}}{2020}]%
        {GanZhoYan_20}
\bibfield{author}{\bibinfo{person}{Chao Gan}, \bibinfo{person}{Ruida Zhou},
  \bibinfo{person}{Jing Yang}, {and} \bibinfo{person}{Cong Shen}.}
  \bibinfo{year}{2020}\natexlab{}.
\newblock \showarticletitle{Cost-Aware Cascading Bandits}.
\newblock \bibinfo{journal}{\emph{IEEE Transactions on Signal Processing}}
  \bibinfo{volume}{68} (\bibinfo{year}{2020}), \bibinfo{pages}{3692--3706}.
\newblock


\bibitem[\protect\citeauthoryear{Gao, Mo, Zowtuk, Sajed, Gong, Chen, Jui, and
  Lu}{Gao et~al\mbox{.}}{2021}]%
        {GaoMoTTay_21}
\bibfield{author}{\bibinfo{person}{Chao Gao}, \bibinfo{person}{Tong Mo},
  \bibinfo{person}{Taylor Zowtuk}, \bibinfo{person}{Tanvir Sajed},
  \bibinfo{person}{Laiyuan Gong}, \bibinfo{person}{Hanxuan Chen},
  \bibinfo{person}{Shangling Jui}, {and} \bibinfo{person}{Wei Lu}.}
  \bibinfo{year}{2021}\natexlab{}.
\newblock \showarticletitle{Bansor: Improving Tensor Program Auto-Scheduling
  with Bandit Based Reinforcement Learning}. In
  \bibinfo{booktitle}{\emph{Proceedings of IEEE ICTAI}}.
\newblock


\bibitem[\protect\citeauthoryear{Guo, Zhao, Shen, and Jiang}{Guo
  et~al\mbox{.}}{2012}]%
        {GuoZhaShe_12}
\bibfield{author}{\bibinfo{person}{Lizheng Guo}, \bibinfo{person}{Shuguang
  Zhao}, \bibinfo{person}{Shigen Shen}, {and} \bibinfo{person}{Changyuan
  Jiang}.} \bibinfo{year}{2012}\natexlab{}.
\newblock \showarticletitle{Task Scheduling Optimization in Cloud Computing
  Based on Heuristic Algorithm}.
\newblock \bibinfo{journal}{\emph{Journal of Networks}} \bibinfo{volume}{7},
  \bibinfo{number}{3} (\bibinfo{year}{2012}), \bibinfo{pages}{547--553}.
\newblock


\bibitem[\protect\citeauthoryear{Gupta, Kumar, and Agarwal}{Gupta
  et~al\mbox{.}}{2010}]%
        {SacVikGau_10}
\bibfield{author}{\bibinfo{person}{Sachi Gupta}, \bibinfo{person}{Vikas Kumar},
  {and} \bibinfo{person}{Gaurav Agarwal}.} \bibinfo{year}{2010}\natexlab{}.
\newblock \showarticletitle{Task Scheduling in Multiprocessor System using
  Genetic Algorithm}. In \bibinfo{booktitle}{\emph{Proceedings of IEEE ICMLC}}.
\newblock


\bibitem[\protect\citeauthoryear{GUPTA}{GUPTA}{2019}]%
        {GUP_19}
\bibfield{author}{\bibinfo{person}{VIJAY GUPTA}.}
  \bibinfo{year}{2019}\natexlab{}.
\newblock \bibinfo{booktitle}{\emph{Weather Classification}}.
\newblock
\urldef\tempurl%
\url{https://www.kaggle.com/datasets/vijaygiitk/multiclass-weather-dataset}
\showURL{%
Retrieved July 13, 2023 from \tempurl}


\bibitem[\protect\citeauthoryear{Heyden, Arzamasov, Fouch{\'e}, and
  B{\"o}hm}{Heyden et~al\mbox{.}}{2023}]%
        {MarVadEdo_23}
\bibfield{author}{\bibinfo{person}{Marco Heyden}, \bibinfo{person}{Vadim
  Arzamasov}, \bibinfo{person}{Edouard Fouch{\'e}}, {and}
  \bibinfo{person}{Klemens B{\"o}hm}.} \bibinfo{year}{2023}\natexlab{}.
\newblock \showarticletitle{Budgeted Multi-Armed Bandits with Asymmetric
  Confidence Intervals}.
\newblock \bibinfo{journal}{\emph{arXiv preprint arXiv:2306.07071}}
  (\bibinfo{year}{2023}).
\newblock


\bibitem[\protect\citeauthoryear{Hossin and Sulaiman}{Hossin and
  Sulaiman}{2015}]%
        {MohMdN_15}
\bibfield{author}{\bibinfo{person}{Mohammad Hossin} {and}
  \bibinfo{person}{Md~Nasir Sulaiman}.} \bibinfo{year}{2015}\natexlab{}.
\newblock \showarticletitle{A Review on Evaluation Metrics for Data
  Classification Evaluations}.
\newblock \bibinfo{journal}{\emph{International Journal of Data Mining \&
  Knowledge Management Process}} \bibinfo{volume}{5}, \bibinfo{number}{2}
  (\bibinfo{year}{2015}), \bibinfo{pages}{01--11}.
\newblock


\bibitem[\protect\citeauthoryear{Kahraman, Engin, Kaya, and
  {\"O}zt{\"u}rk}{Kahraman et~al\mbox{.}}{2010}]%
        {CenOrhIhs_10}
\bibfield{author}{\bibinfo{person}{Cengiz Kahraman}, \bibinfo{person}{Orhan
  Engin}, \bibinfo{person}{Ihsan Kaya}, {and} \bibinfo{person}{R~Elif
  {\"O}zt{\"u}rk}.} \bibinfo{year}{2010}\natexlab{}.
\newblock \showarticletitle{Multiprocessor Task Scheduling in Multistage Hybrid
  Flow-Shops: A Parallel Greedy Algorithm Approach}.
\newblock \bibinfo{journal}{\emph{Applied Soft Computing}}
  \bibinfo{volume}{10}, \bibinfo{number}{4} (\bibinfo{year}{2010}),
  \bibinfo{pages}{1293--1300}.
\newblock


\bibitem[\protect\citeauthoryear{Khazankin, Psaier, Schall, and
  Dustdar}{Khazankin et~al\mbox{.}}{2011}]%
        {RomHarDan_11}
\bibfield{author}{\bibinfo{person}{Roman Khazankin}, \bibinfo{person}{Harald
  Psaier}, \bibinfo{person}{Daniel Schall}, {and} \bibinfo{person}{Schahram
  Dustdar}.} \bibinfo{year}{2011}\natexlab{}.
\newblock \showarticletitle{QoS-Based Task Scheduling in Crowdsourcing
  Environments}. In \bibinfo{booktitle}{\emph{Proceedings of ICSOC}}.
\newblock


\bibitem[\protect\citeauthoryear{KOKLU}{KOKLU}{2022}]%
        {KOK_22}
\bibfield{author}{\bibinfo{person}{MURAT KOKLU}.}
  \bibinfo{year}{2022}\natexlab{}.
\newblock \bibinfo{booktitle}{\emph{Rice Image Dataset}}.
\newblock
\urldef\tempurl%
\url{https://www.kaggle.com/datasets/muratkokludataset/rice-image-dataset/data}
\showURL{%
Retrieved December 12, 2023 from \tempurl}


\bibitem[\protect\citeauthoryear{Lattimore and Szepesv{\'a}ri}{Lattimore and
  Szepesv{\'a}ri}{2020}]%
        {TorCsa_20}
\bibfield{author}{\bibinfo{person}{Tor Lattimore} {and} \bibinfo{person}{Csaba
  Szepesv{\'a}ri}.} \bibinfo{year}{2020}\natexlab{}.
\newblock \bibinfo{booktitle}{\emph{Bandit Algorithms}}.
\newblock \bibinfo{publisher}{Cambridge University Press}.
\newblock


\bibitem[\protect\citeauthoryear{Li and Xia}{Li and Xia}{2017}]%
        {LiHXia_17}
\bibfield{author}{\bibinfo{person}{Haifang Li} {and} \bibinfo{person}{Yingce
  Xia}.} \bibinfo{year}{2017}\natexlab{}.
\newblock \showarticletitle{Infinitely Many-Armed Bandits with Budget
  Constraints}. In \bibinfo{booktitle}{\emph{Proceedings of AAAI}}.
\newblock


\bibitem[\protect\citeauthoryear{Li, Ma, Gong, Zhou, and Chen}{Li
  et~al\mbox{.}}{2021}]%
        {LiRMaQGon_21}
\bibfield{author}{\bibinfo{person}{Rui Li}, \bibinfo{person}{Qian Ma},
  \bibinfo{person}{Jie Gong}, \bibinfo{person}{Zhi Zhou}, {and}
  \bibinfo{person}{Xu Chen}.} \bibinfo{year}{2021}\natexlab{}.
\newblock \showarticletitle{Age of processing: Age-Driven Status Sampling and
  Processing Offloading for Edge-Computing-Enabled Real-Time IoT Applications}.
\newblock \bibinfo{journal}{\emph{IEEE Internet of Things Journal}}
  \bibinfo{volume}{8}, \bibinfo{number}{19} (\bibinfo{year}{2021}),
  \bibinfo{pages}{14471--14484}.
\newblock


\bibitem[\protect\citeauthoryear{Liu, Li, Shi, and Ying}{Liu
  et~al\mbox{.}}{2020}]%
        {LiuLiBShi_21}
\bibfield{author}{\bibinfo{person}{Xin Liu}, \bibinfo{person}{Bin Li},
  \bibinfo{person}{Pengyi Shi}, {and} \bibinfo{person}{Lei Ying}.}
  \bibinfo{year}{2020}\natexlab{}.
\newblock \showarticletitle{Pond: Pessimistic-Optimistic Online Dispatching}.
\newblock \bibinfo{journal}{\emph{arXiv preprint arXiv:2010.09995}}
  (\bibinfo{year}{2020}).
\newblock


\bibitem[\protect\citeauthoryear{Liu, Li, Shi, and Ying}{Liu
  et~al\mbox{.}}{2021}]%
        {LiuLiBShiLei_21}
\bibfield{author}{\bibinfo{person}{Xin Liu}, \bibinfo{person}{Bin Li},
  \bibinfo{person}{Pengyi Shi}, {and} \bibinfo{person}{Lei Ying}.}
  \bibinfo{year}{2021}\natexlab{}.
\newblock \showarticletitle{An Efficient Pessimistic-Optimistic Algorithm for
  Stochastic Linear Bandits with General Constraints}. In
  \bibinfo{booktitle}{\emph{Advances in Neural Information Processing Systems
  34 - 35th Conference on Neural Information Processing Systems, NeurIPS
  2021}}.
\newblock


\bibitem[\protect\citeauthoryear{Neely}{Neely}{2010}]%
        {Nee_10}
\bibfield{author}{\bibinfo{person}{Michael~J Neely}.}
  \bibinfo{year}{2010}\natexlab{}.
\newblock \showarticletitle{Stochastic Network Optimization with Application to
  Communication and Queueing Systems}.
\newblock \bibinfo{journal}{\emph{Synthesis Lectures on Communication
  Networks}} \bibinfo{volume}{3}, \bibinfo{number}{1} (\bibinfo{year}{2010}),
  \bibinfo{pages}{1--211}.
\newblock


\bibitem[\protect\citeauthoryear{Neely}{Neely}{2013}]%
        {Nee_13}
\bibfield{author}{\bibinfo{person}{Michael~J. Neely}.}
  \bibinfo{year}{2013}\natexlab{}.
\newblock \showarticletitle{Dynamic Optimization and Learning for Renewal
  Systems}.
\newblock \bibinfo{journal}{\emph{IEEE Trans. Automat. Control}}
  \bibinfo{volume}{58}, \bibinfo{number}{1} (\bibinfo{year}{2013}),
  \bibinfo{pages}{32--46}.
\newblock


\bibitem[\protect\citeauthoryear{Neely}{Neely}{2021}]%
        {Nee_21}
\bibfield{author}{\bibinfo{person}{Michael~J Neely}.}
  \bibinfo{year}{2021}\natexlab{}.
\newblock \showarticletitle{Fast Learning for Renewal Optimization in Online
  Task Scheduling}.
\newblock \bibinfo{journal}{\emph{Journal of Machine Learning Research}}
  \bibinfo{volume}{22}, \bibinfo{number}{1} (\bibinfo{year}{2021}),
  \bibinfo{pages}{12785--12828}.
\newblock


\bibitem[\protect\citeauthoryear{Ramezani, Lu, Taheri, and Hussain}{Ramezani
  et~al\mbox{.}}{2015}]%
        {FahLuJJav_15}
\bibfield{author}{\bibinfo{person}{Fahimeh Ramezani}, \bibinfo{person}{Jie Lu},
  \bibinfo{person}{Javid Taheri}, {and} \bibinfo{person}{Farookh~Khadeer
  Hussain}.} \bibinfo{year}{2015}\natexlab{}.
\newblock \showarticletitle{Evolutionary Algorithm-Based Multi-Objective Task
  Scheduling Optimization Model in Cloud Environments}.
\newblock \bibinfo{journal}{\emph{World Wide Web}} \bibinfo{volume}{18},
  \bibinfo{number}{6} (\bibinfo{year}{2015}), \bibinfo{pages}{1737--1757}.
\newblock


\bibitem[\protect\citeauthoryear{REDA}{REDA}{2021}]%
        {RED_21}
\bibfield{author}{\bibinfo{person}{MAHMOUD REDA}.}
  \bibinfo{year}{2021}\natexlab{}.
\newblock \bibinfo{booktitle}{\emph{Satellite Image Classification}}.
\newblock
\urldef\tempurl%
\url{https://www.kaggle.com/datasets/mahmoudreda55/satellite-image-classification/data}
\showURL{%
Retrieved December 12, 2023 from \tempurl}


\bibitem[\protect\citeauthoryear{Robbins and Monro}{Robbins and Monro}{1951}]%
        {HerSut_51}
\bibfield{author}{\bibinfo{person}{Herbert Robbins} {and}
  \bibinfo{person}{Sutton Monro}.} \bibinfo{year}{1951}\natexlab{}.
\newblock \showarticletitle{A Stochastic Approximation Method}.
\newblock \bibinfo{journal}{\emph{The Annals of Mathematical Statistics}}
  \bibinfo{volume}{22}, \bibinfo{number}{3} (\bibinfo{year}{1951}),
  \bibinfo{pages}{400--407}.
\newblock


\bibitem[\protect\citeauthoryear{Russo, Van~Roy, Kazerouni, Osband, Wen,
  et~al\mbox{.}}{Russo et~al\mbox{.}}{2018}]%
        {DanBenAbb_18}
\bibfield{author}{\bibinfo{person}{Daniel~J Russo}, \bibinfo{person}{Benjamin
  Van~Roy}, \bibinfo{person}{Abbas Kazerouni}, \bibinfo{person}{Ian Osband},
  \bibinfo{person}{Zheng Wen}, {et~al\mbox{.}}}
  \bibinfo{year}{2018}\natexlab{}.
\newblock \showarticletitle{A Tutorial on Thompson Sampling}.
\newblock \bibinfo{journal}{\emph{Foundations and Trends{\textregistered} in
  Machine Learning}} \bibinfo{volume}{11}, \bibinfo{number}{1}
  (\bibinfo{year}{2018}), \bibinfo{pages}{1--96}.
\newblock


\bibitem[\protect\citeauthoryear{SACHIN}{SACHIN}{2019}]%
        {SAC_19}
\bibfield{author}{\bibinfo{person}{SACHIN}.} \bibinfo{year}{2019}\natexlab{}.
\newblock \bibinfo{booktitle}{\emph{Cats-vs-Dogs}}.
\newblock
\urldef\tempurl%
\url{https://www.kaggle.com/datasets/shaunthesheep/microsoft-catsvsdogs-dataset}
\showURL{%
Retrieved July 15, 2023 from \tempurl}


\bibitem[\protect\citeauthoryear{Santara, Aggarwal, Li, and Gentile}{Santara
  et~al\mbox{.}}{2022}]%
        {AniGauLiS_22}
\bibfield{author}{\bibinfo{person}{Anirban Santara}, \bibinfo{person}{Gaurav
  Aggarwal}, \bibinfo{person}{Shuai Li}, {and} \bibinfo{person}{Claudio
  Gentile}.} \bibinfo{year}{2022}\natexlab{}.
\newblock \showarticletitle{Learning to Plan Variable Length Sequences of
  Actions with a Cascading Bandit Click Model of User Feedback}. In
  \bibinfo{booktitle}{\emph{Proceedings of AISTATS}}.
\newblock


\bibitem[\protect\citeauthoryear{Srikant and Ying}{Srikant and Ying}{2014}]%
        {SriYin_14}
\bibfield{author}{\bibinfo{person}{Rayadurgam Srikant} {and}
  \bibinfo{person}{Lei Ying}.} \bibinfo{year}{2014}\natexlab{}.
\newblock \bibinfo{booktitle}{\emph{Communication Networks: An Optimization,
  Control, and Stochastic Networks Perspective}}.
\newblock \bibinfo{publisher}{Cambridge University Press}.
\newblock


\bibitem[\protect\citeauthoryear{Suttle, Zhang, Yang, Liu, and Kraemer}{Suttle
  et~al\mbox{.}}{2021}]%
        {SutZhaYan_21}
\bibfield{author}{\bibinfo{person}{Wesley Suttle}, \bibinfo{person}{Kaiqing
  Zhang}, \bibinfo{person}{Zhuoran Yang}, \bibinfo{person}{Ji Liu}, {and}
  \bibinfo{person}{David Kraemer}.} \bibinfo{year}{2021}\natexlab{}.
\newblock \showarticletitle{Reinforcement Learning for Cost-Aware Markov
  Decision Processes}. In \bibinfo{booktitle}{\emph{Proceedings of ICML}}.
\newblock


\bibitem[\protect\citeauthoryear{Tran-Thanh, Chapman, De~Cote, Rogers, and
  Jennings}{Tran-Thanh et~al\mbox{.}}{2010}]%
        {LonArcEnr_10}
\bibfield{author}{\bibinfo{person}{Long Tran-Thanh}, \bibinfo{person}{Archie
  Chapman}, \bibinfo{person}{Enrique~Munoz De~Cote}, \bibinfo{person}{Alex
  Rogers}, {and} \bibinfo{person}{Nicholas~R Jennings}.}
  \bibinfo{year}{2010}\natexlab{}.
\newblock \showarticletitle{Epsilon-First Policies for Budget-Limited
  Multi-Armed Bandits}. In \bibinfo{booktitle}{\emph{Proceedings of AAAI}}.
\newblock


\bibitem[\protect\citeauthoryear{Tran-Thanh, Chapman, Rogers, and
  Jennings}{Tran-Thanh et~al\mbox{.}}{2012}]%
        {LonArcAle_12}
\bibfield{author}{\bibinfo{person}{Long Tran-Thanh}, \bibinfo{person}{Archie
  Chapman}, \bibinfo{person}{Alex Rogers}, {and} \bibinfo{person}{Nicholas
  Jennings}.} \bibinfo{year}{2012}\natexlab{}.
\newblock \showarticletitle{Knapsack Based Optimal Policies for Budget-Limited
  Multi-Armed Bandits}. In \bibinfo{booktitle}{\emph{Proceedings of AAAI}}.
\newblock


\bibitem[\protect\citeauthoryear{Wang, Zhou, Yang, and Shen}{Wang
  et~al\mbox{.}}{2019}]%
        {WanZhoYan_19}
\bibfield{author}{\bibinfo{person}{Chao Wang}, \bibinfo{person}{Ruida Zhou},
  \bibinfo{person}{Jing Yang}, {and} \bibinfo{person}{Cong Shen}.}
  \bibinfo{year}{2019}\natexlab{}.
\newblock \showarticletitle{A Cascading Bandit Approach To Efficient Mobility
  Management In Ultra-Dense Networks}. In \bibinfo{booktitle}{\emph{Proceedings
  of International Workshop on IEEE MLSP}}.
\newblock


\bibitem[\protect\citeauthoryear{Xia, Ding, Zhang, Yu, and Qin}{Xia
  et~al\mbox{.}}{2016}]%
        {XiaDinZha_16}
\bibfield{author}{\bibinfo{person}{Yingce Xia}, \bibinfo{person}{Wenkui Ding},
  \bibinfo{person}{Xu-Dong Zhang}, \bibinfo{person}{Nenghai Yu}, {and}
  \bibinfo{person}{Tao Qin}.} \bibinfo{year}{2016}\natexlab{}.
\newblock \showarticletitle{Budgeted Bandit Problems with Continuous Random
  Costs}. In \bibinfo{booktitle}{\emph{Proceedings of ACML}}.
\newblock


\bibitem[\protect\citeauthoryear{Xia, Qin, Ding, Li, Zhang, Yu, and Liu}{Xia
  et~al\mbox{.}}{2017}]%
        {XiaQinDin_17}
\bibfield{author}{\bibinfo{person}{Yingce Xia}, \bibinfo{person}{Tao Qin},
  \bibinfo{person}{Wenkui Ding}, \bibinfo{person}{Haifang Li},
  \bibinfo{person}{Xudong Zhang}, \bibinfo{person}{Nenghai Yu}, {and}
  \bibinfo{person}{Tie-Yan Liu}.} \bibinfo{year}{2017}\natexlab{}.
\newblock \showarticletitle{Finite Budget Analysis of Multi-Armed Bandit
  Problems}.
\newblock \bibinfo{journal}{\emph{Neurocomputing}}  \bibinfo{volume}{258}
  (\bibinfo{year}{2017}), \bibinfo{pages}{13--29}.
\newblock


\bibitem[\protect\citeauthoryear{Zhou and Tomlin}{Zhou and Tomlin}{2018}]%
        {ZhoCla_18}
\bibfield{author}{\bibinfo{person}{Datong Zhou} {and} \bibinfo{person}{Claire
  Tomlin}.} \bibinfo{year}{2018}\natexlab{}.
\newblock \showarticletitle{Budget-Constrained Multi-Armed Bandits with
  Multiple Plays}. In \bibinfo{booktitle}{\emph{Proceedings of AAAI}}.
\newblock


\end{thebibliography}

%%%%%%%%%%%%%%%%%%%%%%%%%%%%%%%%%%%%%%%%%%%%%%%%%%%%%%%%%%%%%%%%%%%%%%%%

\clearpage
\onecolumn

\appendix

\section{Proof of Lemma~\ref{lem:rm-decomp}}
\label{app:decomposition proof}

Under the case that $\theta^* \leq \frac{\sum_{t=1}^T\mathbb{E}[\hat{r}_{S_t,a_t}]}{\sum_{t=1}^T\mathbb{E}[\check{c}_{S_t,a_t}]}$, the following inequality holds obviously,
\begin{equation*}
    \left\vert\theta^*-\frac{\sum_{t=1}^T\mathbb{E}[\hat{r}_{S_t,a_t}]}{\sum_{t=1}^T\mathbb{E}[\check{c}_{S_t,a_t}]}\right\vert \nonumber 
    \leq \left\vert\frac{\sum_{t=1}^T\mathbb{E}[\hat{r}_{S_t,a_t}]}{\sum_{t=1}^T\mathbb{E}[\check{c}_{S_t,a_t}]}-\frac{\sum_{t=1}^T\mathbb{E}[r_{S_t,a_t}]}{\sum_{t=1}^T\mathbb{E}[c_{S_t,a_t}]}\right\vert, 
\end{equation*}

We then focus on the case that $\theta^* \geq \frac{\sum_{t=1}^T\mathbb{E}[\hat{r}_{S_t,a_t}]}{\sum_{t=1}^T\mathbb{E}[\check{c}_{S_t,a_t}]}$, 
% Since the situation that $\theta^* 
% > \frac{\sum_{t=1}^T\mathbb{E}[\hat{r}_{S_t,a_t}]}{\sum_{t=1}^T\mathbb{E}[\check{c}_{S_t,a_t}]}$ is obvious, we focus on the situation that $\theta^*\leq\frac{\sum_{t=1}^T\mathbb{E}[\hat{r}_{S_t,a_t}]}{\sum_{t=1}^T\mathbb{E}[\check{c}_{S_t,a_t}]}$.
% \begin{lemma}
%     \label{l6}
%     For a termination $T$, under DOL-RM, if $\theta^*\leq\frac{\sum_{t=1}^T\mathbb{E}[\hat{r}_{S_t,a_t}]}{\sum_{t=1}^T\mathbb{E}[\check{c}_{S_t,a_t}]}$, we have
%     \begin{align}
%         \left\vert\theta^*-\frac{\sum_{t=1}^T\mathbb{E}[\hat{r}_{S_t,a_t}]}{\sum_{t=1}^T\mathbb{E}[\check{c}_{S_t,a_t}]}\right\vert \leq \frac{1}{Tc_{\min}}\sum\limits_{t=1}^{T}\mathbb{E}[\vert\theta_t-\theta^*\vert\vert\check{c}_{S_t,a_t}-c^*\vert]\leq\frac{\sqrt{2C_1}}{Tc_{\min}}\sum\limits_{t=1}^{T}\sqrt{\mathbb{E}\left[(\theta_t-\theta^*)^2\right]},\notag
%     \end{align}
%     where $C_1 \geq \mathbb{E}[c_{S_t, a_t}^2]$.
% \end{lemma}
and we start with the following lemma.
\begin{lemma}[Lemma $3$ in~\cite{Nee_21}]
    Under DOL-RM, the following inequalities hold for all $t\in [T]$:
    \begin{equation*}
        % \mathbb{E}[\hat{r}_{S_t,a_t}-\theta_t\check{c}_{S_t,a_t}\vert\theta_t]\geq c^*(\theta^*-\theta_t),
        \mathbb{E}[\hat{r}_{S_t,a_t}-\theta^*\check{c}_{S_t,a_t}\vert\theta_t]\geq (\theta_t-\theta^*)\mathbb{E}[\check{c}_{S_t,a_t}-c^*\vert\theta_t].
    \end{equation*}
    % That is the probability that random variable $\theta_t$ does not satisfy both (\ref{e4}) and (\ref{e5}) is 0.
\end{lemma}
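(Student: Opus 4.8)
The plan is to derive the inequality directly from the greedy decision rule \eqref{eq:decision} of DOL-RM together with the optimism of the truncated UCB/LCB estimators in \eqref{eq:reward estimator}--\eqref{eq:cost estimator}, and then to convert a one-step optimality bound phrased in terms of $\theta_t$ into one phrased in terms of $\theta^*$ by exploiting the defining identity $r^*=\theta^* c^*$. First I would condition on $\theta_t$ and restrict to the high-probability ``good event'' on which the confidence bounds are simultaneously valid, i.e. $\hat{r}_{S_t,a}\ge r_{S_t,a}$ and $\check{c}_{S_t,a}\le c_{S_t,a}$ for every $a\in\mathcal A(S_t)$; this is the standard consequence of sub-Gaussian concentration under Assumptions~\ref{assumption:reward}--\ref{assumption:cost} combined with the truncations at $r_{\max}$ and $c_{\min}$. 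On this event, the greedy rule yields, for every alternative action,
$$\hat{r}_{S_t,a_t}-\theta_t\check{c}_{S_t,a_t}\ \ge\ \hat{r}_{S_t,a}-\theta_t\check{c}_{S_t,a}\ \ge\ r_{S_t,a}-\theta_t c_{S_t,a},$$
where the second step uses $\theta_t\ge\theta_{\min}\ge 0$ together with optimism. Taking the conditional expectation over the task type $S_t$ and comparing against the optimal policy gives $\mathbb{E}[\hat{r}_{S_t,a_t}-\theta_t\check{c}_{S_t,a_t}\mid\theta_t]\ge r^*-\theta_t c^*$, exactly the bound already invoked in the sketch of Lemma~\ref{lem:rm-decomp}; here $(r^*,c^*)$ is the expected reward--cost pair of the optimal policy, which lies in the closure of the attainable set, so linearity of $a\mapsto r-\theta_t c$ ensures the maximum over $\mathcal A(S_t)$ dominates its value at $(r^*,c^*)$.

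Next I would carry out the algebraic shift from $\theta_t$ to $\theta^*$. Adding the constant-in-$S_t$ quantity $(\theta_t-\theta^*)\,\mathbb{E}[\check{c}_{S_t,a_t}\mid\theta_t]$ to both sides collapses the left-hand side to $\mathbb{E}[\hat{r}_{S_t,a_t}-\theta^*\check{c}_{S_t,a_t}\mid\theta_t]$, while the right-hand side becomes $r^*-\theta_t c^*+(\theta_t-\theta^*)\,\mathbb{E}[\check{c}_{S_t,a_t}\mid\theta_t]$. Substituting $r^*=\theta^* c^*$ rewrites $r^*-\theta_t c^*$ as $-(\theta_t-\theta^*)c^*$, and the right-hand side factors cleanly as $(\theta_t-\theta^*)\,\mathbb{E}[\check{c}_{S_t,a_t}-c^*\mid\theta_t]$, which is precisely the claimed inequality.

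The main obstacle I anticipate is not the algebra but the rigorous treatment of the high-probability caveat: the optimism inequalities hold only on a good event whose complement is controlled by the $\log T$ factor in the confidence bonuses. I would need to argue that conditioning on this event is harmless at the order of accuracy required downstream — either by absorbing the failure probability into the $O(\sqrt{T})$ and $O(T^{3/4})$ error terms using the boundedness enforced by the truncations ($\hat{r}_{S_t,a}\le r_{\max}$, $\check{c}_{S_t,a}\ge c_{\min}$), or by stating the inequality as holding conditionally on the good event, as is implicit in the cited Lemma~3 of~\cite{Nee_21}. A secondary point to verify is $\theta_t\ge 0$ throughout, which follows from the projection onto $[\theta_{\min},\theta_{\max}]$ with $\theta_{\min}=r_{\min}/c_{\max}\ge 0$, guaranteeing that multiplying the cost inequality by $-\theta_t$ preserves its direction.
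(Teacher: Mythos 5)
Your proposal is correct and follows essentially the same route as the paper's proof: the greedy decision rule in~\eqref{eq:decision}, the optimism of the UCB/LCB estimators, a comparison against the point $(r^*,c^*)$ in the closure of the attainable reward--cost set (which the paper fleshes out via the independence of $\theta_t$ from $S_t$ and a limiting sequence in $\mathcal{D}$), and finally the same algebraic shift of adding $(\theta_t-\theta^*)\,\mathbb{E}[\check{c}_{S_t,a_t}\mid\theta_t]$ and invoking $r^*=\theta^* c^*$. If anything, you are more careful than the paper on one point: the paper asserts the optimism inequalities hold ``with probability~$1$,'' whereas you correctly flag that they hold only on a high-probability good event whose complement must be absorbed into the downstream error terms.
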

\begin{proof}
% \begin{proof}[proof of Theorem \ref{l4}]
    % Fix $t\in{1,2,\dots}$. The bandit $(\check{c}_{S_t,a_t},\hat{r}_{S_t,a_t})$ is chosen by observing $\theta_t$ and $C_t$ and select the bandit in $\mathcal{A}(S_t)$ that maximizes $\hat{r}_{S_t,a} - \theta_t \check{c}_{S_t,a}$, so
According to the updating rule of $a_t$ in~\eqref{eq:decision}, we have
\begin{equation*}
    \hat{r}_{S_t,a_t} - \theta_t \check{c}_{S_t,a_t} \geq \hat{r}_{S_t,a} - \theta_t \check{c}_{S_t,a},~\forall a \in \mathcal{A}(S_t).
\end{equation*}

Take conditional expectations at both sides and we get,
% in which $(\check{c}_{S_t,a^*},\hat{r}_{S_t,a^*})$ is any other bandit in $\mathcal{A}(S_t)$. Taking conditional expectations gives(with probability 1):
\begin{equation}
    \mathbb{E}[\hat{r}_{S_t,a_t}-\theta_t\check{c}_{S_t,a_t}\vert\theta_t]\geq\mathbb{E}[\hat{r}_{S_t,a}\vert\theta_t]-\theta_t\mathbb{E}[\check{c}_{S_t,a}\vert\theta_t]. \label{eq:action ineq}
\end{equation}

Since $\hat{r}_{S_t,a}$ and $\check{c}_{S_t,a}$ serve as optimistic UCB/LCB estimators, the following inequalities hold with probability $1$,
\begin{equation}
    \mathbb{E}[\check{c}_{S_t,a}\vert\theta_t]\leq \mathbb{E}[C_{S_t,a}\vert\theta_t],~\mathbb{E}[\hat{r}_{S_t,a}\vert\theta_t] \geq \mathbb{E}[R_{S_t,a}\vert\theta_t]. \label{eq:ucb/lcb ineq}
\end{equation}

Substituting~\eqref{eq:ucb/lcb ineq} into~\eqref{eq:action ineq} yields
% Since $\check{c}_{S_t,a} = \max\{c_{\min},\bar{C}_{S_t,a} - \sqrt{\frac{\log T}{N_{a}(t-1)}}\}$, $\hat{r}_{S_t,a} =  \min\{r_{\max}, \bar{R}_{S_t,a} + \sqrt{\frac{\log T}{N_{a}(t-1)}}\}$, we have $\mathbb{E}[\check{c}_{S_t,a^*}] \leq \mathbb{E}[C_{S_t,a^*}]$, $\mathbb{E}[\hat{r}_{S_t,a^*}] \geq \mathbb{E}[R_{S_t,a^*}]$. Taking conditional expectations gives (with probability 1): $$\mathbb{E}[\check{c}_{S_t,a^*}\vert\theta_t]\leq \mathbb{E}[C_{S_t,a^*}\vert\theta_t],$$ $$\mathbb{E}[\hat{r}_{S_t,a^*}\vert\theta_t] \geq \mathbb{E}[R_{S_t,a^*}\vert\theta_t].$$
% Then we have
\begin{equation*}
    \mathbb{E}[\hat{r}_{S_t,a_t}-\theta_t\check{c}_{S_t,a_t}\vert\theta_t]\geq\mathbb{E}[R_{S_t,a}\vert\theta_t]-\theta_t\mathbb{E}[C_{S_t,a}\vert\theta_t].
\end{equation*}

Since $\theta_t$ only depends on the history in the system before $t$, it is independent of $S_t$.
Fix $(r, c) \in \mathcal{D}$, where $\mathcal{D}$ represents the set of all reward-cost pairs corresponding to all available decisions.
There is a conditional distribution for choosing $(R_{S_1,a_1}, C_{S_1,a_1})\in\mathcal{D}(S_1)$, given the observed $S_1$, such that $(\mathbb{E}[r_{S_1,a_1}], \mathbb{E}[c_{S_1,a_1}]) = (r,c)$.
Since $S_t$ is independent of $\theta_t$ and has the same distribution as $S_1$, we can use the same conditional distribution to get a result of a bandit $(r_{S_t,a^*},c_{S_t,a^*})\in \mathcal{D}(S_t)$ that is independent of $\theta_t$ such that
\begin{equation*}
    (\mathbb{E}[R_{S_t,a^*}],\mathbb{E}[C_{S_t,a^*}]) = (r,c).
\end{equation*}
Besides, since $(\mathbb{E}[R_{S_t,a^*}],\mathbb{E}[C_{S_t,a^*}])$ is independent of $\theta_t$, we have (with probability 1)
\begin{equation*}
    \mathbb{E}[R_{S_t,a^*}\vert\theta_t] = r, ~\mathbb{E}[C_{S_t,a^*}\vert\theta_t] = c,
\end{equation*}
which indicates that
% Subsitituting the above equations into (\ref{e7}) gives (with probability 1):
\begin{equation}
    \mathbb{E}[\hat{r}_{S_t,a_t}-\theta_t\check{c}_{S_t,a_t}\vert\theta_t]\geq r-\theta_t c. \label{e6}
\end{equation}

Fix $(r^*,c^*)\in\bar{\mathcal{D}}$ that satisfies $\theta^*=\frac{r^*}{c^*}$.
Since $\bar{\mathcal{D}}$ is the closure of the set $\mathcal{D}$, there is a sequence of points $\{(r_i,c_i)\}_{i=1}^{\infty}$ in $\mathcal{D}$ that converge to the value $(r^*,c^*)$.
Since~\eqref{e6} holds for an arbitrary $(r,c)\in \mathcal{D}$, with probability 1, it holds simultaneously for all of $(r_i,c_i) \in \mathcal{D}$ for $i\in\{1,2,\dots\}$.
Therefore, we have
\begin{equation*}
    \mathbb{E}[\hat{r}_{S_t,a_t}-\theta_t\check{c}_{S_t,a_t}\vert\theta_t]\geq r_i-\theta_t c_i.
\end{equation*}

Taking a limit as $i\rightarrow\infty$ yields (with probability 1):
\begin{equation}
    \mathbb{E}[\hat{r}_{S_t,a_t}-\theta_t\check{c}_{S_t,a_t}\vert\theta_t] \geq r^*-\theta_t c^* = c^*(\theta^*-\theta_t). \label{eq:another form of support lemma}
\end{equation}

Adding $(\theta_t-\theta^*)\mathbb{E}[\check{c}_{S_t,a_t}\vert\theta_t]$ to both sides implies:
\begin{equation*}
    \mathbb{E}[\hat{r}_{S_t,a_t}-\theta^*\check{c}_{S_t,a_t}\vert\theta_t]\geq (\theta^* - \theta_t)c^* + (\theta_t-\theta^*)\mathbb{E}[\check{c}_{S_t,a_t}\vert\theta_t].
\end{equation*}

Taking expectations on both sides, we have 
% and due to the law of iterated expectations, we have
\begin{equation*}
    \mathbb{E}[\hat{r}_{S_t,a_t}-\theta^*\check{c}_{S_t,a_t}\vert\theta_t]\geq\mathbb{E}\left[(\theta_t-\theta^*)(\check{c}_{S_t,a_t}-c^*)\vert\theta_t\right].
\end{equation*}

\end{proof}

Takning expectation on both sides of the above inequality, we have
\begin{align*}
    \mathbb{E}[\hat{r}_{S_t,a_t}-\theta^*\check{c}_{S_t,a_t}]\geq\mathbb{E}\left[(\theta_t-\theta^*)(\check{c}_{S_t,a_t}-c^*)\right].
\end{align*}
Take summation and rearrange the above inequality, we have
% \begin{align}
%     \sum_{t=1}^T\mathbb{E}[\hat{r}_{S_t,a_t}-\theta^*\check{c}_{S_t,a_t}]\geq\sum_{t=1}^T\mathbb{E}\left[(\theta_t-\theta^*)(\check{c}_{S_t,a_t}-c^*)\right].\notag
% \end{align}
% Rearrange it, we have
\begin{align*}
    \frac{\sum_{t=1}^T\mathbb{E}[\hat{r}_{S_t,a_t}]}{\sum_{t=1}^T\mathbb{E}[\check{c}_{S_t,a_t}]}
    &\geq\theta^* + \frac{1}{{\sum_{t=1}^T\mathbb{E}[\check{c}_{S_t,a_t}]}}\sum_{t=1}^T\mathbb{E}\left[(\theta_t-\theta^*)(\check{c}_{S_t,a_t}-c^*)\right]\\
    &\geq \theta^* - \frac{1}{Tc_{\min}}\sum_{t=1}^T\mathbb{E}[\vert\theta_t-\theta^*\vert\vert\check{c}_{S_t,a_t}-c^*\vert],
\end{align*}
where the last inequality comes from the truncation of $\check{c}_{S_t,a_t}$.
According to the condition $\theta^*\geq\frac{\sum_{t=1}^T\mathbb{E}[\hat{r}_{S_t,a_t}]}{\sum_{t=1}^T\mathbb{E}[\check{c}_{S_t,a_t}]}$, we prove the following inequality which completes the proof of Lemma~\ref{lem:rm-decomp},
\begin{align}
    \left\vert\theta^*-\frac{\sum_{t=1}^T\mathbb{E}[\hat{r}_{S_t,a_t}]}{\sum_{t=1}^T\mathbb{E}[\check{c}_{S_t,a_t}]}\right\vert &= \theta^*-\frac{\sum_{t=1}^T\mathbb{E}[\hat{r}_{S_t,a_t}]}{\sum_{t=1}^T\mathbb{E}[\check{c}_{S_t,a_t}]} \notag\\
    &\leq \frac{1}{Tc_{\min}}\sum_{t=1}^T\mathbb{E}[\vert\theta_t-\theta^*\vert\vert\check{c}_{S_t,a_t}-c^*\vert]\\
    &\overset{(a)}{\leq} \frac{1}{Tc_{\min}}\sum_{t=1}^T\sqrt{\mathbb{E}\left[(\theta_t-\theta^*)^2(\check{c}_{S_t,a_t}-c^*)^2\right]}\\
    &\overset{(b)}{\leq}\frac{\sqrt{2C_1}}{Tc_{\min}}\sum\limits_{t=1}^{T}\sqrt{\mathbb{E}\left[(\theta_t-\theta^*)^2\right]},
    \label{e11}
\end{align}
where $(a)$ follows by the Cauchy-Schwarz inequality;
$(b)$ holds because $(\check{c}_{S_t,a_t}-c^*)^2\leq \check{c}_{S_t,a_t}^2 + (c^*)^2 \leq 2C_1^2$ with $(C_1 \geq \mathbb{E}[c_{S, a}^2],~\forall a \in \mathcal{A}(S))$.

\section{Proof of Lemma~\ref{lem:drift}} \label{app: drift}

According to virtual queue update of $\theta_t$ in~\eqref{eq:vqupdate}, we have
\begin{align*}
    (\theta_{t+1}-\theta^*)^2 
    &=\left(\left[\theta_t + \eta (\hat{r}_{S_t,a_t} - \theta_t \check{c}_{S_t,a_t})\right]_{\theta_{\min}}^{\theta_{\max}} - \theta^*\right)^2\\
    &=\left(\left[\theta_t + \eta (\hat{r}_{S_t,a_t} - \theta_t \check{c}_{S_t,a_t})\right]_{\theta_{\min}}^{\theta_{\max}} - [\theta^*]_{\theta_{\min}}^{\theta_{\max}}\right)^2\\
    &= \left[ \left(\theta_t + \eta (\hat{r}_{S_t,a_t} - \theta_t \check{c}_{S_t,a_t}) - \theta^*\right)_{\theta_{\min}}^{\theta_{\max}}\right]^2\\
    &\leq (\theta_t + \eta (\hat{r}_{S_t,a_t} - \theta_t \check{c}_{S_t,a_t}) - \theta^*)^2.
\end{align*}

Recall the definition $V_t = \frac{1}{2} (\theta_t - \theta^*)^2$, we have
\begin{align*}
    V_{t+1}
    &\leq \frac{1}{2}((\theta_t - \theta^*) + \eta (\hat{r}_{S_t,a_t} - \theta_t \check{c}_{S_t,a_t}))^2\notag\\
    &= V_t + \frac{\eta ^2}{2}(\hat{r}_{S_t,a_t} - \theta_t \check{c}_{S_t,a_t})^2+\eta (\theta_t - \theta^*)(\hat{r}_{S_t,a_t} - \theta_t \check{c}_{S_t,a_t}).
\end{align*}

Taking expectations on both sides, we have
\begin{equation*}
    \mathbb{E}\left[V_{t+1}\right] \leq \mathbb{E}\left[V_t\right] + \eta ^2b+ \eta \mathbb{E}\left[(\theta_t - \theta^*)(\hat{r}_{S_t,a_t} - \theta_t \check{c}_{S_t,a_t})\right],
\end{equation*}
where $b \geq \frac{1}{2}\mathbb{E}\left[(\hat{r}_{S_t,a_t}-\theta_t\check{c}_{S_t,a_t})^2\right], \forall t$.

To prove Lemma~\ref{lem:drift}, it suffices to show
% \begin{align*}
%     \eta \mathbb{E}\left[(\theta_t - \theta^*)(\hat{r}_{S_t,a_t} - \theta_t \check{c}_{S_t,a_t})\right]\leq -c_{\min}\eta \mathbb{E}\left[(\theta_t-\theta^*)^2\right] + \eta\theta_{gap}\mathbb{E} \left[\hat{r}_{S_t,a_t} - r_{S_t,a_t}-\theta^*(\check{c}_{S_t,a_t} - c_{S_t,a_t})\right],
% \end{align*}
%     which is equal to prove that
\begin{equation*}
    \mathbb{E}\left[(\theta_t - \theta^*)(\hat{r}_{S_t,a_t} - \theta_t \check{c}_{S_t,a_t})\right] \leq -c_{\min}\mathbb{E}\left[(\theta_t-\theta^*)^2\right] + \theta_{gap}\mathbb{E} \left[\hat{r}_{S_t,a_t} - r_{S_t,a_t}-\theta^*(\check{c}_{S_t,a_t} - c_{S_t,a_t})\right].
\end{equation*}

We consider the proof under two cases separately:
\begin{itemize}
    \item
    When $\theta_t - \theta^* < 0$, take expectation of~\eqref{eq:another form of support lemma} we can straightly get
    \begin{align*}
        \mathbb{E}\left[(\theta_t - \theta^*)(\hat{r}_{S_t,a_t} - \theta_t \check{c}_{S_t,a_t})\right] 
        &\leq -c^*\mathbb{E}\left[(\theta_t-\theta^*)^2\right]\notag\\
        &\leq -c_{\min}\mathbb{E}\left[(\theta_t-\theta^*)^2\right] + \theta_{gap} \mathbb{E} \left[\hat{r}_{S_t,a_t} - r_{S_t,a_t}-\theta^*(\check{c}_{S_t,a_t} - c_{S_t,a_t})\right],
    \end{align*}
    where the last inequality holds since $c^* \geq c_{\min}$ and $\mathbb{E} \left[\hat{r}_{S_t,a_t} - r_{S_t,a_t}-\theta^*(\check{c}_{S_t,a_t} 
    - c_{S_t,a_t})\right] \geq 0$.
    % \begin{align}
    % (\theta_t - \theta^*)\mathbb{E}[\hat{r}_{S_t,a_t}-\theta_t\check{c}_{S_t,a_t}\vert\theta_t] \leq - c^*(\theta_t -\theta^*)^2.
    % \end{align}
    \item
    When $\theta_t - \theta^* \geq 0$, given the boundedness of cost:
    \begin{align*}
        \mathbb{E}\left[(\theta_t - \theta^*)(\hat{r}_{S_t,a_t} - \theta_t \check{c}_{S_t,a_t})\right]
        =& \mathbb{E}\left[(\theta_t - \theta^*)(\hat{r}_{S_t,a_t} - \theta^* \check{c}_{S_t,a_t})\right] - \check{c}_{S_t,a_t}\mathbb{E}\left[(\theta_t - \theta^*)^2\right]\\
        \leq& \mathbb{E}\left[(\theta_t - \theta^*)(\hat{r}_{S_t,a_t} - \theta^* \check{c}_{S_t,a_t})\right] - c_{min}\mathbb{E}\left[(\theta_t - \theta^*)^2\right].
    \end{align*}
    For the first term, we have
    \begin{align*}
        \mathbb{E}\left[(\theta_t - \theta^*)(\hat{r}_{S_t,a_t} - \theta^* \check{c}_{S_t,a_t})\right]
        =&\mathbb{E}\left[(\theta_t - \theta^*)(r_{S_t,a_t} - \theta^* c_{S_t,a_t})\right] 
        + \mathbb{E} \left[(\theta_t - \theta^*)(\hat{r}_{S_t,a_t} - r_{S_t,a_t}-\theta^*(\check{c}_{S_t,a_t} 
        - c_{S_t,a_t}))\right]. \\
        &\leq \mathbb{E}\left[(\theta_t - \theta^*)(r_{S_t,a_t} - \theta^* c_{S_t,a_t})\right] 
        +\theta_{gap} \mathbb{E} \left[(\hat{r}_{S_t,a_t} - r_{S_t,a_t}-\theta^*(\check{c}_{S_t,a_t} 
        - c_{S_t,a_t}))\right].
    \end{align*}
    Recall the definition of optimal ratio $\theta^*$, we can get
    \begin{equation*}
        \mathbb{E}[r_{S_t,a_t}\vert\theta_t] - \theta^* \mathbb{E}[c_{S_t,a_t}\vert\theta_t] 
        \leq \sup\limits_{(r,c)\in \bar{\mathcal{D}}}\{r-\theta^*c\}=0,
    \end{equation*}
    which indicates that 
    \begin{equation*}
        \mathbb{E}\left[(\theta_t - \theta^*)(r_{S_t,a_t} - \theta^* c_{S_t,a_t})\right] \leq 0.
    \end{equation*}
    Combine these inequalities and we complete the proof as follows:
    \begin{align*}
        \mathbb{E}\left[(\theta_t - \theta^*)(\hat{r}_{S_t,a_t} - \theta_t \check{c}_{S_t,a_t})\right] 
        \leq -c_{\min}\mathbb{E}\left[(\theta_t-\theta^*)^2\right] + \theta_{gap}\mathbb{E} \left[\hat{r}_{S_t,a_t} - r_{S_t,a_t}-\theta^*(\check{c}_{S_t,a_t} - c_{S_t,a_t})\right].
    \end{align*}
    % \begin{lemma}[Lemma 2 in \cite{Nee_21}]
    % \label{lem:conditional expectation}
    % Consider algorithm with any $\theta[1] \in [\theta_{\min},\theta_{max}]$ and any positive stepsizes $\{\eta \}_{t=1}^{\infty}$. For each $t\in{1,2,\dots}$ and for any versions of $\mathbb{E}[C_{S_t,a_t}\vert\theta_t]$ and $\mathbb{E}[R_{S_t,a_t}\vert\theta_t]$, the following holds with probability 1:
    % \begin{align}
    % (\mathbb{E}[C_{S_t,a_t}\vert\theta_t],\mathbb{E}[R_{S_t,a_t}\vert\theta_t])\in \bar{\mathcal{D}}.
    % \end{align}
    % \end{lemma}
\end{itemize}

\section{Proof of Lemma \ref{lem:estimatebound}} \label{app:double optimistic proof}

Applying the triangle inequality on~\eqref{eq:rm-decomp1}, we have
\begin{align*}
    &\left\vert\frac{\sum_{t=1}^T\mathbb{E}[\hat{r}_{S_t,a_t}]}{\sum_{t=1}^T\mathbb{E}[\check{c}_{S_t,a_t}]}-\frac{\sum_{t=1}^T\mathbb{E}[r_{S_t,a_t}]}{\sum_{t=1}^T\mathbb{E}[c_{S_t,a_t}]}\right\vert \\
    \leq &\left\vert\frac{\sum_{t=1}^T\mathbb{E}[\hat{r}_{S_t,a_t}]}{\sum_{t=1}^T\mathbb{E}[\check{c}_{S_t,a_t}]}-\frac{\sum_{t=1}^T\mathbb{E}[r_{S_t,a_t}]}{\sum_{t=1}^T\mathbb{E}[\check{c}_{S_t,a_t}]}\right\vert+\left\vert\frac{\sum_{t=1}^T\mathbb{E}[r_{S_t,a_t}]}{\sum_{t=1}^T\mathbb{E}[\check{c}_{S_t,a_t}]}-\frac{\sum_{t=1}^T\mathbb{E}[r_{S_t,a_t}]}{\sum_{t=1}^T\mathbb{E}[c_{S_t,a_t}]}\right\vert \\
    =&\frac{1}{\sum_{t=1}^T\mathbb{E}[\check{c}_{S_t,a_t}]}\left\vert\sum_{t=1}^T\mathbb{E}[\hat{r}_{S_t,a_t}]-\sum_{t=1}^T\mathbb{E}[r_{S_t,a_t}]\right\vert+ \frac{\sum_{t=1}^T\mathbb{E}[r_{S_t,a_t}]}{\sum_{t=1}^T\mathbb{E}[\check{c}_{S_t,a_t}]\sum_{t=1}^T\mathbb{E}[c_{S_t,a_t}]}\left\vert\sum_{t=1}^T\mathbb{E}[c_{S_t,a_t}]-\sum_{t=1}^T\mathbb{E}[\check{c}_{S_t,a_t}]\right\vert\\
    \leq& \frac{1}{Tc_{\min}}\vert\sum_{t=1}^T\mathbb{E}[\hat{r}_{S_t,a_t}]-\sum_{t=1}^T\mathbb{E}[r_{S_t,a_t}]\vert+ \frac{r_{\max}}{Tc_{\min}^2}\vert\sum_{t=1}^T\mathbb{E}[c_{S_t,a_t}]-\sum_{t=1}^T\mathbb{E}[\check{c}_{S_t,a_t}]\vert,
\end{align*}
where the last inequality comes from the boundedness of cost and reward.
This inequality breaks down the cumulative ratio estimate error into the estimated errors of reward and cost.
Consequently, we only need to bound the following terms:
\begin{equation*}
    \left\vert\sum_{t=1}^T\mathbb{E}[\hat{r}_{S_t,a_t}]-\sum_{t=1}^T\mathbb{E}[r_{S_t,a_t}]\right\vert,~\left\vert\sum_{t=1}^T\mathbb{E}[c_{S_t,a_t}]-\sum_{t=1}^T\mathbb{E}[\check{c}_{S_t,a_t}]\right\vert,
\end{equation*}
which have been widely studied in UCB/LCB learning~\cite{TorCsa_20,LiuLiBShi_21}.

We first prove the error bound for reward function $r$ and the analysis of cost function $c$ follows the same steps.
Define $x_{S_t, a}$ as the probability that arm $a\in \mathcal{A}(S_t)$ is pulled at time frame $t$ and we have
% First, we prove the bound of $\vert\sum_{t=1}^T\mathbb{E}[\hat{r}_{S_t,a_t}]-\sum_{t=1}^T\mathbb{E}[r_{S_t,a_t}]\vert$, and the proof mainly follows ~\cite{LiuLiBShi_21}:
\begin{align*}
    \left\vert\sum_{t=1}^T\mathbb{E}[\hat{r}_{S_t,a_t}]-\sum_{t=1}^T\mathbb{E}[r_{S_t,a_t}]\right\vert
    = &\sum_{t=1}^T \sum_{a\in \mathcal{A}(S_t)}\mathbb{E}\left[(\hat{r}_{S_t,a} - r_{S_t,a})x_{S_t,a}\right]\\
    = &\sum_{a\in \mathcal{A}(S_t)}\left[\sum_{t=1}^T\mathbb{E}\left[(\hat{r}_{S_t,a} - r_{S_t,a})\mathbb{I}(\hat{r}_{S_t,a} - r_{S_t,a} \leq \zeta)x_{S_t,a}\right] + \mathbb{E}\left[(\hat{r}_{S_t,a}-r_{S_t,a})\mathbb{I}(\hat{r}_{S_t,a}-r_{S_t,a} > \zeta)x_{S_t,a}\right]\right]\\
    \leq& T\zeta\sum_{a \in \mathcal{A}(S_t)}\mathbb{E}[x_{S_t,a}] + \sum_{a \in \mathcal{A}(S_t)}\sum_{t=1}^T\mathbb{E}\left[(\hat{r}_{S_t,a}-r_{S_t,a})\mathbb{I}(\hat{r}_{S_t,a}-r_{S_t,a} > \zeta)x_{S_t,a}\right]\\
    \overset{(a)}{\leq}& T\zeta + \sum_{a \in \mathcal{A}(S_t)}\int_\zeta^\infty\sum_{t=1}^T\mathbb{P}(\hat{r}_{S_t,a} - r_{S_t,a} > y)dy\\
    \overset{(b)}{=} &T\zeta + \sum_{a \in \mathcal{A}(S_t)}\int_\zeta^{r_{\max}}\sum_{t=1}^T\mathbb{P}(\hat{r}_{S_t,a} - r_{S_t,a}> y) dy\\
    \overset{(c)}{\leq}& T\zeta + \vert \mathcal{A}(S_t)\vert\int_\zeta^{r_{\max}}\sum_{s=1}^T\mathbb{P}\left(\bar{R}_{S_t,a,s} + \sqrt{\frac{\log T}{s}} - r_{S_t,a} > y\right)dy ,
\end{align*}
where $(a)$ holds due to the definition of expectation;
$(b)$ holds because $\hat{r}_{S_t, i} - r_{S_t, i} \leq r_{\max}$;
$(c)$ holds due to the definition of $\hat{r}_{S_t,a}$.

Let $s(y) = \frac{\log T}{(1-\frac{1}{r_{\max}})^2y^2}$, we have
\begin{align*}
    \int_\zeta^{r_{\max}}\sum_{s=1}^T\mathbb{P}\left(\bar{R}_{S_t,i,s} + \sqrt{\frac{\log T}{s}}-r_{S_t,i}>y\right)dy\notag
    &\overset{(a)}{\leq} \int_\zeta^\infty s(y)dy + \int_\zeta^{r_{\max}}\sum_{s = \lceil s(y)\rceil}^T\mathbb{P}\left(\bar{R}_{S_t,i,s} - r_{S_t,i} > \frac{1}{r_{\max}}y\right)dy\notag\\
    &= \frac{\log T}{\left(1-\frac{1}{r_{\max}}\right)^2\zeta} + \int_\zeta^{r_{\max}}\sum_{s = \lceil s(y)\rceil}^T\mathbb{P}\left(\bar{R}_{S_t,i,s} - r_{S_t,i} > \frac{1}{r_{\max}}y\right)dy\notag\\
    &\overset{(b)}{\leq} \frac{\log T}{\left(1-\frac{1}{c_{\max}}\right)^2\zeta} + \int_\zeta^{r_{\max}}\sum_{s = \lceil s(y)\rceil}^T e^{\frac{-sy^2}{2r_{\max}^2}} dy\notag\\
    &\leq \frac{\log T}{\left(1-\frac{1}{r_{\max}}\right)^2\zeta} + \int_\zeta^{r_{\max}}\frac{e^{-\frac{\log T}{2(r_{\max}-1)^2}}}{1 - e^{-\frac{y^2}{2r_{\max}^2}}}dy\notag\\
    &\leq \frac{\log T}{\left(1-\frac{1}{r_{\max}}\right)^2\zeta} + e^{-\frac{\log T}{2(r_{\max}-1)^2}}\left(\int_\zeta^{\frac{r_{\max}}{2}} \frac{8}{4\frac{y^2}{r_{\max}^2} - \frac{y^4}{r_{\max}^4}}dy + \int_{\frac{r_{\max}}{2}}^{r_{\max}}\frac{1}{1-e^{-\frac{y^2}{2r_{\max}}}}dy\right)\notag\\
    &\leq \frac{\log T}{\left(1-\frac{1}{r_{\max}}\right)^2\zeta} + e^{-\frac{\log T}{2(r_{\max}-1)^2}}\left(\int_\zeta^{\frac{r_{\max}}{2}}\frac{8r_{\max}^2}{3y^2}dy + \frac{r_{\max}}{2(1-e^{-\frac{1}{2}})}\right)\notag\\
    &\leq \frac{\log T}{\left(1-\frac{1}{r_{\max}}\right)^2\zeta} + e^{-\frac{\log T}{2(r_{\max}-1)^2}}\left(\frac{8r_{\max}^2}{3\zeta} - \frac{16r_{\max}}{3} + \frac{r_{\max}}{2(1-e^{-\frac{1}{2}})}\right)\notag\\
    &\leq \frac{\log T}{\left(1-\frac{1}{r_{\max}}\right)^2\zeta} + e^{-\frac{\log T}{2(r_{\max}-1)^2}}\frac{8r_{\max}^2}{3\zeta}\notag\\
    &\leq r_{\max}^2\left(\frac{1}{(r_{\max}-1)^2} + \frac{8}{3}\right)\frac{\log T}{\zeta},
\end{align*}
where $(a)$ holds because $s(y)>0$;
$(b)$ holds due to Hoeffding's inequality.

Let $P = r_{\max}^2\left(\frac{1}{(r_{\max}-1)^2} + \frac{8}{3}\right)$ and $\zeta = \sqrt{\frac{\vert \mathcal{A}(S_t)\vert P\log T}{T}}$, we have
\begin{equation*}
    \left\vert\sum_{t=1}^T\mathbb{E}[\hat{r}_{S_t,a_t}]-\sum_{t=1}^T\mathbb{E}[r_{S_t,a_t}]\right\vert \leq \sqrt{\vert \mathcal{A}(S_t)\vert PT\log T},
\end{equation*}
which finishes the proof.
% Let $Q = c_{\max}^2\left(\frac{1}{(c_{\max}-1)^2} + \frac{8}{3}\right)$ and $\zeta = \sqrt{\frac{\vert \mathcal{A}(S_t)\vert Q\log T}{T}}$, we have
% \begin{align}
%     \vert\sum_{t=1}^T\mathbb{E}[c_{S_t,a_t}]-\sum_{t=1}^T\mathbb{E}[\check{c}_{S_t,a_t}]\vert &\leq \sqrt{\vert \mathcal{A}(S_t)\vert QT\log T}.\notag
% \end{align}

\section{Proof of Lemma~\ref{lem:sqrtsumbound}} \label{app:sum of sqrt Q_t^2}

Take summation over $T$ on equation in Lemma~\ref{lem:drift}, we have
% \begin{align}
%     \frac{1}{2}Q_{t+1} - \frac{1}{2}Q_{t} &\leq -c_{\min}\eta Q_t + \eta ^2 b+\theta_{gap}\eta  \mathbb{E}\left[\hat{r}_{S_t, a_t} - r_{S_t,a_t} - \theta^*(\check{c}_{S_t,a_t} - c_{S_t,a_t})\right].\notag
% \end{align}
% Sum up from 1 to $t$, we have
\begin{equation*}
    \sum_{t=1}^T \mathbb{E}[\Delta(t)] \leq -2c_{\min}\sum_{t=1}^T\eta \mathbb{E}[V_t] + b\sum_{t=1}^T\eta ^2+ \theta_{gap}\sum_{t=1}^T\eta \mathbb{E}\left[\hat{r}_{S_t, a_t} - r_{S_t,a_t} - \theta^*(\check{c}_{S_t,a_t} - c_{S_t,a_t})\right].
\end{equation*}

Rearrange the inequality and divide both sides by $2c_{min} \eta$, we have
\begin{align*}
    \sum_{t=1}^T\mathbb{E}[V_t] \leq& -\sum_{t=1}^T \frac{\mathbb{E}[\Delta(t)]}{2c_{\min}\eta } + \sum_{t=1}^T \frac{\eta  b}{2c_{min}} +   \frac{\theta_{gap}}{2c_{min}}\sum_{t=1}^T \mathbb{E}\left[\hat{r}_{S_t,a_t} - r_{S_t,a_t}-\theta^*(\check{c}_{S_t,a_t} - c_{S_t,a_t})\right] \\
    \overset{(a)}{\leq} & \frac{ \mathbb{E}[V_1]}{2c_{min}\eta} + \sum_{t=1}^T \frac{\eta  b}{2c_{min}} + \frac{\theta_{gap}}{2c_{min}} \sum_{t=1}^T  \mathbb{E}\left[\hat{r}_{S_t,a_t} - r_{S_t,a_t}-\theta^*(\check{c}_{S_t,a_t} - c_{S_t,a_t})\right] \\
    \overset{(b)}{=} & \frac{\mathbb{E}[V_1]}{2}\sqrt{T} + \frac{b}{2c_{\min}^2}\sqrt{T}+ \frac{\theta_{gap}}{2c_{\min}}\sum_{t=1}^T\mathbb{E}\left[\hat{r}_{S_t, a_t} - r_{S_t,a_t} - \theta^*(\check{c}_{S_t,a_t} - c_{S_t,a_t})\right] \\
    \overset{(c)}{\leq} & \frac{\theta_{gap}^2}{2}\sqrt{T} + \frac{b}{2c_{\min}^2}\sqrt{T}+ \frac{\theta_{gap}}{2c_{\min}}\sum_{t=1}^T\mathbb{E}\left[\hat{r}_{S_t, a_t} - r_{S_t,a_t} - \theta^*(\check{c}_{S_t,a_t} - c_{S_t,a_t})\right],
\end{align*}
where the $(a)$ arises from $V_{T+1} \geq 0$;
$(b)$ is derived from $\eta  = \frac{1}{c_{\min}\sqrt{T}}$;
$(c)$ is due to the boundedness of the ratio $\theta$.
Recall that we have
\begin{equation*}
    \hat{r}_{S_t,a_t} = \bar{R}_{S_t,a_t} + \sqrt{\frac{\log T}{N_{S_t, a_t}(t)}}, ~\check{c}_{S_t,a_t} = \bar{C}_{S_t,a_t} - \sqrt{\frac{\log T}{N_{S_t, a_t}(t)}},
\end{equation*}
let $M = \sum^S \vert\mathcal{A}(i)\vert$ denote the number of all possible combinations $(S,a)$, we have\footnote{We assume all possible combinations $(S,a)$ are explored once before the decision making, which indicates that $\mathbb{E} [r_{S_t,a_t} - \bar{R}_{S_t,a_t}] = \mathbb{E} [c_{S_t,a_t} - \bar{C}_{S_t,a_t}] = 0,~\forall t \in [T].$}
\begin{align}
    \sum_{t=1}^T\mathbb{E}\left[\hat{r}_{S_t, a_t} - r_{S_t,a_t} - \theta^*(\check{c}_{S_t,a_t} - c_{S_t,a_t})\right] \leq& (1 + \theta_{\max})\sum_{t=1}^T\mathbb{E}\left[\sqrt{\frac{\log T}{N_{S_t, a_{t}}(t)}}\right]\nonumber\\
    =& (1 + \theta_{\max})\sqrt{\log T}\sum_{i = 1}^{M}\sum_{j = 1}^{N_i(T)}\frac{1}{\sqrt{j}}\nonumber\\
    \leq& (1 + \theta_{\max})\sqrt{\log T}\sum_{i = 1}^{M}\left(1+\int_1^{N_i(T)}\frac{1}{\sqrt{x}}dx\right)\nonumber\\
    \leq& 2(1 + \theta_{\max})\sqrt{\log T}\sum_{i = 1}^{M}\sqrt{N_i(T)}\nonumber\\
    \leq& 2(1 + \theta_{\max})\sqrt{T\log T}, \label{eq:Q bound}
\end{align}

Substitute the above inequality in~\eqref{eq:Q bound}, we have
\begin{equation*}
    \sum_{t=1}^T\mathbb{E}[V_t] \leq \frac{\theta_{gap}^2}{2}\sqrt{T} + \frac{b}{2c_{\min}^2}\sqrt{T}+\frac{(1 + \theta_{\max})\theta_{gap}\sqrt{T\log T}}{c_{\min}}.
\end{equation*}
According to Cauchy–Schwarz inequality, we have
\begin{equation*}
    (\sum_{t=1}^T\sqrt{\mathbb{E}[V_t]})^2\leq T\sum_{t=1}^T\mathbb{E}[V_t] \leq \frac{\theta_{gap}^2}{2}T^{\frac{3}{2}} + \frac{b}{2c_{\min}^2}T^{\frac{3}{2}}+\frac{(1 + \theta_{\max})\theta_{gap}\sqrt{\log T}}{c_{\min}}T^{\frac{3}{2}} = C_2 T^{\frac{3}{2}},
\end{equation*}
where $C_2 = \frac{\theta_{gap}^2}{2}+\frac{b}{2c_{\min}^2}+\frac{(1+\theta_{\max})\theta_{gap}\sqrt{\log T}}{c_{\min}}$.

Apply square roots on both sides of the above inequality, we finally have
\begin{equation*}
    \sum_{t=1}^T\sqrt{V_t} \leq \sqrt{C_2}T^{\frac{3}{4}}.
\end{equation*}

\newpage
\section{Additional Details on the Experiments}
\label{app:additional experiment details}
The following five tables are the net structures of the five types of tasks in real data experiments, which are written in the language of TensorFlow.
%~\cite{tensorflow2015-whitepaper}
  \begin{table}[htpb]
   \caption{Model Setup of Satellite Image Classification Task.}
   \label{tab:satellite}
   \begin{tabular}{lll}
     \multicolumn{3}{l}{Model: "sequential"}\\
     \toprule
     \textit{Layer} & \textit{Output Shape} & \textit{Param \#} \\ \midrule
     conv2d & (None, 253, 253, 32) & 896   \\
     conv2d 1 & (None, 251, 251, 32) & 9248 \\
     max pooling2d & (None, 125, 125, 32) & 0 \\
     conv2d 2 & (None, 123, 123, 64) & 18496 \\
     max pooling2d 1 & (None, 61, 61, 64) & 0 \\
     conv2d 3 & (None, 59, 59, 128) & 73856 \\
     max pooling2d 2 & (None, 29, 29, 128) & 0 \\
     flatten & (None, 107648) & 0 \\
     dense & (None, 128) & 13779072 \\
     dropout & (None, 128) & 0 \\
     dense 1 & (None, 4) & 516 \\
     \toprule
     \multicolumn{3}{l}{Total params: 13,882,084}\\
     \multicolumn{3}{l}{Trainable params: 13,882,084}\\
     \multicolumn{3}{l}{Non-trainable params: 0}
   \end{tabular}
 \end{table}
 \begin{table}[htpb]
   \caption{Model Setup of Weather Classification Task.}
   \label{tab:weather}
   \begin{tabular}{lll}
     \multicolumn{3}{l}{Model: "sequential"}\\
     \toprule
     \textit{Layer} & \textit{Output Shape} & \textit{Param \#} \\ \midrule
     vgg19 & (None, 4, 4, 512) & 20024384   \\
     flatten & (None, 8192) & 0 \\
     dense & (None, 5) & 40965 \\
     \toprule
     \multicolumn{3}{l}{Total params: 20,065,349}\\
     \multicolumn{3}{l}{Trainable params: 40,965}\\
     \multicolumn{3}{l}{Non-trainable params: 20,024,384}
   \end{tabular}
 \end{table}
  \begin{table}[htpb]
   \caption{Model Setup of Rice Classification Task.}
   \label{tab:rice}
   \begin{tabular}{lll}
     \multicolumn{3}{l}{Model: "sequential"}\\
     \toprule
     \textit{Layer} & \textit{Output Shape} & \textit{Param \#} \\ \midrule
     conv2d & (None, 222, 222, 32) & 896   \\
     max pooling2d & (None, 111, 111, 32) & 0 \\
     flatten & (None, 394272) & 0 \\
     dense & (None, 40) & 15770920 \\
     dropout & (None, 40) & 0 \\
     dense 1 & (None, 5) & 205 \\
     \toprule
     \multicolumn{3}{l}{Total params: 15,772,021}\\
     \multicolumn{3}{l}{Trainable params: 15,772,021}\\
     \multicolumn{3}{l}{Non-trainable params: 0}
   \end{tabular}
 \end{table}
  \begin{table}[htpb]
   \caption{Model Setup of Nature Scene Classification Task.}
   \label{tab:nature scene}
   \begin{tabular}{lll}
     \multicolumn{3}{l}{Model: "sequential"}\\
     \toprule
     \textit{Layer} & \textit{Output Shape} & \textit{Param \#} \\ \midrule
     conv2d & (None, 148, 148, 32) & 896   \\
     max pooling2d & (None, 74, 74, 32) & 0 \\
     conv2d 1 & (None, 72, 72, 32) & 9248 \\
     max pooling2d 1 & (None, 36, 36, 32) & 0 \\
     dense & (None, 128) & 5308544 \\
     dense 1 & (None, 6) & 774 \\
     \toprule
     \multicolumn{3}{l}{Total params: 5,319,462}\\
     \multicolumn{3}{l}{Trainable params: 5,319,462}\\
     \multicolumn{3}{l}{Non-trainable params: 0}
   \end{tabular}
 \end{table}
 \begin{table}[htpb]
   \caption{Model Setup of Cats \& Dogs Classification Task.}
   \label{tab:cat_vs_dog}
   \begin{tabular}{lll}
     \multicolumn{3}{l}{Model: "model"}\\
     \toprule
     \textit{Layer} & \textit{Output Shape} & \textit{Param \#} \\ \midrule
      input2 & [(None, 150, 150, 3)] & 0   \\
     conv2d & (None, 148, 148, 32) & 896 \\
     conv2d 1 & (None, 146, 146, 64) & 18496 \\
     max pooling2d & (None, 73, 73, 64) & 0 \\
     conv2d 2 & (None, 71, 71, 64) & 36928 \\
     conv2d 3 & (None, 69, 69, 128) & 73856 \\
     max pooling2d 1 & (None, 34, 34, 128) & 0 \\
     conv2d 4 & (None, 32, 32, 128) & 147584 \\
     conv2d 5 & (None, 30, 30, 256) & 295168 \\
     global average pooling2d & (None, 256) & 0 \\
     dense & (None, 1024) & 263168 \\
     dense 1 & (None, 1) & 1025 \\
     \toprule
     \multicolumn{3}{l}{Total params: 837,121}\\
     \multicolumn{3}{l}{Trainable params: 837,121}\\
     \multicolumn{3}{l}{Non-trainable params: 0}
   \end{tabular}
 \end{table}

%%%%%%%%%%%%%%%%%%%%%%%%%%%%%%%%%%%%%%%%%%%%%%%%%%%%%%%%%%%%%%%%%%%%%%%%

\end{document}